\documentclass{article}

 \usepackage[preprint]{neurips_2026}

\usepackage[utf8]{inputenc} 
\usepackage[T1]{fontenc}    
\usepackage{hyperref}       
\usepackage{url}            
\usepackage{booktabs}       
\usepackage{amsfonts}       
\usepackage{nicefrac}       
\usepackage{microtype}      
\usepackage{xcolor}         

\usepackage{amsmath}
\usepackage{amssymb}
\usepackage{mathtools}
\usepackage{amsthm}
\usepackage{siunitx}
\usepackage{microtype}
\usepackage{setspace}
\usepackage{xspace}
\usepackage{textcomp}
\usepackage{wrapfig}
\usepackage{booktabs} 
\usepackage{multirow}
\usepackage{threeparttable}
\usepackage{tabularx}
\usepackage{siunitx}

\usepackage{graphicx}
\usepackage{subcaption} 
\usepackage{wrapfig}
\usepackage{xcolor}
\usepackage{colortbl}

\usepackage{tikz}
\usepackage{pgfplots}
\usetikzlibrary{intersections}
\usepgfplotslibrary{fillbetween,groupplots,dateplot}
\pgfplotsset{compat=newest, scaled y ticks=false}

\usepackage{algorithm}
\usepackage{algorithmic}

\usepackage{hyperref}
\usepackage{url}
\usepackage{pifont}
\usepackage{amssymb}
\usepackage{bookmark}

\usepackage[capitalize,noabbrev]{cleveref}

\theoremstyle{plain}
\newtheorem{theorem}{Theorem}[section]
\newtheorem{proposition}[theorem]{Proposition}
\newtheorem{lemma}[theorem]{Lemma}
\newtheorem{corollary}[theorem]{Corollary}
\theoremstyle{definition}
\newtheorem{definition}[theorem]{Definition}

\theoremstyle{remark}
\newtheorem{remark}[theorem]{Remark}

\usepackage[textsize=tiny]{todonotes}

\newcommand{\method}{\textsc{GraMO}\xspace}

\newcommand{\stot}{\texttt{S2T}\xspace}
\newcommand{\ttot}{\texttt{T2T}\xspace}
\newcommand{\ttos}{\texttt{T2S}\xspace}

\definecolor{deepblue}{RGB}{0,80,150}
\definecolor{crimsonred}{RGB}{180,30,50}
\definecolor{lightgray}{RGB}{245,245,245}

\definecolor{first}{HTML}{006400}  
\definecolor{second}{HTML}{006EB8} 
\newcommand{\one}[1]{\textcolor{first}{\bf#1}}
\newcommand{\two}[1]{\textcolor{second}{\bf#1}}

\usepackage{etoc}
\usepackage{newtxtext}

\title{Graph Mamba Operator: A Latent Simulator for Interacting Particle Systems}

\author{%
  Karn Tiwari\\
  Department of Electrical Communication Engineering\\
  Indian Institute of Science, Bangalore\\
  Bengaluru, 560012, India \\
  \texttt{karntiwari@iisc.ac.in} \\
  \And
  Niladri Dutta \\
  Department of Computer Science and Automation\\
  Indian Institute of Science, Bangalore\\
  Bengaluru, 560012, India \\
  \texttt{niladridutta@iisc.ac.in} \\
  \And
  N M Anoop Krishnan\\
  Yardi School of Artificial Intelligence\\
  Indian Institute of Technology, Delhi\\
  New Delhi, 110016, India \\
  \texttt{krishnan@iitd.ac.in} \\
  \And
  Prathosh A P\\
  Department of Electrical Communication Engineering\\
  Indian Institute of Science, Bangalore\\
  Bengaluru, 560012, India \\
  \texttt{prathosh@iisc.ac.in} \\
}

\begin{document}

\maketitle

\begin{abstract}
    Modeling interacting dynamical systems requires capturing spatial interactions alongside long-range temporal dependencies. Graph neural networks (GNNs) provide a natural representation but typically rely on autoregressive rollouts and treat spatial and temporal dynamics separately, leading to error accumulation over long horizons. Existing approaches also focus on local interactions and short temporal contexts, limiting their ability to capture multi-hop dependencies and global structure. We introduce the Graph Mamba Operator (\method), a latent-space simulator that integrates state-space models with graph-based interaction learning. In contrast to prior work that sequences nodes or applies spatial and temporal updates in separate stages, \method couples graph-based interactions and temporal state updates within a single recurrence. The update is linear in the latent state, with input-dependent coefficients that adapt across regimes. We evaluate \method on N-body systems, motion capture, and robotics datasets, achieving the lowest error across benchmarks and the largest gains in long-horizon prediction.
\end{abstract}

\etocdepthtag.toc{mtmain}

\section{Introduction}

Interacting particle systems, articulated bodies, and deformable structures together describe a wide range of phenomena governing locomotion, manipulation, and control. These systems share a common mathematical foundation: collections of interacting entities whose collective behavior emerges from local interaction rules governed by differential equations. Classical numerical integrators \citep{donnelly2005symplectic,bou2013time} solve these equations at small time steps for stability, which becomes prohibitive when the number of bodies is large, or the temporal horizon is long \citep{hollingsworth2018molecular}.

Latent simulators offer a tractable alternative. By learning a compact representation of the system state and propagating it forward, they reduce the cost of explicit force computation by propagating a learned latent state while retaining access to trajectory-level predictions. For particle-based systems, the choice of latent representation is constrained by two requirements. First, the representation must reflect the graph structure of interactions, since the dynamics of each body depend on its neighbors through the interaction potential. Graph neural networks are the standard primitive for this \citep{kipf2018neural,sanchez2020learning,pfaff2020learning,bishnoi_learning_2023,bihani2024egraffbench,xu2024equivariant}. Second, the representation must persist over time, since particle trajectories carry long-range dependencies through inertia, recurring contacts, and slow collective modes. Stepwise GNN rollouts handle the first requirement but not the second: each step is conditioned only on the previous state, and prediction errors compound over the rollout horizon \citep{xu2024equivariant}.

State-space models provide a natural primitive for the temporal component. They evolve a hidden state through a linear recurrence with structured initialization \citep{gu2020hippo,gu2023mamba}, support long-range dependencies at linear cost, and admit input-dependent parameters that adapt across regimes \citep{gu2023mamba,dao2024transformers}. Recent work applies SSMs to PDE learning \citep{tiwari2025latent} and to graphs through node sequencing \citep{behrouz2024graph,wang2024graph}, but sequencing discards the topology that governs particle interactions. A latent simulator for particle systems requires the SSM recurrence and the graph operator to act jointly on a single latent state at each step, so that temporal memory and spatial interactions evolve within a single update.

We propose the \emph{Graph Mamba Operator} (\method), a latent simulator that couples graph propagation with state-space evolution in a single recurrence. At each step, the latent state is jointly updated along graph edges and the temporal axis, with input-dependent coefficients that adapt to the current system state. The joint update addresses the coupling requirement identified above, while the input-dependent coefficients accommodate regime variation across contact, slow drift, and fast transitions. Trajectories are produced through latent-state evolution rather than recursive prediction in the observation space, thereby avoiding the rollout error accumulation characteristic of autoregressive predictors in observation space. This view also connects the update to time-varying Koopman dynamics, where the latent evolution remains linear while the operator adapts to the current graph state \citep{kutz2016koopman}.
Our contributions are summarized as follows:
\begin{enumerate}
\item We present a graph-coupled state-space operator that evolves a persistent latent state under input-dependent dynamics, jointly modeling spatial interactions and temporal memory in a single latent recurrence.
\item We provide stability and long-range propagation results for the latent recurrence, together with its connection to graph autoregressive moving-average (ARMA) processes.
\item Our method achieves improved trajectory prediction and zero-shot generalization across N-body simulation, motion capture, and robotics benchmarks, outperforming graph-based and neural operator baselines.
\end{enumerate}

\section{Related Work}

\textbf{State-Space Models (SSMs) on Graphs.}
State-space models have shown strong performance in sequence modeling~\citep{gu2023mamba,dao2024transformers} and have recently been applied to PDE learning~\citep{tiwari2025latent}. Several works explore adapting SSMs to graphs~\citep{qu2024survey,wang2024graph,behrouz2024graph}, often by imposing artificial sequential orderings through node sorting or random walks. Such constructions disrupt the intrinsic graph topology and are poorly suited for physical systems with structured interactions. Spatiotemporal graph models incorporating temporal dynamics~\citep{sahili2023spatio,cini2023scalable} further extend this line of work, but do not fully exploit the long-range memory and stability properties of SSMs for spatial correlation. Unlike prior graph-state-space models that impose artificial node orderings, \method couples spatial propagation and temporal memory within a single continuous-time neural operator formulation.

\textbf{Neural Operator.}
Neural operators learn mappings between function spaces and provide data-driven alternatives to classical numerical solvers~\citep{kovachki2023neural,li2020fourier,li2023fourier,xu2024equivariant}. For dynamical systems, this corresponds to learning an operator that maps an observed trajectory history to future evolution. This view is closely related to latent simulator frameworks, where the system is first encoded into a compact latent state and then propagated forward using learned dynamics. Several lines of work instantiate this idea in different ways. Koopman-inspired methods learn latent representations in which nonlinear dynamics are approximated by linear evolution, enabling efficient prediction and control~\citep{otto2021koopman,brunton2016koopman,li2019learning}. However, many such approaches rely on fixed or weakly adaptive latent operators, limiting their ability to model strongly time-varying systems. Recent state-space operator models address temporal dependencies by maintaining persistent latent memory through structured recurrences~\citep{gu2023mamba,tiwari2025latent}, but they typically do not explicitly encode graph-structured interactions.

\textbf{Graph Neural Networks (GNNs).} Graph neural networks model particles or bodies as nodes and interactions as edges~\citep{kipf2018neural,mrowca2018flexible,sanchez2020learning,pfaff2020learning}. While effective for local physical interactions, standard message-passing models often struggle with multi-hop dependencies and long-term forecasting, especially when used in autoregressive rollouts. More general spatiotemporal graph models~\citep{li2017diffusion,jin2023spatio, xu2022geodiff, xu2023eqmotion} improve temporal modeling, but spatial and temporal evolution are often handled by separate modules. In contrast, \method is a graph-structured latent simulator that couples graph-based interaction learning with state-space evolution inside a single neural operator formulation, allowing spatial propagation and temporal memory to jointly evolve the physical dynamics.

\section{Background and Problem Setting}

\textbf{State-Space Models (SSMs).}  
A continuous-time linear state-space model describes the evolution of a hidden state in response to external inputs. Let the input be $\mathbf{x}(t) \in \mathbb{R}^{d_x}$, the hidden state be $\mathbf{h}(t) \in \mathbb{R}^{d_h}$, and the output be $\mathbf{y}(t) \in \mathbb{R}^{d_y}$. The dynamics are given by:
\begin{align}
    \dot{\mathbf{h}}(t) = \mathbf{A} \mathbf{h}(t) + \mathbf{B} \mathbf{x}(t), 
    \qquad \mathbf{y}(t) = \mathbf{C} \mathbf{h}(t),
\end{align}
where $\mathbf{A}$, $\mathbf{B}$, and $\mathbf{C}$ are learnable system matrices.

\textbf{Discretization.}  
To integrate SSMs into neural architectures, we discretize the system using a step size $\Delta$ under a zero-order hold assumption:
\begin{align}
    \mathbf{h}[k] &= \bar{\mathbf{A}}\mathbf{h}[k-1] + \bar{\mathbf{B}}\mathbf{x}[k], \qquad
    \mathbf{y}[k] = \mathbf{C}\mathbf{h}[k],
\end{align}
where,
\begin{align}
    \label{eq:ssm_discrete}
    \bar{\mathbf{A}} = e^{\Delta \mathbf{A}}, \quad 
    \bar{\mathbf{B}} = (\Delta \mathbf{A})^{-1}(e^{\Delta \mathbf{A}} - \mathbf{I}) \mathbf{B} \approx \Delta \mathbf{B}.
\end{align}
Unrolling the recurrence shows that SSMs admit a convolutional form, providing a principled mechanism for modeling long-range temporal dependencies.
\begin{align}
\mathbf{y} = \mathbf{x} \ast \mathbf{K}, \quad 
\mathbf{K} = [\mathbf{C}\mathbf{B}, \mathbf{C}\mathbf{A}\mathbf{B}, \dots, \mathbf{C}\mathbf{A}^{L-1}\mathbf{B}],
\end{align}
providing a principled mechanism for modeling long-range temporal dependencies.

\textbf{Selective State-Space Models (S6).}  
Recent advances such as Mamba~\citep{gu2023mamba} extend classical SSMs by making $\mathbf{B}$, $\mathbf{C}$, and the step size $\Delta$ input-dependent. This enables adaptive temporal dynamics, allowing the system to adjust its memory and evolution across different regimes.
Despite these advantages, standard SSMs treat inputs as unstructured sequences and do not explicitly incorporate spatial interactions defined by the underlying graph.

\textbf{Koopman Operators.} Koopman operator theory provides a linear perspective for analyzing nonlinear dynamical systems by lifting them into a space of observables. Let the system state be $\mathbf{h}(t) \in \mathbb{R}^{d_h}$ evolving under nonlinear dynamics $\dot{\mathbf{h}}(t) = f(\mathbf{h}(t))$. Instead of modeling the state directly, Koopman theory considers an observable mapping $\boldsymbol{\phi}(\mathbf{h})$ that lifts the state into a higher-dimensional space. The evolution of observables is governed by a linear operator $\mathcal{K}$:
\begin{align}
    \frac{d}{dt} \boldsymbol{\phi}(\mathbf{h}(t)) = \mathcal{K} \boldsymbol{\phi}(\mathbf{h}(t)).
    \label{eq:koopman_continuous}
\end{align}

In practice, the infinite-dimensional Koopman operator is approximated using a finite-dimensional representation. Let $\mathbf{z}(t) = \boldsymbol{\phi}(\mathbf{h}(t)) \in \mathbb{R}^{d_z}$ denote the lifted state. The dynamics can then be expressed as a linear system of dynamical systems:
\begin{align}
    \dot{\mathbf{z}}(t) \approx \mathbf{K} \mathbf{z}(t),
    \label{eq:koopman_linear}
\end{align}
where $\mathbf{K} \in \mathbb{R}^{d_z \times d_z}$ is a finite-dimensional approximation of the Koopman generator.

\textbf{Discrete-Time Formulation.}  
Under a discrete-time setting with step size $\Delta$, the discrete dynamics can be written as follows:
\begin{align}
    \mathbf{z}[k] = \bar{\mathbf{K}} \mathbf{z}[k-1], 
    \qquad \bar{\mathbf{K}} = e^{\Delta \mathbf{K}},
    \label{eq:koopman_discrete}
\end{align}
which closely resembles the linear state evolution in state-space models.
\begin{remark}
Both Koopman operators and state-space models describe system evolution through linear dynamics in a latent space. Classical Koopman methods rely on predefined or learned observables $\boldsymbol{\phi}$ and assume time-invariant dynamics, whereas modern SSMs admit input-dependent operators that adapt to time-varying systems. We leverage this connection to learn a latent representation with structured, input-dependent dynamics and then couple it with graph Laplacian propagation to incorporate spatial interactions directly into the latent evolution \citep{das2004laplacian}.
\end{remark}

\textbf{Problem Setup (Trajectory Prediction).}
Let $\mathcal{T}$ denote a collection of spatio-temporal trajectories sampled from a data distribution $p_{\mathrm{data}}$. 
Each trajectory is represented as a sequence of graph-structured states
$\{\mathcal{G}_t\}_{t=0}^{T+\Delta T}$.
We interpret the observed past trajectory segment $\{\mathcal{G}_t\}_{t=0}^{T}$ as a time-dependent function over the graph, and similarly the future segment $\{\mathcal{G}_t\}_{t=T+1}^{T+\Delta T}$ as its continuation. 
In addition to mapping trajectories, the model implicitly summarizes the past trajectory into a latent state representation that captures the system's history. 
Our goal is to learn a neural operator
$\mathcal{F}_\theta : \{\mathcal{G}_t\}_{t=0}^{T} \;\mapsto\; \{\mathcal{G}_t\}_{t=T+1}^{T+\Delta T}$
that maps the past trajectory function to the future trajectory function, approximating the ground-truth solution operator $\mathcal{F}^\dagger$ and predicting the future evolution. The learning objective minimizes trajectory discrepancy:
\begin{align}
    \label{eq:objective}
    \min_{\theta} \; 
    \mathbb{E}_{\{\mathcal{G}_t\}_{t=0}^{T} \sim p_{\mathrm{data}}}
    \Big[
    \mathcal{L}\!\left(
    \mathcal{F}_\theta\!\left(\{\mathcal{G}_t\}_{t=0}^{T}\right),
    \mathcal{F}^\dagger\!\left(\{\mathcal{G}_t\}_{t=0}^{T}\right)
    \right)
    \Big],
\end{align}
where $\mathcal{L}$ measures trajectory discrepancy using the $L_2$ norm.

\begin{figure*}[t]
    \centering
    \includegraphics[width=0.9\linewidth]{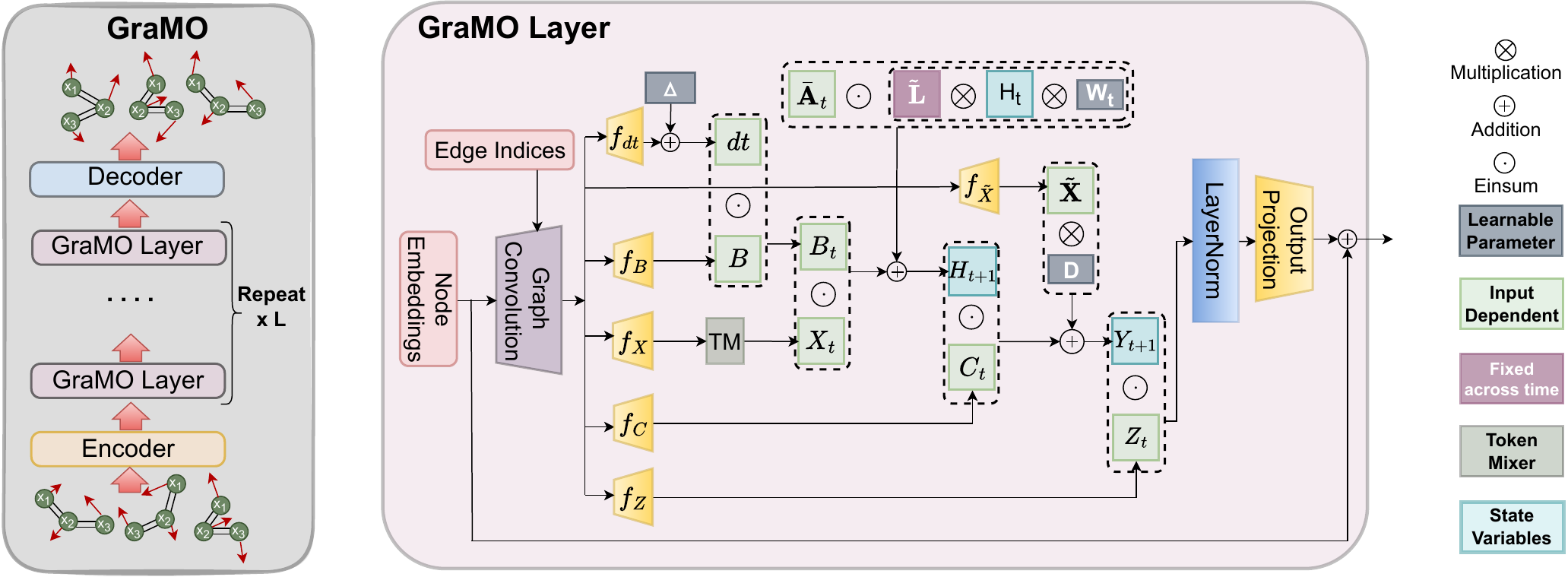}
    \vspace{-0.1cm}
    \caption{\label{fig:main_architecture}  \textbf{Overview.} The model maps past graph sequences $\{\mathcal{G}_t\}_{t=0}^{T}$ to future trajectories $\{\mathcal{G}_t\}_{t=T+1}^{T+\Delta T}$ using $L$ stacked \method blocks. (\textbf{Left}) Overall pipeline. (\textbf{Right}) A single \method layer showing the discretized update and gated skip connection (see Appendix Algorithm~\ref{alg:gramo_block}).}
    \vspace{-0.5cm}
\end{figure*}

\section{Methodology} \label{sec:method}

We introduce \method, a neural operator that maps past graph trajectories to future trajectories. \method captures temporal dynamics, graph-based interactions, memory of past states, and time-varying behavior through a unified graph-coupled state-space formulation.

\subsection{\method Formulation}
\textbf{Framework.} Consider a dynamical system on a graph $\mathcal{G} = (\mathcal{V}, \mathcal{E})$ with $V = |\mathcal{V}|$ nodes. Let $\mathbf{X}(t) \in \mathbb{R}^{V \times D}$ denote the input node features at continuous time $t$, where $D$ is the feature dimension. In our setup, $\mathbf{X}(t) = [\mathbf{x}(t) \| \mathbf{v}(t) \| \mathbf{z}(t)]$ concatenates positions $\mathbf{x}(t) \in \mathbb{R}^{V \times 3}$, velocities $\mathbf{v}(t) \in \mathbb{R}^{V \times 3}$, and node attributes $\mathbf{z}(t) \in \mathbb{R}^{V \times D_z}$, yielding $D = 6 + D_z$ and let graph connectivity is specified by an adjacency matrix
$\mathbf{A}_{\mathrm{adj}} \in \mathbb{R}^{V \times V}$, constructed from domain-specific interactions. We introduce a latent state $\mathbf{H}(t) \in \mathbb{R}^{V \times D \times N}$ that evolves according to following ODE:
\begin{equation}
    \label{eq:general_framework}
    \frac{d\mathbf{H}(t)}{dt} = \mathcal{F}\bigl(\mathbf{H}(t), \mathbf{X}(t), \mathcal{G}\bigr),
\end{equation}
where feature dimension $D$ preserves physical semantics, $N$ is the state dimension that provides a memory axis that enables long-range temporal integration, and $\mathcal{F}$ is an operator coupling spatial graph structure with temporal state dynamics. The output is obtained via a readout $\mathbf{Y}(t) = \mathcal{R}(\mathbf{H}(t))$.

This formulation provides three desirable properties: (i) \emph{continuous trajectories}---the latent state evolves smoothly rather than through discrete autoregressive steps; (ii) \emph{persistent memory}---the state $\mathbf{H}(t)$ accumulates information from all past observations via its $N$-dimensional state space; and (iii) \emph{adaptive dynamics}---the operator $\mathcal{F}$ depends on current inputs, enabling time-varying modeling.

\textbf{Motivation.} The design is motivated by three complementary ideas. From the SSM perspective, a learned latent state with structured HiPPO-initialized dynamics~\citep{gu2020hippo} provides long-range memory and stability. From the operator-theoretic perspective, lifting nonlinear dynamics to a higher-dimensional space where evolution is linear, as in Koopman analysis, offers a principled route to long-term prediction without recursive updates in the observation space. From the graph-learning perspective, message passing provides an inductive bias for modeling local interactions among particles through the underlying graph topology. \method draws on these ideas: the recurrence is linear in the latent state at each step, the graph operator propagates information across interacting nodes, and the per-step operator depends on the current input, generalizing classical operator approaches.

\textbf{Proposed Method.} We instantiate the general framework with a specific architecture that \emph{couples} spatial graph propagation with temporal state evolution. We define the dynamics as:
\begin{equation}
    \label{eq:gramo_continuous}
    \frac{d\mathbf{H}(t)}{dt} = \mathbf{A} \odot \Bigl( \tilde{\mathbf{L}} \, \mathbf{H}(t) \, \mathbf{W}_s \Bigr) + \mathbf{B}(t) \odot \mathbf{X}(t),
\end{equation}
where $\tilde{\mathbf{L}} \in \mathbb{R}^{V \times V}$ is the normalized adjacency matrix with self-loops defined as $\tilde{\mathbf{L}} = \mathbf{D}_{\!\deg}^{-1/2}(\mathbf{A}_{\mathrm{adj}} + \mathbf{I})\mathbf{D}_{\!\deg}^{-1/2}$, acting on the node dimension. The matrix $\mathbf{W}_s \in \mathbb{R}^{N \times N}$ is a learnable spatial mixing matrix acting on the state dimension, $\mathbf{A} \in \mathbb{R}^{D \times N}$ with $\mathbf{A} = -\exp(\mathbf{A}_{\log})$ governs stable temporal state decay and is broadcast across nodes dimension, and $\mathbf{B}(t) \in \mathbb{R}^{V \times D \times N}$ is an input-dependent projection where $\mathbf{X}(t)$ is broadcast along the state dimension. The output at each time instant is obtained by contracting over the state dimension as follows:
\begin{equation}
    \label{eq:gramo_output}
    \mathbf{Y}(t) = \mathbf{H}(t) \cdot \mathbf{C}(t),
\end{equation}
where $\mathbf{C}(t) \in \mathbb{R}^{V \times N}$ is the output projection and the contraction yields $\mathbf{Y}(t) \in \mathbb{R}^{V \times D}$. Crucially, $\mathbf{H}(t)$ is not reset at each timestep, allowing the model to integrate information over time without the error accumulation characteristic of autoregressive rollouts.

\textbf{Interpretation.} Equation~\eqref{eq:gramo_continuous} decomposes into two coupled components: (i) \emph{Temporal-modulated spatial diffusion} ($\mathbf{A} \odot (\tilde{\mathbf{L}} \mathbf{H} \mathbf{W}_s)$), where the latent state first propagates across the graph via $\tilde{\mathbf{L}} \mathbf{H}$, then mixes along the state dimension via $\mathbf{W}_s$, and finally undergoes temporal modulation via $\mathbf{A}$; and (ii) \emph{Input injection} ($\mathbf{B} \odot \mathbf{X}$), which projects the current input features into the latent state space through the input-dependent $\mathbf{B}$. This can be viewed as unifying state-space models with graph Laplacian operators, enabling principled message passing within a continuous-time latent dynamical system.

\subsection{Discretization and Implementation}

\textbf{ZOH Discretization.} For implementation, we discretize Equation~\eqref{eq:gramo_continuous} over interval $[t, t+\Delta t]$. Using zero-order hold with learnable timestep $\Delta_t \in \mathbb{R}^{V \times D}$, we obtain:
\begin{equation}
    \label{eq:gramo_discrete}
    \mathbf{H}_{t+1} = \bar{\mathbf{A}}_t \odot \Bigl( \tilde{\mathbf{L}} \, \mathbf{H}_t \, \mathbf{W}_s \Bigr) + \bar{\mathbf{B}}_t \odot \mathbf{X}_t,
\end{equation}
where $\bar{\mathbf{A}}_t = \exp(\Delta_t \odot \mathbf{A}) \in \mathbb{R}^{V \times D \times N}$ and $\bar{\mathbf{B}}_t = \Delta_t \odot \mathbf{B}_t \in \mathbb{R}^{V \times D \times N}$ are the discretized parameters similar to Equation \ref{eq:ssm_discrete}, with $\Delta_t$ broadcast along the state dimension. Following~\citet{gu2023mamba}, we initialize $\bar{\mathbf{A}}$ using the HiPPO framework~\citep{gu2020hippo}, which provides principled initialization for long-range dependency modeling. 
The update admits an elegant interpretation: at each step, the latent state is (i) propagated spatially via $\tilde{\mathbf{L}} \mathbf{H}_t$, (ii) mixed along the state dimension via $\mathbf{W}_s$, (iii) modulated temporally via $\bar{\mathbf{A}}_t$, and (iv) injected with input features via $\bar{\mathbf{B}}_t \odot \mathbf{X}_t$. 

\textbf{The \method Interpretation.}
The recurrence in Equation~\eqref{eq:gramo_discrete} is linear in the latent state $\mathbf{H}_t$ at each step, with operator coefficients that depend on the current input $\mathbf{X}_t$. This structure is reminiscent of finite-dimensional Koopman lifts, in which a learned observable map renders nonlinear dynamics linear in a lifted space; \method differs from the classical formulation by allowing the lifted operator to vary with the input, which is needed for non-autonomous and regime-switching systems.

The latent state $\mathbf{H}_t \in \mathbb{R}^{V \times D \times N}$ stores, for each of the $V$ particles, a $D$-dimensional feature across $N$ memory slots. The per-step evolution is linear in this lifted representation, structurally similar to the role of Koopman observables but with an input-dependent operator. The factorization couples space ($V$), physical features ($D$), and temporal memory ($N$) within a single tensor. The update decomposes into several components, each with a specific modeling role, as described below.



\textbf{Spatial step --- $\tilde{\mathbf{L}}\,\mathbf{H}_t\,\mathbf{W}_s$.} 
The normalized Laplacian $\tilde{\mathbf{L}}$ propagates information along edges of $\mathcal{G}$, while $\mathbf{W}_s$ mixes the $N$ memory slots. Together, they realize the joint operator $(\tilde{\mathbf{L}} \otimes \mathbf{W}_s)$ that couples multi-hop spatial interactions with cross-scale memory mixing.

\textbf{Temporal step --- $\bar{\mathbf{A}}_t = \exp(\Delta_t \odot \mathbf{A})$.} 
HiPPO initialization of $\mathbf{A}$ provides structured long-range memory; the input-dependent step size $\Delta_t$ stretches it during slow phases and compresses it during fast transitions (e.g., contact), giving stable yet non-stationary temporal modeling.

\textbf{Input drive --- $\bar{\mathbf{B}}_t \odot \mathbf{X}_t$.} 
The graph-aware, input-dependent projection $\mathbf{B}_t$ acts as a content-aware gate, selectively writing relevant features of the current observation into the latent state.

\textbf{Unified view.}
A key property of the update is that it remains \emph{linear in the latent state} $\mathbf{H}_t$, while the operator itself is input-dependent. The effective propagator is:
\begin{equation}
    \mathbf{K}_t \;=\; \bar{\mathbf{A}}_t \,\odot\, (\tilde{\mathbf{L}} \otimes \mathbf{W}_s),
\end{equation}
which can be interpreted as a \emph{time-varying operator} defined over the interaction graph. This extends HiPPO-style continuous memory compression, which ignores particle interactions, to a graph-structured dynamical system by propagating the latent state through the normalized graph Laplacian operator, yielding a latent simulator for interacting particle dynamics.




\textbf{Selectivity.} A key feature of \method is that the parameters $\mathbf{B}_t$, $\mathbf{C}_t$, and $\Delta_t$ are \emph{input-dependent}, generated dynamically from $\mathbf{X}_t$ via graph convolution:
\begin{equation}
    \label{eq:selectivity}
    \mathbf{B}_t, \mathbf{C}_t, \boldsymbol{\delta}_t = f_{\theta}(\mathbf{X}_t, \mathcal{G}), \quad \Delta_t = \mathrm{softplus}(\boldsymbol{\delta}_t + \mathbf{b}_\Delta),
\end{equation}
where $f_{\theta}$ is a learnable graph neural network and $\mathbf{b}_\Delta \in \mathbb{R}^{D}$ is a learnable bias. This \emph{selective mechanism}~\citep{gu2023mamba} enables content-aware gating where $\mathbf{B}_t$ controls which input information enters the state, adaptive readout where $\mathbf{C}_t$ determines which state components contribute to output, and non-stationary adaptation where $\Delta_t$ modulates the temporal dynamics based on input.

\textbf{Gated Output with Skip Connection.}
Following~\citet{gu2023mamba}, we apply a gated output mechanism with a residual skip connection. The graph module produces a gate $\mathbf{Z}_t$ and an auxiliary input $\tilde{\mathbf{X}}_t$, which modulate the state readout via element-wise gating and normalization:
\begin{equation}
    \mathbf{O}_t
    =
    \mathrm{LayerNorm}\!\left(
    \bigl(\mathbf{Y}_t + \mathbf{D}_{\!\mathrm{skip}} \odot \tilde{\mathbf{X}}_t\bigr)
    \odot \sigma(\mathbf{Z}_t)
    \right)
    \mathbf{W}_{\mathrm{out}},
\end{equation}
where $\mathbf{D}_{\!\mathrm{skip}}$ is a learnable parameter, $\mathbf{W}_{\mathrm{out}}$ is the projection matrix and $\sigma$ denotes activation function.

\textbf{Full Architecture.} The model stacks $L$ coupled graph-SSM layers. Each layer $\ell$ applies the discretized update in Equation~\eqref{eq:gramo_discrete}, with residual connections ensuring stable gradients. We use a bidirectional SSM formulation, where the graph-coupled update is applied in forward and backward temporal directions to capture richer trajectory context. The overall architecture is shown in Figure~\ref{fig:main_architecture}.

\subsection{Theoretical Analysis and Insights}

We summarize the key theoretical properties of \method below; full statements and proofs are deferred to Appendix~\ref{app_sec:theory}. The results cover four aspects: a linear-in-latent recurrence, stable long-range propagation, and a graph autoregressive moving-average (ARMA) representation.

\textbf{Linear latent recurrence.}
\method can be interpreted as learning an input-conditioned neural operator on graphs. In particular, the update is linear in the lifted latent state, while the effective transition operator depends on the current graph state. This allows \method to retain the stability benefits of latent linear evolution while adapting to time-varying dynamics through input-dependent parameters. The formal statement is provided in Appendix~\ref{app_sec:theory} (Theorem \ref{thm:input_conditioned_koopman}).

\textbf{Stability and long-range propagation.}
\method enables stable long-range information flow. Since $\rho(\tilde{\mathbf{L}})\leq 1$ (Lemma~\ref{lemma:spectrum_and_neumann_series}), unrolling the recurrence yields powers of $\tilde{\mathbf{L}}$, forming a Neumann series. Proposition~\ref{prop:walk_accumulation} shows that $\tilde{\mathbf{L}}^k$ aggregates information along all length-$k$ walks, while Proposition~\ref{prop:jacobian_multi_step} shows that the multi-step Jacobian remains bounded due to the controlled spectra of $\tilde{\mathbf{L}}$ and the HiPPO-initialized temporal operator $\bar{\mathbf{A}}$.

\textbf{Graph ARMA representation.}
The latent recurrence further induces an adaptive graph ARMA process. Building on the bounded-spectrum analysis in Proposition~\ref{prop:jacobian_multi_step}, Theorem~\ref{thm:ma_main} shows that the unrolled dynamics admit a graph-filtered moving-average representation.
\begin{theorem}[Graph ARMA Representation]\label{thm:ma_main}
Let 
$\mathbf{K}_t=\bar{\mathbf{A}}_t\odot(\tilde{\mathbf{L}}\otimes \mathbf{W}_s)$
denote the effective graph-coupled latent transition operator. As shown in Lemma \ref{lemma:kt_bound}, the stable parameter regime ensures \(\sup_t\|K_t\|<1\). Then the recursion in~\eqref{eq:gramo_discrete} admits the absolutely convergent representation
\begin{equation}
    \mathbf{H}_t
    =
    \sum_{k=0}^{\infty}
    \left(
    \prod_{r=1}^{k}
    \mathbf{K}_{t-r}
    \right)
    \bigl(\bar{\mathbf{B}}_{t-1-k}\odot\mathbf{X}_{t-1-k}\bigr).
\end{equation}
Equivalently, \method induces a graph ARMA process with input-dependent coefficients.
\end{theorem}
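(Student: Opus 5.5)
We interpret the statement through the linear-in-state structure of \eqref{eq:gramo_discrete}. Write $\mathbf{U}_s := \bar{\mathbf{B}}_s\odot\mathbf{X}_s$ for the input drive, and regard the map $\mathbf{H}\mapsto \bar{\mathbf{A}}_s\odot(\tilde{\mathbf{L}}\mathbf{H}\mathbf{W}_s)$ as a linear operator $\mathbf{K}_s$ on $\mathbb{R}^{V\times D\times N}$. This is what the notation $\bar{\mathbf{A}}_s\odot(\tilde{\mathbf{L}}\otimes\mathbf{W}_s)$ means: a Kronecker action followed by a diagonal (Hadamard) multiplication. The recursion then reads $\mathbf{H}_{s+1}=\mathbf{K}_s\mathbf{H}_s+\mathbf{U}_s$. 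We adopt the index convention of the displayed sum, in which the state at time $t$ is produced by $\mathbf{K}_{t-1}$ acting on the previous state plus the drive $\mathbf{U}_{t-1}$, so that the $k=0$ term is $\mathbf{U}_{t-1}$. We assume the trajectory extends infinitely into the past, or equivalently is zero-padded with $\mathbf{H}_{-\infty}=0$. This is the standard setting for a moving-average representation.

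\textbf{Step 1 (finite unrolling).} By induction on $m$, we show
\[
\mathbf{H}_t=\Bigl(\prod_{r=1}^{m}\mathbf{K}_{t-r}\Bigr)\mathbf{H}_{t-m}+\sum_{k=0}^{m-1}\Bigl(\prod_{r=1}^{k}\mathbf{K}_{t-r}\Bigr)\mathbf{U}_{t-1-k},
\]
with products ordered left to right, $\mathbf{K}_{t-1}\mathbf{K}_{t-2}\cdots$, and the empty product equal to the identity. The inductive step substitutes $\mathbf{H}_{t-m}=\mathbf{K}_{t-m-1}\mathbf{H}_{t-m-1}+\mathbf{U}_{t-m-1}$ and uses only linearity of each $\mathbf{K}_s$.

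\textbf{Step 2 (contraction bounds).} Let $\kappa:=\sup_t\|\mathbf{K}_t\|<1$, which is Lemma~\ref{lemma:kt_bound}, itself resting on $\rho(\tilde{\mathbf{L}})\le 1$ from Lemma~\ref{lemma:spectrum_and_neumann_series} and on $|\bar{\mathbf{A}}_t|<1$ entrywise since $\mathbf{A}<0$ and $\Delta_t>0$. Submultiplicativity gives $\|\prod_{r=1}^{k}\mathbf{K}_{t-r}\|\le\kappa^k$.

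We also need the drive to be bounded, $M:=\sup_s\|\mathbf{U}_s\|<\infty$. This holds if the inputs $\mathbf{X}_s$ are bounded, because then $\Delta_s$ (a softplus of a bounded GNN output) and $\mathbf{B}_s$ are bounded. We state this as a standing assumption alongside the stable-regime hypothesis.

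With both bounds, the $k$-th summand has norm at most $M\kappa^k$. The series is therefore absolutely convergent by comparison with a geometric series, and its norm is at most $M/(1-\kappa)$.

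\textbf{Step 3 (remainder vanishes).} The remainder term has norm at most $\kappa^m\|\mathbf{H}_{t-m}\|$. Applying Step 1's bound to the bounded drive shows $\sup_s\|\mathbf{H}_s\|\le\|\mathbf{H}_{\text{init}}\|+M/(1-\kappa)<\infty$. Hence the remainder tends to $0$ as $m\to\infty$, and letting $m\to\infty$ in Step 1 yields the claimed identity.

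\textbf{ARMA reading.} The term $\prod_{r=1}^{k}\mathbf{K}_{t-r}$ expands into $\tilde{\mathbf{L}}^k$ acting on nodes, $\mathbf{W}_s^k$ acting on the state axis, and the product of the Hadamard factors $\bar{\mathbf{A}}_{t-r}$. Because $\bar{\mathbf{A}}$ is broadcast over nodes and $\tilde{\mathbf{L}}$ acts only on nodes, these commute with the node operator. Each coefficient is thus a polynomial graph filter in $\tilde{\mathbf{L}}$ with input-dependent weights. This is a graph moving-average of infinite order, equivalently the solution of the AR(1) graph recursion, which is an adaptive graph ARMA process.

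The main obstacle is Step 2. It rests entirely on the operator-norm bound $\sup_t\|\mathbf{K}_t\|<1$ and on boundedness of the inputs. The spectral radius bound $\rho(\tilde{\mathbf{L}})\le1$ alone would not suffice for a product of non-commuting time-varying operators. Here, however, $\tilde{\mathbf{L}}$ is symmetric, so $\|\tilde{\mathbf{L}}\|_2=\rho(\tilde{\mathbf{L}})\le1$. Together with $\|\bar{\mathbf{A}}_t\|_\infty<1$ and control of $\|\mathbf{W}_s\|$, this is exactly what Lemma~\ref{lemma:kt_bound} supplies, and I would invoke it directly.
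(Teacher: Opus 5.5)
Your Steps 1 and 2 are the paper's proof of the moving-average identity. Both unroll $m$ steps, bound $\|\prod_{r=1}^{m}\mathbf{K}_{t-r}\|\le\kappa^m$, dominate the series by a geometric one using a bounded drive (Theorem~\ref{theorem:graph_arma} also adds this as a hypothesis), and let $m\to\infty$.

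The step that fails is your ARMA reading. Only the continuous $\mathbf{A}\in\mathbb{R}^{D\times N}$ is broadcast over nodes. The discretized $\bar{\mathbf{A}}_t=\exp(\Delta_t\odot\mathbf{A})\in\mathbb{R}^{V\times D\times N}$ inherits the node dependence of $\Delta_t\in\mathbb{R}^{V\times D}$, which $f_\theta$ outputs per node; the proof of Proposition~\ref{prop:permutation_equivariance} uses exactly $\bar{\mathbf{A}}_t\mapsto\mathbf{P}\bar{\mathbf{A}}_t$. So the diagonal scaling inside $\mathbf{K}_t$ does not commute with $\tilde{\mathbf{L}}$, and $\prod_{r=1}^{k}\mathbf{K}_{t-r}$ is not a polynomial in $\tilde{\mathbf{L}}$. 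To see this, set $\mathcal{M}^{(i)}_{\tau}\mathbf{G}:=\bar{\mathbf{A}}_{\tau}(i,:,:)\odot(\mathbf{G}\mathbf{W}_s)$ for $\mathbf{G}\in\mathbb{R}^{D\times N}$. The $(i,j)$ node block of the product is then $\sum_{p_1,\dots,p_{k-1}}\tilde{\mathbf{L}}_{ip_1}\cdots\tilde{\mathbf{L}}_{p_{k-1}j}\,\mathcal{M}^{(i)}_{t-1}\mathcal{M}^{(p_1)}_{t-2}\cdots\mathcal{M}^{(p_{k-1})}_{t-k}$, and its operator factor depends on the walk. Even for node-independent $\bar{\mathbf{A}}$, the feature--state factor would be the interleaved product $\mathcal{M}_{t-1}\cdots\mathcal{M}_{t-k}$. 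It is not $\mathbf{W}_s^{k}$ combined with a Hadamard product of the $\bar{\mathbf{A}}_{t-r}$, because $\bar{\mathbf{A}}$ varies along the state axis. What you can legitimately claim is the paper's weaker reading: the time-varying filters $\Phi^{(t)}_i$, $\Theta^{(t)}_j$ are \emph{defined} as these products. They are node-variant graph filters supported on $k$-hop neighbourhoods with input-dependent coefficients, not functions of $\tilde{\mathbf{L}}$.

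A smaller point concerns Step 3. You are right that the remainder needs more than $\kappa^m\to0$; the paper's proof stops there. However, your justification does not work as written. The bound $\|\mathbf{H}_{\mathrm{init}}\|+M/(1-\kappa)$ presupposes a finite start, where $\mathbf{H}_{t-m}$ does not exist for large $m$. For a bi-infinite trajectory, a bounded drive does not force a bounded state: $h_{\tau+1}=\kappa h_\tau+u_\tau$ is solved by $c\,\kappa^{\tau}+\sum_{k\ge0}\kappa^{k}u_{\tau-1-k}$ for every constant $c$, and only $c=0$ gives the claimed series. You have two clean fixes:
\begin{itemize}
\item Assume $\sup_{\tau\le t}\|\mathbf{H}_\tau\|<\infty$ outright. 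Do not assume $\mathbf{H}_\tau\to\mathbf{0}$, which the bounded solution violates under a constant drive.
\item Use $\mathbf{H}_0=\mathbf{0}$ from Algorithm~\ref{alg:gramo_block} and set $\bar{\mathbf{B}}_\tau\odot\mathbf{X}_\tau:=\mathbf{0}$ for $\tau<0$. The identity is then just Step 1 with $m=t$.
\end{itemize}
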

Since $\Delta_t$, $\mathbf{B}_t$, and $\mathbf{C}_t$ depend on the current graph state, the transition and readout operators vary over time (Proposition~\ref{prop:non_stationary_temporal_dynamics}), unlike classical graph ARMA models with fixed coefficients.

\begin{table*}[ht]
    \vspace{-0.2cm}
    \centering
    \caption{\textbf{Evaluation on N-body Simulation datasets.} AMSE ($\times 10^{-1}$) and FMSE ($\times 10^{-1}$) on N-body simulation. \one{First} and \two{Second} denote the best and second-best results.}
    \vspace{-0.2cm}
    \label{tab:nbody_main}
    \resizebox{\textwidth}{!}{%
    \begin{tabular}{lccc|ccc|ccc}
    \toprule
    \multirow{2}{*}{\textbf{Model}}
        & \multicolumn{3}{c|}{\textbf{Charged Particles}}
        & \multicolumn{3}{c|}{\textbf{Spring Dynamics}}
        & \multicolumn{3}{c}{\textbf{Gravity System}} \\
    \cmidrule(lr){2-4} \cmidrule(lr){5-7} \cmidrule(lr){8-10}
     & \textbf{AMSE} & \textbf{FMSE} & \textbf{Average}
     & \textbf{AMSE} & \textbf{FMSE} & \textbf{Average}
     & \textbf{AMSE} & \textbf{FMSE} & \textbf{Average} \\
    \midrule
    ST-GNN
    & $4.860_{\pm 0.240}$ & $10.340_{\pm 0.520}$ & $7.600_{\pm 0.380}$
    & $0.147_{\pm 0.007}$ & $0.382_{\pm 0.019}$ & $0.265_{\pm 0.013}$
    & $8.040_{\pm 0.400}$ & $16.150_{\pm 0.810}$ & $12.095_{\pm 0.605}$ \\
    ST-TFN
    & $3.360_{\pm 0.170}$ & $7.420_{\pm 0.370}$ & $5.390_{\pm 0.270}$
    & $1.000_{\pm 0.050}$ & $2.410_{\pm 0.120}$ & $1.705_{\pm 0.085}$
    & $3.320_{\pm 0.170}$ & $7.490_{\pm 0.370}$ & $5.405_{\pm 0.270}$ \\
    ST-SE(3)TR
    & $4.020_{\pm 0.200}$ & $9.240_{\pm 0.460}$ & $6.630_{\pm 0.330}$
    & $0.880_{\pm 0.040}$ & $2.000_{\pm 0.100}$ & $1.440_{\pm 0.070}$
    & $3.440_{\pm 0.170}$ & $8.190_{\pm 0.410}$ & $5.815_{\pm 0.290}$ \\
    ST-EGNN
    & $1.900_{\pm 0.100}$ & $4.180_{\pm 0.210}$ & $3.040_{\pm 0.155}$
    & $0.100_{\pm 0.005}$ & $0.226_{\pm 0.011}$ & $0.163_{\pm 0.008}$
    & $3.150_{\pm 0.160}$ & $7.000_{\pm 0.350}$ & $5.075_{\pm 0.255}$ \\
    Koopman
    & $1.780_{\pm 0.090}$ & $3.340_{\pm 0.170}$ & $2.560_{\pm 0.130}$
    & $0.376_{\pm 0.019}$ & $1.100_{\pm 0.060}$ & $0.738_{\pm 0.040}$
    & $3.290_{\pm 0.160}$ & $8.220_{\pm 0.410}$ & $5.755_{\pm 0.285}$ \\
    EqMotion
    & $1.430_{\pm 0.070}$ & $3.040_{\pm 0.150}$ & $2.235_{\pm 0.110}$
    & $0.136_{\pm 0.007}$ & $0.365_{\pm 0.018}$ & $0.251_{\pm 0.013}$
    & $3.070_{\pm 0.150}$ & $6.600_{\pm 0.330}$ & $4.835_{\pm 0.240}$ \\
    SVAE
    & $3.720_{\pm 0.190}$ & $7.400_{\pm 0.370}$ & $5.560_{\pm 0.280}$
    & $0.122_{\pm 0.006}$ & $0.213_{\pm 0.011}$ & $0.168_{\pm 0.009}$
    & $5.900_{\pm 0.300}$ & $10.850_{\pm 0.540}$ & $8.375_{\pm 0.420}$ \\
    GeoTDM
    & $1.130_{\pm 0.060}$ & \two{$2.540_{\pm 0.130}$} & \two{$1.835_{\pm 0.095}$}
    & \two{$0.031_{\pm 0.002}$} & $0.077_{\pm 0.004}$ & \two{$0.054_{\pm 0.003}$}
    & \two{$2.610_{\pm 0.130}$} & \two{$6.020_{\pm 0.300}$} & \two{$4.315_{\pm 0.215}$} \\
    GraphMamba
    & $2.900_{\pm 0.110}$ & $5.920_{\pm 0.270}$ & $4.410_{\pm 0.190}$
    & $0.500_{\pm 0.006}$ & $0.527_{\pm 0.012}$ & $0.514_{\pm 0.009}$
    & $7.150_{\pm 0.260}$ & $9.000_{\pm 0.450}$ & $8.075_{\pm 0.355}$ \\
    ESTAG
    & \two{$1.090_{\pm 0.140}$} & $2.830_{\pm 0.340}$ & $1.960_{\pm 0.240}$
    & $0.045_{\pm 0.004}$ & \two{$0.063_{\pm 0.012}$} & $0.054_{\pm 0.008}$
    & $3.150_{\pm 0.170}$ & $8.210_{\pm 0.340}$ & $5.680_{\pm 0.255}$ \\
    \midrule
    \textbf{\method}
    & \one{$0.790_{\pm 0.040}$} & \one{$2.410_{\pm 0.120}$} & \one{$1.600_{\pm 0.080}$}
    & \one{$0.011_{\pm 0.002}$} & \one{$0.024_{\pm 0.004}$} & \one{$0.018_{\pm 0.002}$}
    & \one{$1.290_{\pm 0.060}$} & \one{$3.720_{\pm 0.190}$} & \one{$2.505_{\pm 0.125}$} \\
    \bottomrule
    \end{tabular}
    }
    \vspace{-0.4cm}
\end{table*}

\section{Numerical Experiments}

In this section, we evaluate \method on a diverse set of physical dynamical systems, including N-body systems (charged, gravitational, and spring interactions), motion capture (MoCap), and robotics datasets (rope control and soft robots), to assess the effectiveness of our core design choices.

\textbf{Benchmark Details.} We evaluate \method on diverse trajectory prediction tasks across articulated dynamical systems, including CMU Motion Capture (Walk \#35, Run \#9)~\citep{cmu2003motion,huang2022equivariant,han2022equivariant}, N-body simulations (charged particles, spring systems, and gravitational systems)~\citep{brandstetter2021geometric,kipf2018neural,satorras2021n}, and robotics datasets (rope control and soft robots)~\citep{li2019learning}.

\textbf{Baselines.} We compare \method against a broad set of representative baselines spanning spatio-temporal graph networks, neural operator based including STGCN \citep{yu2017spatio}, TFN \citep{thomas2018tensor}, RF \citep{kohler2019equivariant}, Compositional Koopman Operator \citep{li2019learning}, SE(3)-Transformer \citep{fuchs2020se}, STEGNN \citep{satorras2021en}, AGL-STAN \citep{sun2022spatial}, SVAE \citep{xu2022socialvae}, EqMotion \citep{xu2023eqmotion}, ESTAG \citep{wu2023equivariant}, GraphMamba \citep{wang2024graph}, GeoTDM \citep{han2024geometric} and NS-GNN \citep{yuan2025non}. 

\textbf{Evaluation Metrics.} We evaluate performance under two settings: trajectory-to-state (\ttos) and trajectory-to-trajectory (\ttot). In the trajectory-to-state setting, performance is measured at the final time step using the Final Mean Squared Error (FMSE), defined as $\text{FMSE} = \lVert \mathbf{X}(t_T) - \mathbf{X}^{\dagger}(t_T) \rVert^2,$
where $\mathbf{X}^{\dagger}$ denotes the ground-truth state. 
In contrast, the trajectory-to-trajectory setting evaluates the full rollout using the Average Mean Squared Error (AMSE),
$
\text{AMSE} = \frac{1}{T}\sum_{t=1}^{T} \lVert \mathbf{X}(t) - \mathbf{X}^{\dagger}(t) \rVert^2,$ which is used as the evaluation metric unless stated otherwise.

\textbf{Training Details.} All models are trained using the Adam optimizer~\citep{kingma2014adam} with the StepLR scheduler. Experiments are conducted on Ubuntu 20.04.3 LTS, equipped with an Intel(R) Core(TM) i9-10900X CPU and a single NVIDIA RTX A6000 GPU with 48 GB of memory.

\subsection{N-Body Simulation}

\textbf{Experimental Setup.} We consider three scenarios in the N-body simulation datasets. (i) \emph{Charged Particles} ~\citep{kipf2018neural, satorras2021n}, where $N = 5$ particles with charges randomly chosen between $+1/-1$ move under Coulomb forces. (ii) \emph{Spring Dynamics}~\citep{kipf2018neural}, where $N = 5$ particles with random masses are connected by springs with a probability of $0.5$ between each pair, and the forces follow Hooke’s law. 
(iii) \emph{Gravity System} ~\citep{brandstetter2021geometric}, where $N = 10$ particles with random masses and initial velocities move under gravitational forces. For all three datasets, we use $3000$ trajectories for training, $600$ for validation, and $600$ for testing. For each trajectory, we use $10$ frames as input and predict the next $20$ frames.

\textbf{Results.}
Table~\ref{tab:nbody_main} reports the AMSE and FMSE of all methods across the three N-body simulation regimes, with values in $\times 10^{-1}$ units. Trajectory models, including the deterministic EqMotion and the probabilistic GeoTDM, generally outperform frame-to-frame predictors by mitigating the error accumulation inherent to iterative roll-out. Across all settings and metrics, \method attains the lowest error. On FMSE, it improves over the second-best baseline from $2.540 \rightarrow 2.410$ on \emph{Charged Particles} (GeoTDM), $0.063 \rightarrow 0.024$ on \emph{Spring Dynamics} (ESTAG), and $6.020 \rightarrow 3.720$ on \emph{Gravity System} (GeoTDM). On AMSE, the comparable gains are $1.090 \rightarrow 0.790$ on \emph{Charged}, $0.031 \rightarrow 0.011$ on \emph{Spring}, and $2.610 \rightarrow 1.290$ on \emph{Gravity}. The consistency of improvements across qualitatively distinct interaction types, including short-range electrostatic, harmonic spring, and long-range gravitational interactions, suggests that \method captures interaction dynamics in a way that generalizes beyond any single physical dynamical regime.

\begin{table*}[t]
    \vspace{-0.2cm}
    \centering
    \caption{\textbf{Evaluation on Robotics datasets.} AMSE ($\times 10^{-1}$) and FMSE ($\times 10^{-1}$) on Rope, Soft Robot, and Swim prediction. \one{First} and \two{Second} denote the best and second-best results.}
    \vspace{-0.2cm}
    \label{tab:robotics_fmse}
    \resizebox{\textwidth}{!}{%
    \begin{tabular}{lccc|ccc|ccc}
    \toprule
    \multirow{2}{*}{\textbf{Model}}
        & \multicolumn{3}{c|}{\textbf{Rope}}
        & \multicolumn{3}{c|}{\textbf{Soft}}
        & \multicolumn{3}{c}{\textbf{Swim}} \\
    \cmidrule(lr){2-4} \cmidrule(lr){5-7} \cmidrule(lr){8-10}
     & \textbf{AMSE} & \textbf{FMSE} & \textbf{Average}
     & \textbf{AMSE} & \textbf{FMSE} & \textbf{Average}
     & \textbf{AMSE} & \textbf{FMSE} & \textbf{Average} \\
    \midrule
    ST-GNN
    & $0.6710_{\pm 0.0550}$ & $1.4270_{\pm 0.1150}$ & $1.0490_{\pm 0.0637}$
    & $0.1320_{\pm 0.0110}$ & $0.1580_{\pm 0.0090}$ & $0.1450_{\pm 0.0071}$
    & $0.0400_{\pm 0.0040}$ & $0.0420_{\pm 0.0030}$ & $0.0410_{\pm 0.0025}$ \\
    ST-TFN
    & $0.8530_{\pm 0.0720}$ & $1.8090_{\pm 0.1670}$ & $1.3310_{\pm 0.0909}$
    & $0.1590_{\pm 0.0100}$ & $0.2090_{\pm 0.0110}$ & $0.1840_{\pm 0.0074}$
    & $0.0500_{\pm 0.0030}$ & $0.0550_{\pm 0.0030}$ & $0.0525_{\pm 0.0021}$ \\
    ST-SE(3)TR
    & $0.8570_{\pm 0.0800}$ & $1.8410_{\pm 0.1210}$ & $1.3490_{\pm 0.0725}$
    & $0.1640_{\pm 0.0110}$ & $0.2100_{\pm 0.0200}$ & $0.1870_{\pm 0.0114}$
    & $0.0500_{\pm 0.0030}$ & $0.0570_{\pm 0.0040}$ & $0.0535_{\pm 0.0025}$ \\
    ST-EGNN
    & $0.3110_{\pm 0.0200}$ & $0.6680_{\pm 0.0530}$ & $0.4895_{\pm 0.0283}$
    & $0.0620_{\pm 0.0040}$ & $0.0760_{\pm 0.0050}$ & $0.0690_{\pm 0.0032}$
    & $0.0180_{\pm 0.0010}$ & $0.0200_{\pm 0.0010}$ & $0.0190_{\pm 0.0007}$ \\
    Koopman
    & $1.2860_{\pm 0.0940}$ & $2.2330_{\pm 0.2630}$ & $1.7595_{\pm 0.1396}$
    & $0.3510_{\pm 0.0250}$ & $0.7100_{\pm 0.0380}$ & $0.5305_{\pm 0.0227}$
    & $0.1180_{\pm 0.0120}$ & $0.2500_{\pm 0.0190}$ & $0.1840_{\pm 0.0112}$ \\
    EqMotion
    & $0.3220_{\pm 0.0300}$ & $0.6770_{\pm 0.0340}$ & $0.4995_{\pm 0.0227}$
    & $0.0620_{\pm 0.0050}$ & $0.0790_{\pm 0.0060}$ & $0.0705_{\pm 0.0039}$
    & $0.0190_{\pm 0.0020}$ & $0.0210_{\pm 0.0010}$ & $0.0200_{\pm 0.0011}$ \\
    SVAE
    & $0.5370_{\pm 0.0390}$ & $0.9120_{\pm 0.0890}$ & $0.7245_{\pm 0.0486}$
    & $0.0990_{\pm 0.0060}$ & $0.1020_{\pm 0.0080}$ & $0.1005_{\pm 0.0050}$
    & $0.0310_{\pm 0.0030}$ & $0.0280_{\pm 0.0030}$ & $0.0295_{\pm 0.0021}$ \\
    GeoTDM
    & $0.2510_{\pm 0.0210}$ & \two{$0.5380_{\pm 0.0430}$} & $0.3945_{\pm 0.0239}$
    & \two{$0.0470_{\pm 0.0040}$} & \two{$0.0600_{\pm 0.0050}$} & \two{$0.0535_{\pm 0.0032}$}
    & \two{$0.0140_{\pm 0.0010}$} & \two{$0.0180_{\pm 0.0010}$} & \two{$0.0160_{\pm 0.0007}$} \\
    GraphMamba
    & $0.4120_{\pm 0.0300}$ & $0.9670_{\pm 0.0630}$ & $0.6895_{\pm 0.0349}$
    & $0.0920_{\pm 0.0050}$ & $0.1860_{\pm 0.0040}$ & $0.1390_{\pm 0.0032}$
    & $0.0280_{\pm 0.0020}$ & $0.0400_{\pm 0.0020}$ & $0.0340_{\pm 0.0014}$ \\
    ESTAG
    & \two{$0.2120_{\pm 0.0400}$} & $0.5630_{\pm 0.0430}$ & \two{$0.3875_{\pm 0.0294}$}
    & $0.0530_{\pm 0.0060}$ & $0.0820_{\pm 0.0030}$ & $0.0675_{\pm 0.0034}$
    & $0.0180_{\pm 0.0030}$ & $0.0540_{\pm 0.0020}$ & $0.0360_{\pm 0.0018}$ \\
    \midrule
    \textbf{\method}
    & \one{$0.1860_{\pm 0.0100}$} & \one{$0.4120_{\pm 0.0400}$} & \one{$0.2990_{\pm 0.0206}$}
    & \one{$0.0360_{\pm 0.0030}$} & \one{$0.0470_{\pm 0.0030}$} & \one{$0.0415_{\pm 0.0021}$}
    & \one{$0.0110_{\pm 0.0010}$} & \one{$0.0130_{\pm 0.0010}$} & \one{$0.0120_{\pm 0.0007}$} \\
    \bottomrule
    \end{tabular}
    }
    \vspace{-0.6cm}
\end{table*}

\subsection{Robotics Dataset}

\textbf{Experimental Setup.} We evaluate \method on robotics environments involving rope manipulation and soft-body control \citep{li2019learning}. We consider three settings: 
(i) \emph{Rope}, where the top mass of a rope is actuated horizontally while the remaining masses evolve under internal forces and gravity; 
(ii) \emph{Soft}, where a soft robot composed of deformable blocks with embedded actuators is controlled, with one block anchored to the ground; and 
(iii) \emph{Swim}, where the same soft robot operates in a fluid environment without anchoring. In the Rope setting, each mass is treated as a node with 2D position and velocity (4D per node). In the Soft and Swim settings, each quadrilateral element is treated as a node with positions and velocities of its four corners (16D per element). We use a 70:15:15 train/validation/test split, conditioning on 10 frames and predicting the next 20.

\textbf{Results.}
Table~\ref{tab:robotics_fmse} reports the AMSE and FMSE of all methods across the three robotics environments. Across all settings and metrics, \method attains the lowest error. On FMSE, it improves over the second-best baseline from $0.5380 \rightarrow 0.4120$ on \emph{Rope}, $0.0600 \rightarrow 0.0470$ on \emph{Soft}, and $0.0180 \rightarrow 0.0130$ on \emph{Swim} (all GeoTDM); the AMSE column shows comparable gains, with ESTAG narrowly displacing GeoTDM as the second-best on \emph{Rope}. Trajectory-level models such as GeoTDM, ESTAG, and EqMotion outperform frame-to-frame predictors (ST-GNN, ST-TFN, ST-SE(3)TR), reflecting the benefit of mitigating roll-out error accumulation in long-horizon control. The consistency of improvements across qualitatively distinct control regimes, including actuated rope manipulation, anchored soft-body deformation, and free-swimming locomotion in fluid, indicates that \method captures deformable dynamics in a way that generalizes beyond any single robotic setting.

\subsection{Motion Capture Dataset}

\textbf{Experimental Setup.} We evaluate human motion dynamics using the \textit{walk} and \textit{run} subsets of the CMU Motion Capture Database \citep{cmu2003motion}. Trajectories are filtered for duration, and the task is formulated to predict a target frame based on an observation window of $10$ historical frames ($\Delta t=1$). We assess forecasting capability across varying horizons by introducing a \textit{time gap} between the input and target, specifically at timestamps of $\{5, 10, 15, 20\}$. Consistent with prior literature \citep{yuan2025non}, we apply data normalization to the \textit{run} sequences.

\begin{wraptable}{r}{0.55\textwidth}
    \vspace{-0.1cm}
    \centering
    \caption{\textbf{Ablation Study.} N-Body Charged (AMSE \(\times 10^{-1}\)) and MoCap-Run (FMSE \(\times 10^{-1}\)) datasets.}
    \vspace{-0.2cm}
    \label{main:ablation}
    \small
    \setlength{\tabcolsep}{6pt}
    \begin{tabular}{lcc}
    \toprule
    \textbf{Component} & \textbf{Charged} & \textbf{MoCap Run}\\
    \midrule
    \method w/o Temporal & $1.74_{\pm 0.05}$ & $5.23_{\pm 0.03}$ \\
    \method w/o BiSSM & $1.81_{\pm 0.03}$ & $4.57_{\pm 0.04}$ \\
    \method w/o Selectivity & $1.23_{\pm 0.01}$ & $5.92_{\pm 0.04}$ \\
    \method w/o HiPPO & \two{$0.93_{\pm 0.02}$} & \two{$4.23_{\pm 0.05}$} \\
    \midrule 
    \textbf{\method} & \one{$0.79_{\pm 0.04}$} & \one{$3.87_{\pm 0.02}$} \\
    \bottomrule
    \end{tabular}
    \vspace{-0.3cm}
\end{wraptable}
\textbf{Results.}
Table~\ref{tab:mocap_fmse} reports the FMSE of all methods on the MoCap \emph{Walk} and \emph{Run} sequences. \method achieves the lowest error across all settings, reducing the average FMSE from $0.347 \rightarrow 0.117$ on \emph{Walk} and from $0.508 \rightarrow 0.161$ on \emph{Run} compared to the second-best baseline NS-EGNN. This corresponds to an average relative error reduction of $67.3\%$. The improvement is especially pronounced at longer prediction horizons (15ts, 20ts), indicating that the graph-coupled latent dynamics improve stability and accuracy for long-term motion forecasting.

\subsection{Ablations}

We conduct ablation studies to assess the contribution of each component in \method. Table~\ref{main:ablation} reports results on N-Body Charged and MoCap-Run. Removing temporal modeling leads to the largest degradation, increasing the error from $0.79$ to $1.74$ on Charged and from $3.87$ to $5.23$ on MoCap-Run, showing that temporal state evolution is essential for long-horizon prediction. Removing the bidirectional SSM also hurts performance, particularly on MoCap-Run, where the error increases from $3.87$ to $4.57$, highlighting the benefit of bidirectional temporal context for motion forecasting.

The selectivity mechanism is also important: without input-dependent parameters, the error rises to $1.23$ on Charged and $5.92$ on MoCap-Run, indicating that adaptive dynamics are useful for modeling changing system states. Finally, replacing the HiPPO initialization degrades performance to $0.93$ and $4.23$, confirming the benefit of structured long-range memory. Overall, the full \method achieves the best performance across both datasets, demonstrating that temporal modeling, selectivity, HiPPO memory, and bidirectional processing all contribute to the final model performance. We further evaluate zero-shot generalization to unseen system sizes. To assess whether the learned inductive bias transfers across particle counts, we train on the robotics rope dataset with $N \in \{5,7,8\}$, where $N$ denotes the number of particles/masses in the rope, and test on the held-out setting $N=9$. \method achieves an AMSE of $0.217$, compared to $0.525$ for the Compositional Koopman operator, indicating that the graph-conditioned SSM better extrapolates to unseen system sizes.

\begin{wrapfigure}{l}{0.60\textwidth}
    \centering
    \setlength{\tabcolsep}{0.2pt}
    \begin{tabular}{cc}
    \begin{tikzpicture}
    \begin{axis}[
        width=4.9cm,
        height=5cm,
        xlabel={\textbf{FMSE ($\times 10^{-1}$)}},
        ylabel={\textbf{Training Time (ms/epoch)}},
        xmin=0, xmax=2.2,
        ymin=0, ymax=350,
        grid=both,
        minor grid style={gray!25},
        major grid style={gray!50},
        xtick distance=0.5,
        ytick distance=50,
        ticklabel style={font=\tiny},
        label style={font=\tiny},
        axis lines=left,
        legend style={draw=none, font=\tiny},
        clip=false
    ]
    
    \fill[opacity=0.75, blue!60!black] (axis cs:0.347, 20) circle (0.15cm);
    \node[font=\tiny, anchor=north west, blue!80!black] at (axis cs:0.4, 35) {\textbf{NSGNN}};
    \fill[opacity=0.75, teal!50!black] (axis cs:0.471, 40.3) circle (0.15cm);
    \node[font=\tiny, anchor=west, teal!70!black] at (axis cs:0.55, 50.3) {\textbf{ESTAG}};
    \fill[opacity=0.75, orange!70!black] (axis cs:0.600, 230) circle (0.15cm);
    \node[font=\tiny, anchor=south, orange!90!black] at (axis cs:0.600, 235) {\textbf{SE(3)-Tr}};
    \fill[opacity=0.75, cyan!50!black] (axis cs:0.954, 159) circle (0.15cm);
    \node[font=\tiny, anchor=south, cyan!70!black] at (axis cs:1.254, 104) {\textbf{EqMotion}};
    \fill[opacity=0.75, gray!70!black] (axis cs:1.926, 299.4) circle (0.15cm);
    \node[font=\tiny, anchor=north east, gray!90!black] at (axis cs:1.926, 295) {\textbf{AGL-STAN}};
    \fill[opacity=0.75, purple!70!black] (axis cs:1.159, 210.4) circle (0.15cm);
    \node[font=\tiny, anchor=north east, purple!90!black] at (axis cs:1.926, 215) {\textbf{STGCN}};
    \fill[opacity=0.9, red!80!black] (axis cs:0.117, 90) circle (0.15cm);
    \node[font=\small, anchor=south, red!90!black] at (axis cs:0.5, 75) {\textbf{Ours}};
    \end{axis}
    \end{tikzpicture}
    &
    \includegraphics[width=4.2cm,height=4.5cm]{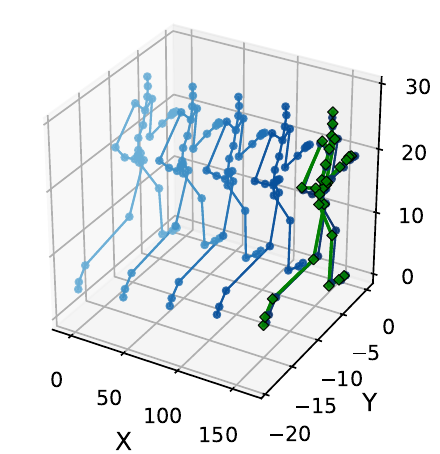}
    \end{tabular}
    \vspace{-0.2cm}
    \caption{\textbf{Efficiency and Visualization.} (\textbf{Left}) Training time (ms) versus FMSE ($\times 10^{-1}$) for baselines on MoCap (Walk) dataset. (\textbf{Right}) MoCap (Run) trajectory visualization, where predicted trajectories are shown with a \two{Blue} color gradient over time, and the ground-truth final state is shown in \one{Green}.}
    \label{fig:main_efficiency}
    \vspace{-0.5cm}
\end{wrapfigure}

\textbf{Visual Demonstrations.}
Figure~\ref{fig:main_efficiency} illustrates qualitative trajectory predictions on the MoCap dataset. The predicted sequences align well with the ground-truth motion, especially in terms of overall body configuration and torso movement. These visualizations suggest that \method effectively models coordinated joint behavior and maintains physically reasonable motion patterns over prediction horizons.

\begin{table*}[t]
    \vspace{-0.2cm}
    \centering
    \caption{\textbf{Evaluation on the MoCap dataset.} FMSE for Walk ($\times 10^{-1}$) and Run ($\times 10^{0}$) cases under different time intervals. \one{First} and \two{Second} denote the best and second-best results. ``-'' indicates the model failed to converge during training.}
    \vspace{-0.2cm}
    \label{tab:mocap_fmse}
    \resizebox{\textwidth}{!}{%
    \begin{tabular}{lccccc|ccccc}
    \toprule
    \multirow{2}{*}{\textbf{Model}} & \multicolumn{5}{c}{\textbf{MoCap Walk}} & \multicolumn{5}{c}{\textbf{MoCap Run}} \\
    \cmidrule(lr){2-6} \cmidrule(lr){7-11}
     & \textbf{5ts} & \textbf{10ts} & \textbf{15ts} & \textbf{20ts} & \textbf{Average} & \textbf{5ts} & \textbf{10ts} & \textbf{15ts} & \textbf{20ts} & \textbf{Average} \\
    \midrule
    ST-GNN & $1.121_{\pm 0.159}$ & $1.224_{\pm 0.100}$ & $2.615_{\pm 0.478}$ & $3.359_{\pm 0.601}$ & $2.080_{\pm 0.335}$ & $0.560_{\pm 0.107}$ & $1.160_{\pm 0.166}$ & $1.538_{\pm 0.234}$ & $1.779_{\pm 0.278}$ & $1.259_{\pm 0.196}$ \\
    ST-TFN & $0.238_{\pm 0.032}$ & $0.721_{\pm 0.038}$ & $1.320_{\pm 0.067}$ & $2.092_{\pm 0.094}$ & $1.093_{\pm 0.058}$ & $0.396_{\pm 0.073}$ & $0.796_{\pm 0.054}$ & $1.708_{\pm 0.318}$ & $2.086_{\pm 0.133}$ & $1.246_{\pm 0.144}$ \\
    ST-SE(3)TR & $0.146_{\pm 0.017}$ & $0.376_{\pm 0.097}$ & $0.760_{\pm 0.161}$ & $1.119_{\pm 0.347}$ & $0.600_{\pm 0.155}$ & $0.280_{\pm 0.045}$ & $0.700_{\pm 0.131}$ & $1.165_{\pm 0.267}$ & $1.732_{\pm 0.550}$ & $0.969_{\pm 0.248}$ \\
    ST-EGNN & $0.188_{\pm 0.026}$ & $0.591_{\pm 0.103}$ & $1.140_{\pm 0.123}$ & $2.097_{\pm 0.205}$ & $1.004_{\pm 0.114}$ & $0.444_{\pm 0.072}$ & $1.082_{\pm 0.113}$ & $2.375_{\pm 0.202}$ & $3.784_{\pm 0.429}$ & $1.921_{\pm 0.204}$ \\
    EqMotion & - & - & - & - & - & $12.074_{\pm 3.074}$ & $15.299_{\pm 4.127}$ & $21.074_{\pm 2.073}$ & $18.604_{\pm 3.503}$ & $16.763_{\pm 3.194}$ \\
    STGCN & $0.302_{\pm 0.115}$ & $0.828_{\pm 0.203}$ & $1.516_{\pm 0.384}$ & $1.988_{\pm 0.206}$ & $1.159_{\pm 0.227}$ & $0.131_{\pm 0.024}$ & $0.582_{\pm 0.121}$ & $1.101_{\pm 0.089}$ & $1.508_{\pm 0.176}$ & $0.831_{\pm 0.103}$ \\
    AGL-STAN & $1.729_{\pm 0.516}$ & $1.789_{\pm 0.673}$ & $2.030_{\pm 0.704}$ & $2.155_{\pm 0.763}$ & $1.926_{\pm 0.664}$ & $0.511_{\pm 0.137}$ & $0.628_{\pm 0.191}$ & $0.648_{\pm 0.271}$ & $0.831_{\pm 0.283}$ & $0.654_{\pm 0.221}$ \\
    ESTAG & $0.054_{\pm 0.004}$ & $0.213_{\pm 0.012}$ & $0.530_{\pm 0.038}$ & $1.085_{\pm 0.070}$ & $0.471_{\pm 0.031}$ & $0.041_{\pm 0.002}$ & $0.250_{\pm 0.019}$ & $0.771_{\pm 0.050}$ & $1.767_{\pm 0.251}$ & $0.707_{\pm 0.081}$ \\
    NS-EGNN & \two{$0.051_{\pm 0.002}$} & \two{$0.166_{\pm 0.006}$} & \two{$0.397_{\pm 0.037}$} & \two{$0.775_{\pm 0.085}$} & \two{$0.347_{\pm 0.033}$} & \two{$0.033_{\pm 0.002}$} & \two{$0.187_{\pm 0.009}$} & \two{$0.584_{\pm 0.076}$} & \two{$1.226_{\pm 0.215}$} & \two{$0.508_{\pm 0.076}$} \\
    \midrule
    \textbf{\method} & \one{$0.039_{\pm 0.003}$} & \one{$0.075_{\pm 0.006}$} & \one{$0.143_{\pm 0.012}$} & \one{$0.213_{\pm 0.005}$} & \one{$0.117_{\pm 0.007}$} & \one{$0.023_{\pm 0.002}$} & \one{$0.067_{\pm 0.009}$} & \one{$0.164_{\pm 0.016}$} & \one{$0.387_{\pm 0.022}$} & \one{$0.161_{\pm 0.012}$} \\
    \bottomrule
    \end{tabular}
    }
    \vspace{-0.6cm}
\end{table*}

\textbf{Efficiency.}
Figure~\ref{fig:main_efficiency} compares the computational efficiency of \method on the MoCap dataset against several baseline methods. \method demonstrates favorable training efficiency relative to EqMotion, SE(3)-Transformer, STGCN, and ACL-STAN, while achieving lower FMSE on MoCap. Simpler equivariant models, such as NSGNN and ESTAG, train faster but exhibit lower predictive performance, suggesting a trade-off between computational efficiency and the ability to model long-range interaction dynamics. Furthermore, Table ~\ref{tab:nbody_main} - \ref{tab:robotics_fmse} shows that the proposed method outperforms artificial node-ordering approaches such as GraphMamba \citep{wang2024graph}, highlighting the importance of Laplacian-based graph propagation for joint spatial-temporal modeling. It suggests that preserving the interaction topology is more effective than serializing nodes into an arbitrary sequence when modeling interacting particle dynamics \citep{mirzaev2013laplacian}.

\section{Conclusion and Limitations}

We presented \method, a neural operator that integrates state-space modeling with graph-based representations to capture long-range interactions in articulated dynamical systems. By jointly modeling temporal evolution and graph-structured dependencies, \method provides a principled framework for learning complex spatiotemporal dynamics. Experiments on N-body systems (charged, gravitational, and spring), motion capture, and robotics datasets (rope control and soft robots) demonstrate consistent improvements over baselines. 
A limitation of the \method is its scaling with graph size due to repeated graph–state interactions. Exploring graph sparsification and subgraph sampling presents a promising direction to improve scalability while preserving performance.

\bibliographystyle{plainnat}
\small{\bibliography{references}}


\etocdepthtag.toc{mtmain}

\appendix
\etocdepthtag.toc{mtappendix}

\etocsettagdepth{mtmain}{none}
\etocsettagdepth{mtappendix}{subsection}

\etocsettocstyle{\section*{Appendix Contents}\vspace{0.5em}}{}

\newpage
\tableofcontents
\newpage

\section{Notation and Conventions}
\begin{table*}[!htb]
    \centering
    \caption{\label{tab:notations} \textbf{Summary of Notations.} 
    Notation grouped by general setup, state-space models, and \method-specific components.}
    \renewcommand{\arraystretch}{1.15}
    \setlength{\tabcolsep}{6pt}
    \begin{small}
    \begin{tabular}{ll}
        \toprule
        \textbf{Notation} & \textbf{Description} \\ 
        \midrule 
        \multicolumn{2}{l}{\textit{General Setup}} \\
        \midrule
        $\mathcal{G} = (\mathcal{V}, \mathcal{E})$ & Interaction graph \\
        $V = |\mathcal{V}|$ & Number of nodes \\
        $\{\mathcal{G}_t\}_{t=0}^{T+\Delta T}$ & Trajectory of graph states \\
        $T, \Delta T$ & Past length, prediction horizon \\
        $\mathbf{x}_t \in \mathbb{R}^{V \times 3}$ & Particle positions \\
        $\mathbf{v}_t \in \mathbb{R}^{V \times 3}$ & Particle velocities \\
        $\mathbf{z}_t \in \mathbb{R}^{V \times D_z}$ & Node attributes \\
        $\mathbf{X}_t \in \mathbb{R}^{V \times D}$ & Concatenated input, $D = 6 + D_z$ \\
        $\mathbf{A}_{\mathrm{adj}}$ & Adjacency matrix \\
        $\tilde{\mathbf{L}}$ & Normalized Laplacian with self-loops \\
        $\mathcal{F}^{\dagger}, \mathcal{F}_{\theta}$ & True and learned solution operators \\
        $p_{\mathrm{data}}, \mathcal{L}$ & Data distribution, $L_2$ trajectory loss \\
        $\rho(\cdot), \|\cdot\|$ & Spectral radius, $L_2$ norm \\
        \midrule
        \multicolumn{2}{l}{\textit{State-Space Models (SSMs)}} \\
        \midrule
        $\mathbf{x}(t), \mathbf{h}(t), \mathbf{y}(t)$ & Continuous SSM input/state/output \\
        $\mathbf{x}[k], \mathbf{h}[k], \mathbf{y}[k]$ & Discrete SSM input/state/output \\
        $\mathbf{A}_{\mathrm{ssm}}, \mathbf{B}_{\mathrm{ssm}}, \mathbf{C}_{\mathrm{ssm}}$ & Continuous SSM matrices \\
        $\bar{\mathbf{A}}, \bar{\mathbf{B}}$ & Discretized SSM matrices (ZOH) \\
        $\Delta \in \mathbb{R}_+$ & Discretization step size \\
        $\boldsymbol{\phi}(\cdot)$ & Koopman observable map \\
        $\mathcal{K}, \mathbf{K}, \bar{\mathbf{K}}$ & Koopman operator, finite approx., discrete \\
        \midrule
        \multicolumn{2}{l}{\textit{\method-Specific Components}} \\
        \midrule
        $D, N$ & Feature dim., SSM state dim. \\
        $\mathbf{H}_t \in \mathbb{R}^{V \times D \times N}$ & Latent state (Koopman observables) \\
        $\mathbf{Y}_t, \mathbf{O}_t$ & Per-step readout, gated output \\
        $\mathbf{A} \in \mathbb{R}^{D \times N}$ & HiPPO temporal decay matrix \\
        $\mathbf{W}_s \in \mathbb{R}^{N \times N}$ & Spatial mixing matrix \\
        $\mathbf{B}_t, \mathbf{C}_t$ & Input-dependent SSM projections \\
        $\boldsymbol{\delta}_t, \mathbf{b}_\Delta, \Delta_t$ & Step-size logit, bias, and softplus output \\
        $\bar{\mathbf{A}}_t, \bar{\mathbf{B}}_t$ & Discretized propagator and input gain \\
        $\mathbf{K}_t$ & Effective graph-coupled propagator \\
        $\mathbf{Z}_t, \tilde{\mathbf{X}}_t$ & Output gate, auxiliary input \\
        $\mathbf{D}_{\mathrm{skip}}, \mathbf{W}_{\mathrm{out}}$ & Skip parameter, output projection \\
        $f_\theta$ & GNN generating $\mathbf{B}_t, \mathbf{C}_t, \boldsymbol{\delta}_t$ \\
        $\sigma$ & Activation function \\
        $\odot, \otimes$ & Hadamard, Kronecker product \\
        $L$ & Number of \method blocks \\
        \bottomrule
    \end{tabular}
    \end{small}
\end{table*}

\section{Additional Related Work}
\subsection{Graph Neural Networks}
Graph neural networks (GNNs) provide a natural framework for modeling articulated rigid body systems by representing bodies as nodes and interactions as edges~\citep{kipf2017semi,hamilton2017inductive,velivckovic2018graph}. Message-passing architectures have been successfully applied to learning physical interactions and dynamics~\citep{kipf2018neural,mrowca2018flexible}. However, their reliance on localized propagation limits the ability to capture long-range dependencies and multi-hop interactions, often leading to over-smoothing and vanishing gradients. Graph Transformers partially mitigate this issue via global attention, but incur quadratic complexity. Continuous-time formulations such as Graph Neural ODEs~\citep{fang2021spatial,luo2023hope} avoid discretization artifacts but require expensive numerical solvers and can be unstable. More general spatiotemporal graph models~\citep{zhang2018gaan,li2017diffusion,jin2023spatio} jointly model space and time, yet typically rely on autoregressive rollouts, leading to error accumulation and reduced robustness. Overall, most existing GNN-based approaches implicitly assume stationarity and short-term temporal dependence, limiting their effectiveness for long-horizon articulated dynamics.

\subsection{Neural Operators}
Neural operators learn mappings between function spaces and serve as data-driven alternatives to classical numerical solvers~\citep{kovachki2023neural}. Fourier Neural Operators (FNOs~\citep{li2020fourier,li2023fourier}) have demonstrated strong performance on PDE-driven systems by leveraging frequency-domain representations~\citep{bonev2023spherical}. Recent work integrates state-space models (SSMs) into operator architectures to better capture long-range temporal dependencies~\citep{gu2023mamba,tiwari2025latent}. However, Fourier-based formulations often assume stationarity and global spectral structure, limiting their ability to model time-varying and non-stationary dynamics. While extensions introduce temporal correlations within operator frameworks~\citep{xu2024equivariant}, spatial and temporal dynamics are still largely treated independently, restricting the ability to model tightly coupled spatiotemporal evolution in articulated systems.

\section{Preliminaries and Formal Proofs \label{app_sec:theory}}
\subsection{Preliminaries}

\textbf{Notation.} The graph encodes both static and dynamic particle attributes. Node features $\mathbf{H} \in \mathbb{R}^{V \times d}$ represent time-invariant properties such as atom types or charges. Time-varying geometric states are represented by
$\mathbf{Z}_t \in \mathbb{R}^{V \times m \times 3}$, typically instantiated as
$\mathbf{Z}_t = [\mathbf{x}_t, \mathbf{v}_t]$, where
$\mathbf{x}_t \in \mathbb{R}^{V \times 3}$ denotes particle positions and
$\mathbf{v}_t \in \mathbb{R}^{V \times 3}$ denotes velocities, yielding $m = 2$. Graph connectivity is specified by an adjacency matrix
$\mathbf{A}_{\mathrm{adj}} \in \mathbb{R}^{V \times V}$, constructed from domain-specific interactions
(e.g., chemical bonds) or distance-based neighborhoods (e.g., Euclidean distance).


\textbf{Newtonian–SSM Connection.} Consider Newtonian dynamics for a particle system governed by $\dot{\mathbf{x}} = \mathbf{v}$ and $m\dot{\mathbf{v}} = -\mathbf{F}(\mathbf{x}, \mathbf{v},t) - \gamma \mathbf{v}$, where $\mathbf{F}(\mathbf{x}, \mathbf{v})$ represents forces, $m$ represent the masses, and $\gamma$ is a damping coefficient. By defining the state vector $\mathbf{h} = [\mathbf{x}; \mathbf{v}]$, the second-order system can be written in first-order state-space form as:
\begin{align}
    \dot{\mathbf{h}} = \mathbf{A}\mathbf{h} + \mathbf{B}\mathbf{u}(\mathbf{h}),
\end{align}
where $\mathbf{A} = \begin{bmatrix} \mathbf{0} & \mathbf{I} \\ \mathbf{0} & -\gamma \mathbf{I} \end{bmatrix}$, $\mathbf{B} = \begin{bmatrix} \mathbf{0} \\ -\mathbf{I} \end{bmatrix}$, and $\mathbf{u}(\mathbf{h}) = \mathbf{F}(\mathbf{x}, \mathbf{v})$. 

When the forces are linear in the state variables, this yields a linear time-invariant (LTI) system; otherwise, it yields a nonlinear state-space model. The explicit form is:
\begin{align}
    \dot{\mathbf{Z}} &= 
    \begin{bmatrix} \dot{\mathbf{x}} \\ m\dot{\mathbf{v}} \end{bmatrix} 
    = \begin{bmatrix} \mathbf{v} \\ -\mathbf{F}(\mathbf{x}, \mathbf{v}) - \gamma \mathbf{v} \end{bmatrix} \\
    &= 
    \begin{bmatrix} \mathbf{0} & \mathbf{I} \\ \mathbf{0} & -\gamma \mathbf{I} \end{bmatrix}
    \begin{bmatrix} \mathbf{x} \\ \mathbf{v} \end{bmatrix}
    + \begin{bmatrix} \mathbf{0} \\ -\mathbf{I} \end{bmatrix} \mathbf{F}(\mathbf{x}, \mathbf{v})
\end{align}
This formulation provides a natural connection between particle dynamics and SSM representations, motivating SSM-based approaches for trajectory prediction. Notably, this equation is analogous to Hamiltonian dynamics, with ${\mathbf{x}}$ and ${m\mathbf{v}}$ corresponding to the canonical coordinates $\mathbf{q}$ and $\mathbf{p}$ representing particle positions and momenta, respectively.

\textbf{Non Stationary Signals.} Non-stationary time series are characterized by evolving statistical properties and shifting joint distributions, posing significant challenges for deep learning models \cite{yan2025graph}. Formally, this property is defined as follows:

\begin{definition}[Non-stationarity]
The physical dynamics $\{\vec{X}_t\}$ are considered non-stationary if there exist distinct time intervals $t_1$ and $t_2$ such that at least one of the following conditions is satisfied for any lag $k$:
\begin{enumerate}
    \item $\mathbb{E}(\vec{X}_{t_1}) \neq \mathbb{E}(\vec{X}_{t_2})$ \quad (Time-varying mean)
    \item $\text{Var}(\vec{X}_{t_1}) \neq \text{Var}(\vec{X}_{t_2})$ \quad (Time-varying variance)
    \item $\text{Cov}(\vec{X}_{t_1}, \vec{X}_{t_1+k}) \neq \text{Cov}(\vec{X}_{t_2}, \vec{X}_{t_2+k})$ \quad (Time-varying covariance)
\end{enumerate}
In essence, a physical object exhibits non-stationarity if its mean, variance, or autocovariance functions evolve over time.
\end{definition}

\textbf{Training Objective.} In practice, we approximate the continuous objective through empirical risk minimization (ERM) over $P$ uniformly sampled time points $\{\tau_p\}_{p=1}^P \subset [0, \Delta T]$:
\begin{align}
    \label{eq:erm_objective}
    \min_{\theta} \; \mathbb{E} \left[\frac{1}{P} \sum_{p=1}^P \left\| \mathcal{F}_\theta(\mathcal{G}^{(t)})(\tau_p) - \mathcal{F}^\dagger(\mathcal{G}^{(t)})(\tau_p) \right\|_2 \right].
\end{align}
We focus on modeling geometric evolution: the time-varying descriptors $\mathbf{Z}_t$ capture the system dynamics, while the structural node features $\mathbf{H}$ remain time-invariant. \footnote{For simplicity, we omitted $\mathcal{G}^{(t)} \sim p_{\mathrm{data}}$ in ERM.}

\subsection{Mathematical Proofs}
In this section, we formally establish key theoretical properties of \method. We first establish the Koopman connection of \method in Theorem~\ref{thm:input_conditioned_koopman}, which shows that \method is linear in the lifted latent state, while its transition operator is conditioned on the current graph state. Hence, \method can be viewed as a time-varying Koopman approximation whose operator respects the graph topology via $\tilde{\mathbf{L}}$ and adapts to changing dynamics via input-dependent parameters.

\begin{theorem}[Input-Conditioned Koopman Operator on Graphs]
\label{thm:input_conditioned_koopman}
Let the discretized \method update be
\begin{equation}
    \mathbf{H}_{t+1}
    =
    \bar{\mathbf{A}}_t \odot
    \bigl(\tilde{\mathbf{L}}\mathbf{H}_t\mathbf{W}_s\bigr)
    +
    \bar{\mathbf{B}}_t \odot \mathbf{X}_t,
    \label{eq:koopman_graph_update}
\end{equation}
where $\bar{\mathbf{A}}_t$, $\bar{\mathbf{B}}_t$, and $\mathbf{C}_t$ are input-dependent functions of the current graph state $\mathbf{X}_t$. Define the effective latent transition operator
\begin{equation}
    \mathbf{K}_t(\mathbf{X}_t,\mathcal{G})
    =
    \bar{\mathbf{A}}_t
    \odot
    (\tilde{\mathbf{L}}\otimes \mathbf{W}_s).
    \label{eq:input_conditioned_koopman}
\end{equation}
Then \method induces an input-conditioned Koopman operator on the graph, since its latent evolution can be written as
\begin{equation}
    \operatorname{vec}(\mathbf{H}_{t+1})
    =
    \mathbf{K}_t(\mathbf{X}_t,\mathcal{G})
    \operatorname{vec}(\mathbf{H}_t)
    +
    \operatorname{vec}(\bar{\mathbf{B}}_t \odot \mathbf{X}_t).
    \label{eq:koopman_linear_latent}
\end{equation}
Thus, the dynamics are linear in the lifted latent state $\mathbf{H}_t$, while the Koopman operator itself varies with the current graph state and respects the graph topology through $\tilde{\mathbf{L}}$.
\end{theorem}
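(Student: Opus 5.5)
The plan is to prove the identity by direct vectorization. The only genuine content is making the shorthand $\bar{\mathbf{A}}_t\odot(\tilde{\mathbf{L}}\otimes\mathbf{W}_s)$ precise. First I fix a vectorization convention for the tensor $\mathbf{H}_t\in\mathbb{R}^{V\times D\times N}$: let $\operatorname{vec}$ stack entries in an order indexed by triples $(v,d,n)$, so that $\operatorname{vec}(\mathbf{H}_t)\in\mathbb{R}^{VDN}$. I then write the spatial step entrywise:
\begin{equation*}
\bigl(\tilde{\mathbf{L}}\mathbf{H}_t\mathbf{W}_s\bigr)_{v,d,n}=\sum_{u=1}^{V}\sum_{m=1}^{N}\tilde{\mathbf{L}}_{vu}\,(\mathbf{W}_s)_{mn}\,(\mathbf{H}_t)_{u,d,m}.
\end{equation*}
Here $\tilde{\mathbf{L}}$ acts on the node index, $\mathbf{W}_s$ acts on the state index, and the feature index $d$ is untouched. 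In vectorized form this is multiplication by the linear map $\mathbf{M}=\tilde{\mathbf{L}}\otimes\mathbf{I}_D\otimes\mathbf{W}_s^{\top}$, with the Kronecker factors ordered to match the chosen vec convention. The paper's $\tilde{\mathbf{L}}\otimes\mathbf{W}_s$ is this operator with the trivial identity on $D$ suppressed.

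Next I handle the Hadamard modulation. Multiplying entrywise by $\bar{\mathbf{A}}_t\in\mathbb{R}^{V\times D\times N}$ is, after vectorization, left multiplication by the diagonal matrix $\operatorname{diag}(\operatorname{vec}(\bar{\mathbf{A}}_t))$. I therefore define rigorously
\begin{equation*}
\mathbf{K}_t(\mathbf{X}_t,\mathcal{G}):=\operatorname{diag}\bigl(\operatorname{vec}(\bar{\mathbf{A}}_t)\bigr)\bigl(\tilde{\mathbf{L}}\otimes\mathbf{I}_D\otimes\mathbf{W}_s^{\top}\bigr).
\end{equation*}
This row-scales the Kronecker operator by $\bar{\mathbf{A}}_t$ and is the precise meaning of the notation in the statement. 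Applying $\operatorname{vec}$ to both sides of the update and using linearity of $\operatorname{vec}$ then yields the claimed affine recurrence. The input term $\bar{\mathbf{B}}_t\odot\mathbf{X}_t$, with $\mathbf{X}_t$ broadcast along $N$, simply vectorizes as an additive term.

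For the concluding claims I make three observations. $\mathbf{K}_t$ depends on $\mathbf{H}_t$ not at all and on $\mathbf{X}_t$ only through $\Delta_t$; this follows from $\bar{\mathbf{A}}_t=\exp(\Delta_t\odot\mathbf{A})$ together with \eqref{eq:selectivity}. Hence, for a fixed input, the map $\mathbf{H}_t\mapsto\mathbf{H}_{t+1}$ is affine, and its homogeneous part is linear. Topology is respected because $(\mathbf{K}_t)_{(v,d,n),(u,d',m)}$ vanishes unless $d=d'$ and $\tilde{\mathbf{L}}_{vu}\neq0$, that is, unless $u=v$ or $(u,v)\in\mathcal{E}$.

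The main obstacle is notational rather than mathematical. The literal expression "$\bar{\mathbf{A}}_t\odot(\tilde{\mathbf{L}}\otimes\mathbf{W}_s)$" is dimensionally mismatched: $\bar{\mathbf{A}}_t$ has $VDN$ entries while the Kronecker product is a $VN\times VN$ matrix. So the proof must state explicitly that the Hadamard product means diagonal row-scaling and that the identity on $D$ is implicit. I would also remark that "Koopman" is used here in the sense of a linear-in-latent transition operator for the lifted state. No claim is made that the latent space is invariant under true Koopman dynamics.
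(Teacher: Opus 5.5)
Your proposal is correct and follows the same route as the paper's proof: vectorize $\mathbf{H}_t$, identify the spatial step as a Kronecker-structured linear map, absorb the Hadamard modulation by $\bar{\mathbf{A}}_t$ into the operator, and substitute into the update. Your version is more careful than the paper's, which writes $\operatorname{vec}(\tilde{\mathbf{L}}\mathbf{H}_t\mathbf{W}_s)=(\tilde{\mathbf{L}}\otimes\mathbf{W}_s)\operatorname{vec}(\mathbf{H}_t)$ without the identity on the feature axis or the transpose on $\mathbf{W}_s$ and leaves the meaning of $\bar{\mathbf{A}}_t\odot(\cdot)$ implicit, and your $\operatorname{diag}(\operatorname{vec}(\bar{\mathbf{A}}_t))(\tilde{\mathbf{L}}\otimes\mathbf{I}_D\otimes\mathbf{W}_s^{\top})$ is the correct precise reading of the paper's $\mathbf{K}_t$.
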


\begin{proof}
Vectorizing the latent state $\mathbf{H}_t$ over node, feature, and memory dimensions, the graph-coupled term in~\eqref{eq:koopman_graph_update} can be written as
\begin{equation}
    \operatorname{vec}
    \bigl(
    \tilde{\mathbf{L}}\mathbf{H}_t\mathbf{W}_s
    \bigr)
    =
    (\tilde{\mathbf{L}}\otimes \mathbf{W}_s)
    \operatorname{vec}(\mathbf{H}_t).
\end{equation}
Applying the element-wise temporal modulation $\bar{\mathbf{A}}_t$ gives the effective transition
\begin{equation}
    \mathbf{K}_t(\mathbf{X}_t,\mathcal{G})
    =
    \bar{\mathbf{A}}_t
    \odot
    (\tilde{\mathbf{L}}\otimes \mathbf{W}_s).
\end{equation}
Substituting this into~\eqref{eq:koopman_graph_update} yields
\begin{equation}
    \operatorname{vec}(\mathbf{H}_{t+1})
    =
    \mathbf{K}_t(\mathbf{X}_t,\mathcal{G})
    \operatorname{vec}(\mathbf{H}_t)
    +
    \operatorname{vec}(\bar{\mathbf{B}}_t \odot \mathbf{X}_t).
\end{equation}
Since $\bar{\mathbf{A}}_t$, $\bar{\mathbf{B}}_t$, and $\mathbf{C}_t$ are generated from $\mathbf{X}_t$ through the selective mechanism, the transition operator depends on the current graph state. Therefore, \method defines a linear latent evolution with an input-conditioned graph-structured operator, which is precisely a finite-dimensional input-conditioned Koopman operator on graphs.
\end{proof}

We assume $L > 1$, corresponding to locally expansive or chaotic dynamics, and uniform per-step approximation error.
Next, we show in the following lemma \ref{lemma:spectrum_and_neumann_series} that the normalized adjacency $\hat{\mathbf{A}}$ always has spectral radius $1$, which prevents the plain Neumann series from converging. Introducing a damping factor $\alpha \in (0,1)$ ensures convergence and yields the standard resolvent form.

\begin{lemma}[\textbf{Spectrum and Neumann series}] \label{lemma:spectrum_and_neumann_series}
Let $\mathcal{G}$ be an undirected graph and
\begin{align}
    \hat{\mathbf{L}} \;=\; \mathbf{D}^{-1/2}(\mathbf{A}_{\mathrm{adj}} + \mathbf{I})\mathbf{D}^{-1/2},\qquad 
    \mathbf{D}=\operatorname{Diag}( \mathbf{A}_{\mathrm{adj}}\mathbf{1}+\mathbf{1}).
\end{align}
Then $\sigma(\hat{\mathbf{L}})\subseteq[-1,1]$ and $\|\hat{\mathbf{L}}\|_2=\rho(\hat{\mathbf{L}})=1$, with eigenvalue $1$ having eigenvector $\mathbf{D}^{1/2}\mathbf{1}$. 
Hence $\sum_{t=0}^{\infty}\hat{\mathbf{L}}^t$ diverges, while for any $\alpha\in(0,1)$,
\begin{equation}
    \sum_{t=0}^{\infty}(\alpha \hat{\mathbf{L}})^t=(\mathbf{I}-\alpha \hat{\mathbf{L}})^{-1}
\quad\text{(converges in operator norm).}
\end{equation}
\end{lemma}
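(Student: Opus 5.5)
The plan is to realize $\hat{\mathbf{L}}$ as a matrix similar to a random-walk (row-stochastic) matrix, read off its spectrum from that similarity, and then handle the two series by elementary spectral arguments. Since $\mathcal{G}$ is undirected, $\mathbf{A}_{\mathrm{adj}}$ is symmetric with nonnegative entries, so $\hat{\mathbf{L}}$ is real symmetric. Its spectrum is therefore real, and $\|\hat{\mathbf{L}}\|_2=\rho(\hat{\mathbf{L}})$ holds automatically. It remains to show that the spectrum lies in $[-1,1]$ and that $1$ is an eigenvalue.

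\textbf{Spectrum.} Write $\tilde{\mathbf{A}}=\mathbf{A}_{\mathrm{adj}}+\mathbf{I}$, so that $\mathbf{D}=\operatorname{Diag}(\tilde{\mathbf{A}}\mathbf{1})$ has strictly positive diagonal because of the self-loops. Define $\mathbf{P}=\mathbf{D}^{-1}\tilde{\mathbf{A}}$. This matrix is row-stochastic with nonnegative entries, and $\hat{\mathbf{L}}=\mathbf{D}^{1/2}\mathbf{P}\mathbf{D}^{-1/2}$, so $\hat{\mathbf{L}}$ and $\mathbf{P}$ share their spectrum. For a nonnegative row-stochastic matrix, $\|\mathbf{P}\|_\infty=1$, which gives $\rho(\mathbf{P})\le 1$ and hence $\sigma(\hat{\mathbf{L}})\subseteq[-1,1]$. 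Alternatively, one can check the quadratic form directly:
\[
\mathbf{x}^\top(\mathbf{I}\mp\hat{\mathbf{L}})\mathbf{x}=\tfrac12\sum_{i,j}\tilde{A}_{ij}\bigl(y_i\mp y_j\bigr)^2\ge 0, \qquad \mathbf{y}=\mathbf{D}^{-1/2}\mathbf{x}.
\]
For the eigenvalue $1$, note that $\mathbf{P}\mathbf{1}=\mathbf{1}$. Therefore
\[
\hat{\mathbf{L}}\,\mathbf{D}^{1/2}\mathbf{1}=\mathbf{D}^{1/2}\mathbf{P}\mathbf{1}=\mathbf{D}^{1/2}\mathbf{1},
\]
so $\mathbf{D}^{1/2}\mathbf{1}$ is an eigenvector with eigenvalue $1$, and $\rho(\hat{\mathbf{L}})=\|\hat{\mathbf{L}}\|_2=1$.

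\textbf{Series.} For divergence, set $\mathbf{u}=\mathbf{D}^{1/2}\mathbf{1}\neq 0$. Then the partial sums satisfy $\sum_{t=0}^{T}\hat{\mathbf{L}}^t\mathbf{u}=(T+1)\mathbf{u}$, which is unbounded in $T$, so $\sum_t\hat{\mathbf{L}}^t$ cannot converge. For convergence, take $\alpha\in(0,1)$. Submultiplicativity gives $\|(\alpha\hat{\mathbf{L}})^t\|_2\le\alpha^t$, so the series converges absolutely in operator norm. Multiplying the partial sum by $\mathbf{I}-\alpha\hat{\mathbf{L}}$ telescopes to $\mathbf{I}-(\alpha\hat{\mathbf{L}})^{T+1}\to\mathbf{I}$. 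The limit is therefore the inverse of $\mathbf{I}-\alpha\hat{\mathbf{L}}$, and this inverse exists because $1/\alpha>1$ lies outside the spectrum.

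\textbf{Main obstacle.} I expect the main obstacle to be the interpretation of $\mathbf{D}$ rather than any computation. The similarity argument requires $\mathbf{D}$ to be the degree matrix of $\mathbf{A}_{\mathrm{adj}}+\mathbf{I}$, as stated here, so that $\mathbf{P}$ is exactly stochastic. If $\mathbf{D}$ were instead the degree matrix of $\mathbf{A}_{\mathrm{adj}}$ alone, the eigenvalue-$1$ claim would fail. A smaller point is that $\mathbf{A}_{\mathrm{adj}}$ must be nonnegative; I will make this explicit as an assumption.
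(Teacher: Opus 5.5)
Your proof is correct. Apart from the containment $\sigma(\hat{\mathbf{L}})\subseteq[-1,1]$, it follows the paper's argument step for step:
\begin{itemize}
\item symmetry gives $\|\hat{\mathbf{L}}\|_2=\rho(\hat{\mathbf{L}})$;
\item the identity $\hat{\mathbf{L}}\mathbf{D}^{1/2}\mathbf{1}=\mathbf{D}^{1/2}\mathbf{1}$ gives the eigenvalue $1$;
\item divergence of $\sum_t\hat{\mathbf{L}}^t$ is witnessed on that eigenvector;
\item the damped series converges because $\|\alpha\hat{\mathbf{L}}\|_2=\alpha<1$.
\end{itemize}

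For the containment, the paper works only with the normalized Laplacian $\mathbf{I}-\hat{\mathbf{L}}$. It writes the quadratic form as $\tfrac12\sum_{i,j}(\mathbf{A}_{\mathrm{adj}}+\mathbf{I})_{ij}(y_i-y_j)^2$ and bounds it above using $(a-b)^2\le 2(a^2+b^2)$. That is exactly your ``alternatively'' line: the paper's upper bound $\mathbf{I}-\hat{\mathbf{L}}\preceq 2\mathbf{I}$ is your statement $\mathbf{I}+\hat{\mathbf{L}}\succeq\mathbf{0}$.

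Your primary argument goes through the similarity $\hat{\mathbf{L}}=\mathbf{D}^{1/2}\mathbf{P}\mathbf{D}^{-1/2}$ with the lazy random-walk matrix $\mathbf{P}=\mathbf{D}^{-1}(\mathbf{A}_{\mathrm{adj}}+\mathbf{I})$. This is more economical. The bound $\rho\le\|\mathbf{P}\|_\infty=1$ and the Perron identity $\mathbf{P}\mathbf{1}=\mathbf{1}$ come from the same object, and symmetry is needed only to upgrade $\rho$ to $\|\cdot\|_2$.

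The quadratic-form route stays inside symmetric matrices and gives the semidefinite bounds $\mathbf{0}\preceq\mathbf{I}-\hat{\mathbf{L}}\preceq 2\mathbf{I}$ directly. In your two-sided form it gives slightly more. The self-loop terms $\tilde{A}_{ii}\ge 1$ contribute at least $2\sum_i y_i^2$ to $\mathbf{x}^\top(\mathbf{I}+\hat{\mathbf{L}})\mathbf{x}$, so $-1$ is never an eigenvalue and $\sigma(\hat{\mathbf{L}})\subseteq(-1,1]$.

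You were right to state nonnegativity of $\mathbf{A}_{\mathrm{adj}}$ explicitly. Both routes need it: yours for $\|\mathbf{P}\|_\infty=1$, the paper's for the sign of the quadratic form. The paper leaves it implicit.
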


\begin{proof}
$\hat{\mathbf{L}}$ is symmetric, so its spectrum is real and $\|\hat{\mathbf{L}}\|_2=\rho(\hat{\mathbf{L}})$. 

Define the normalized Laplacian as
\begin{align}
    \mathcal{L} := \mathbf{I} - \hat{\mathbf{L}} 
    = \mathbf{D}^{-1/2}\bigl(\mathbf{D}-(\mathbf{A}_{\mathrm{adj}}+\mathbf{I})\bigr)\mathbf{D}^{-1/2},
\end{align}
which is symmetric positive semi-definite.  
The quadratic form identity gives
\begin{align}
    z^\top \mathcal{L} z 
    = \tfrac{1}{2}\!\sum_{i,j}(\mathbf{A}_{\mathrm{adj}}+\mathbf{I})_{ij}\bigl(y_i-y_j\bigr)^2,
    \quad y=\mathbf{D}^{-1/2}z,
\end{align}
hence $0 \leq \mu \leq 2$ for every eigenvalue $\mu$ of $\mathcal{L}$ (using $(a-b)^2 \leq 2(a^2+b^2)$).  
Thus every eigenvalue $\lambda$ of $\hat{\mathbf{A}} = \mathbf{I}-\mathcal{L}$ satisfies $\lambda \in [-1,1]$.
Moreover,
\begin{align}
    \hat{\mathbf{L}}\,\mathbf{D}^{1/2}\mathbf{1}
    = \mathbf{D}^{-1/2}(\mathbf{A}_{\mathrm{adj}}+\mathbf{I})\mathbf{1}
    = \mathbf{D}^{-1/2}\mathbf{D}\mathbf{1}
    = \mathbf{D}^{1/2}\mathbf{1},
\end{align}
so $1 \in \sigma(\hat{\mathbf{L}})$ with eigenvector $\mathbf{D}^{1/2}\mathbf{1}$.  
This implies $\rho(\hat{\mathbf{L}})=1$ and $\|\hat{\mathbf{L}}\|_2=1$.  
Hence $\hat{\mathbf{L}}^t(\mathbf{D}^{1/2}\mathbf{1}) = \mathbf{D}^{1/2}\mathbf{1}$ for all $t$, so 
$\sum_{t=0}^{T}\hat{\mathbf{L}}^t$ diverges (unbounded on that vector).

For $\alpha \in (0,1)$, $\|\alpha \hat{\mathbf{L}}\|_2 = \alpha < 1$, so the Neumann series converges and sums to
\begin{equation}
    \sum_{t=0}^{\infty} (\alpha \hat{\mathbf{L}})^t = (\mathbf{I} - \alpha \hat{\mathbf{L}})^{-1}.
\end{equation}
\end{proof}

Then we show the following proposition, which establishes that the proposed state-space update rule is 
permutation equivariant, meaning that relabeling the nodes of the input graph 
with any permutation matrix $\mathbf{P}$ leads to correspondingly permuted outputs. 
The proof relies on the fact that the graph operator $\hat{\mathbf{A}}$ conjugates naturally under 
permutations, while the learnable parameters $\mathbf{W}, \mathbf{B}, \mathbf{C}$ act only on 
feature dimensions and thus commute with node permutations. 
This ensures that the model’s predictions are independent of node ordering, 
a fundamental property for GNNs.

\begin{proposition}[\textbf{Permutation Equivariance}]
\label{prop:permutation_equivariance}
Let $\mathcal{G}=(\mathcal{V},\mathcal{E})$ be a graph with $|\mathcal{V}|=V$ nodes. Consider the \method update
\begin{equation}
    \mathbf{H}_{t+1}
    = \bar{\mathbf{A}}_t \odot \bigl( \tilde{\mathbf{L}} \, \mathbf{H}_t \, \mathbf{W}_s \bigr)
    + \bar{\mathbf{B}}_t \odot \mathbf{X}_t,
    \qquad
    \mathbf{Y}_t = \mathbf{H}_t \cdot \mathbf{C}_t,
\end{equation}
where $\mathbf{H}_t \in \mathbb{R}^{V \times D \times N}$, $\mathbf{X}_t \in \mathbb{R}^{V \times D}$,
$\tilde{\mathbf{L}} \in \mathbb{R}^{V \times V}$ is the normalized adjacency matrix,
and $\mathbf{W}_s \in \mathbb{R}^{N \times N}$ acts on the state dimension.
The discretized parameters are
$\bar{\mathbf{A}}_t = \exp(\Delta_t \odot \mathbf{A})$ and
$\bar{\mathbf{B}}_t = \Delta_t \odot \mathbf{B}_t$,
where $\mathbf{A} \in \mathbb{R}^{D \times N}$ is shared across nodes, and
$\Delta_t$, $\mathbf{B}_t$, and $\mathbf{C}_t$ are generated by an equivariant graph convolution
$f_\theta(\mathbf{X}_t, \mathcal{G})$.

Then, for any permutation matrix $\mathbf{P} \in \{0,1\}^{V \times V}$, if the inputs transform as
\begin{align}
    \tilde{\mathbf{L}} \mapsto \mathbf{P}\tilde{\mathbf{L}}\mathbf{P}^\top,
    \qquad
    \mathbf{X}_t \mapsto \mathbf{P}\mathbf{X}_t,
    \qquad
    \mathbf{H}_0 \mapsto \mathbf{P}\mathbf{H}_0,
\end{align}
The resulting states and outputs satisfy, for all $t \ge 0$,
\begin{align}
    \mathbf{H}_t \mapsto \mathbf{P}\mathbf{H}_t,
    \qquad
    \mathbf{Y}_t \mapsto \mathbf{P}\mathbf{Y}_t.
\end{align}
\end{proposition}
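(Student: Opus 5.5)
The proof is an induction on $t$, with the per-step claim checked factor by factor in the update. Write $\mathbf{P}\mathbf{H}$ for the action of $\mathbf{P}$ on the node (first) mode of a tensor in $\mathbb{R}^{V\times D\times N}$. Concretely, $(\mathbf{P}\mathbf{H})_{i,d,n}=\sum_j \mathbf{P}_{ij}\mathbf{H}_{j,d,n}$, or equivalently $(\mathbf{P}\mathbf{H})_{\pi(i),d,n}=\mathbf{H}_{i,d,n}$ for the underlying permutation $\pi$. The base case $\mathbf{H}_0\mapsto\mathbf{P}\mathbf{H}_0$ is part of the hypothesis. For the inductive step we assume $\mathbf{H}_t\mapsto\mathbf{P}\mathbf{H}_t$ and show that each of the two summands in the update, and the readout, picks up exactly one factor of $\mathbf{P}$ on the node mode.

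\textbf{Step 1 (selective parameters).} We use the assumed equivariance of $f_\theta$: under $(\mathbf{X}_t,\tilde{\mathbf{L}})\mapsto(\mathbf{P}\mathbf{X}_t,\mathbf{P}\tilde{\mathbf{L}}\mathbf{P}^\top)$, the outputs $\boldsymbol{\delta}_t,\mathbf{B}_t,\mathbf{C}_t$ are permuted along their node mode. The map $\Delta_t=\mathrm{softplus}(\boldsymbol{\delta}_t+\mathbf{b}_\Delta)$ acts entrywise, and the bias $\mathbf{b}_\Delta\in\mathbb{R}^D$ does not depend on the node index, so $\Delta_t\mapsto\mathbf{P}\Delta_t$. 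The matrix $\mathbf{A}\in\mathbb{R}^{D\times N}$ is broadcast identically across nodes. Since $\exp$ also acts entrywise, $\bar{\mathbf{A}}_t=\exp(\Delta_t\odot\mathbf{A})\mapsto\mathbf{P}\bar{\mathbf{A}}_t$ and $\bar{\mathbf{B}}_t=\Delta_t\odot\mathbf{B}_t\mapsto\mathbf{P}\bar{\mathbf{B}}_t$.

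\textbf{Step 2 (graph-coupled term).} We compute $(\mathbf{P}\tilde{\mathbf{L}}\mathbf{P}^\top)(\mathbf{P}\mathbf{H}_t)\mathbf{W}_s=\mathbf{P}(\tilde{\mathbf{L}}\mathbf{H}_t\mathbf{W}_s)$. This holds because $\mathbf{P}^\top\mathbf{P}=\mathbf{I}$, and because $\mathbf{W}_s$ acts only on the state mode and so commutes with any operator on the node mode. In the vectorized form of Theorem~\ref{thm:input_conditioned_koopman}, this is the identity $(\mathbf{P}\otimes\mathbf{I})(\tilde{\mathbf{L}}\otimes\mathbf{W}_s)(\mathbf{P}^\top\otimes\mathbf{I})=(\mathbf{P}\tilde{\mathbf{L}}\mathbf{P}^\top)\otimes\mathbf{W}_s$, obtained from the mixed-product property.

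\textbf{Step 3 (Hadamard products, with broadcasting).} The key elementary fact is that simultaneous reindexing commutes with entrywise products: $(\mathbf{P}\mathbf{U})\odot(\mathbf{P}\mathbf{V})=\mathbf{P}(\mathbf{U}\odot\mathbf{V})$. This holds whenever both factors share the node mode and any broadcasting happens only along the $D$ or $N$ modes. Applying it to both summands gives $\mathbf{H}_{t+1}\mapsto\mathbf{P}\bar{\mathbf{A}}_t\odot\mathbf{P}(\tilde{\mathbf{L}}\mathbf{H}_t\mathbf{W}_s)+\mathbf{P}\bar{\mathbf{B}}_t\odot\mathbf{P}\mathbf{X}_t=\mathbf{P}\mathbf{H}_{t+1}$, which closes the induction. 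The readout $\mathbf{Y}_t=\mathbf{H}_t\cdot\mathbf{C}_t$ contracts over $N$ node by node, so it satisfies $(\mathbf{P}\mathbf{H}_t)\cdot(\mathbf{P}\mathbf{C}_t)=\mathbf{P}\mathbf{Y}_t$ by the same entrywise-reindexing argument.

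The main obstacle is bookkeeping rather than any genuine difficulty. We must make sure no parameter carries a fixed node index that would break equivariance: $\mathbf{A}$, $\mathbf{W}_s$ and $\mathbf{b}_\Delta$ are node-independent, while $\Delta_t$, $\mathbf{B}_t$, $\mathbf{C}_t$ are node-indexed only through the equivariant $f_\theta$. We must also make precise the meaning of $\bar{\mathbf{A}}_t\odot(\tilde{\mathbf{L}}\otimes\mathbf{W}_s)$ as an elementwise modulation of the output tensor rather than of the Kronecker matrix. I would state that interpretation explicitly before Step 3, since under a literal matrix-Hadamard reading the shapes do not match.
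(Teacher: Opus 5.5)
Your proposal is correct and follows the paper's proof essentially step for step. Both argue by induction on $t$. Equivariance of $f_\theta$ gives $\Delta_t,\mathbf{B}_t,\mathbf{C}_t\mapsto\mathbf{P}(\cdot)$, and because $\mathbf{A}$ is node-independent this gives $\bar{\mathbf{A}}_t,\bar{\mathbf{B}}_t\mapsto\mathbf{P}(\cdot)$. The spatial term uses the identity $(\mathbf{P}\tilde{\mathbf{L}}\mathbf{P}^\top)(\mathbf{P}\mathbf{H}_t)\mathbf{W}_s=\mathbf{P}(\tilde{\mathbf{L}}\mathbf{H}_t\mathbf{W}_s)$, and the Hadamard products and the readout are compatible with node reindexing. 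Your extra bookkeeping (the node-independent bias $\mathbf{b}_\Delta$ inside the softplus, and broadcasting only along the $D$ and $N$ modes) makes explicit what the paper leaves implicit, but it does not change the argument.
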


\begin{proof}
We proceed by induction on $t$.
The base case holds by assumption: $\mathbf{H}_0 \mapsto \mathbf{P}\mathbf{H}_0$.

Assume $\mathbf{H}_t \mapsto \mathbf{P}\mathbf{H}_t$.
Since $f_\theta$ is permutation equivariant, its outputs transform as
\begin{align}
    \Delta_t \mapsto \mathbf{P}\Delta_t,
    \qquad
    \mathbf{B}_t \mapsto \mathbf{P}\mathbf{B}_t,
    \qquad
    \mathbf{C}_t \mapsto \mathbf{P}\mathbf{C}_t.
\end{align}
Because $\mathbf{A}$ is node-independent and broadcast across the node dimension,
\begin{align}
    \bar{\mathbf{A}}_t
    = \exp(\Delta_t \odot \mathbf{A})
    \mapsto
    \exp((\mathbf{P}\Delta_t) \odot \mathbf{A})
    = \mathbf{P}\bar{\mathbf{A}}_t,
\end{align}
and similarly $\bar{\mathbf{B}}_t \mapsto \mathbf{P}\bar{\mathbf{B}}_t$.

For the spatial propagation term, we have
\begin{align}
    (\mathbf{P}\tilde{\mathbf{L}}\mathbf{P}^\top)(\mathbf{P}\mathbf{H}_t)\mathbf{W}_s
    = \mathbf{P}\tilde{\mathbf{L}}(\mathbf{P}^\top\mathbf{P})\mathbf{H}_t\mathbf{W}_s
    = \mathbf{P}(\tilde{\mathbf{L}}\mathbf{H}_t\mathbf{W}_s),
\end{align}
where $\mathbf{P}^\top\mathbf{P}=\mathbf{I}$ and $\mathbf{W}_s$ commutes with node permutations
since it acts only on the state dimension.

Since both operands of each Hadamard product transform equivariantly,
the full update satisfies $\mathbf{H}_{t+1} \mapsto \mathbf{P}\mathbf{H}_{t+1}$.
Finally, the readout $\mathbf{Y}_t = \mathbf{H}_t \cdot \mathbf{C}_t$ also obeys
$\mathbf{Y}_t \mapsto \mathbf{P}\mathbf{Y}_t$.
\end{proof}

Proposition~\ref{prop:walk_accumulation} shows that repeated applications of the \method update naturally accumulate information along multi-hop graph walks. Specifically, after $k$ update steps, the latent representation at each node aggregates contributions from all nodes reachable within $k$ hops, weighted by the normalized adjacency matrix. The elementwise decay factor $\bar{\mathbf{A}}^{\odot k}$ controls how information from distant neighborhoods is attenuated over time, while the powers of $\tilde{\mathbf{L}}$ encode the combinatorial structure of the graph. As a result, \method enables progressive, stable propagation of information across increasingly distant regions of the graph without explicitly increasing the depth of the spatial message-passing operator.

\begin{proposition}[\textbf{Information Flow via Graph Walk Accumulation}]
\label{prop:walk_accumulation}
Let $\mathcal{G}=(\mathcal{V},\mathcal{E})$ be a graph with $|\mathcal{V}|=V$ and normalized adjacency
$\tilde{\mathbf{L}}\in\mathbb{R}^{V\times V}$.
Consider the discretized \method update
\begin{equation}
    \mathbf{H}_{t+1}
    = \bar{\mathbf{A}}_t \odot \bigl( \tilde{\mathbf{L}} \, \mathbf{H}_t \, \mathbf{W}_s \bigr)
    + \bar{\mathbf{B}}_t \odot \mathbf{X}_t.
\end{equation}
Ignoring the input injection term and assuming time-homogeneous parameters
$\bar{\mathbf{A}}_t = \bar{\mathbf{A}}$,
The latent state after $k$ steps satisfies
\begin{equation}
    \mathbf{H}_{t+k}
    = \bar{\mathbf{A}}^{\odot k}
    \odot
    \bigl(
    \tilde{\mathbf{L}}^{k} \, \mathbf{H}_t \, \mathbf{W}_s^{k}
    \bigr),
\end{equation}
where $\bar{\mathbf{A}}^{\odot k}$ denotes the elementwise $k$-fold Hadamard power.

Moreover, for any nodes $i,j\in\mathcal{V}$, the entry $(\tilde{\mathbf{L}}^{k})_{ij}$ equals the total
weight of all walks of length $k$ from node $i$ to node $j$, where each walk contributes the product
of the normalized edge weights along that walk.
\end{proposition}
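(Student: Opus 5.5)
The plan is to prove the closed form by induction on $k$, and then to prove the walk interpretation by a separate induction on matrix powers. For $k=1$ the formula is exactly the update of Proposition~\ref{prop:walk_accumulation} with the input term dropped. For the inductive step, assume $\mathbf{H}_{t+k}=\bar{\mathbf{A}}^{\odot k}\odot(\tilde{\mathbf{L}}^k\mathbf{H}_t\mathbf{W}_s^k)$. Substituting this into one more update gives
\begin{equation*}
\mathbf{H}_{t+k+1}=\bar{\mathbf{A}}\odot\Bigl(\tilde{\mathbf{L}}\bigl(\bar{\mathbf{A}}^{\odot k}\odot(\tilde{\mathbf{L}}^k\mathbf{H}_t\mathbf{W}_s^k)\bigr)\mathbf{W}_s\Bigr).
\end{equation*}
To reach the claimed form, the inner Hadamard factor $\bar{\mathbf{A}}^{\odot k}$ must be pulled outside the left multiplication by $\tilde{\mathbf{L}}$ and the right multiplication by $\mathbf{W}_s$. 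The two Hadamard powers then combine as $\bar{\mathbf{A}}\odot\bar{\mathbf{A}}^{\odot k}=\bar{\mathbf{A}}^{\odot (k+1)}$.

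The commutation step is the main obstacle, and I would handle it with explicit index notation. Write $(\tilde{\mathbf{L}}\mathbf{H}\mathbf{W}_s)_{i,d,n}=\sum_{j,m}\tilde{L}_{ij}H_{j,d,m}(W_s)_{mn}$. A factor $a_{j,d,m}$ can be pulled out of this sum as $a_{i,d,n}$ only if two conditions hold:
\begin{itemize}
\item[(a)] $\bar{\mathbf{A}}$ does not depend on the node index. This holds in the time-homogeneous setting where $\bar{\mathbf{A}}=\exp(\Delta\odot\mathbf{A})$ with $\mathbf{A}\in\mathbb{R}^{D\times N}$ broadcast across nodes and $\Delta$ node-independent.
\item[(b)] Either $\bar{\mathbf{A}}$ is constant along the state index $n$, or $\mathbf{W}_s$ is diagonal, so that the state index is not mixed.
\end{itemize}
I would state (a) and (b) explicitly as the regime in which the identity is exact, since the proposition already idealizes the update by dropping the input term and freezing the parameters. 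Under (a) and (b), the node factor passes through $\tilde{\mathbf{L}}$ because it is constant in $j$. The state factor passes through $\mathbf{W}_s$ because it is constant in $m$ or because $m=n$. The induction then closes. Without (b), the honest general statement is that $\mathbf{H}_{t+k}$ is $\tilde{\mathbf{L}}^k$ applied along the node axis, composed with $k$ alternating applications of the Hadamard factor $\bar{\mathbf{A}}$ and the mixing matrix $\mathbf{W}_s$ along the state axis. The node structure still factorizes because $\tilde{\mathbf{L}}$ commutes with every operation that acts only on the $(D,N)$ axes. That node factorization is all the walk interpretation needs, so I would note this as a remark.

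For the walk statement, I would induct on $k$ using $(\tilde{\mathbf{L}}^{k+1})_{ij}=\sum_{\ell}(\tilde{\mathbf{L}}^k)_{i\ell}\tilde{L}_{\ell j}$. The case $k=0$ gives $\mathbf{I}$, which corresponds to the empty walk. For the inductive step, assume $(\tilde{\mathbf{L}}^k)_{i\ell}$ is the sum of weight products over all length-$k$ walks from $i$ to $\ell$. Multiplying by $\tilde{L}_{\ell j}$ and summing over $\ell$ extends each such walk by one step, and this enumerates every length-$(k+1)$ walk from $i$ to $j$ exactly once. The edge weight of the appended step is $\tilde{L}_{\ell j}=(\mathbf{A}_{\mathrm{adj}}+\mathbf{I})_{\ell j}/\sqrt{d_\ell d_j}$. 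This weight vanishes off $\mathcal{E}\cup\{\text{self-loops}\}$, so only genuine walks contribute, with self-loops counted as lazy steps.

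Finally, I would combine the two parts. Together they show that the $(i,\cdot,\cdot)$ slice of $\mathbf{H}_{t+k}$ depends on $\mathbf{H}_t$ only through nodes reachable within $k$ hops. Lemma~\ref{lemma:spectrum_and_neumann_series} gives $\|\tilde{\mathbf{L}}^k\|_2\le 1$, so this propagation is non-expansive on the graph axis.
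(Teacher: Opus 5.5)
Your plan has the same skeleton as the paper's proof: unroll the homogeneous recurrence, then expand $(\tilde{\mathbf{L}}^k)_{ij}$ entrywise as a sum over walks. The walk part is the same argument written as an induction. The difference is at the step you flag as the obstacle. The paper only says ``repeated substitution yields'' $\bar{\mathbf{A}}^{\odot k}\odot(\tilde{\mathbf{L}}^k\mathbf{H}_t\mathbf{W}_s^k)$, silently pulling the inner Hadamard factor through $\tilde{\mathbf{L}}$ and $\mathbf{W}_s$. Your index computation shows this needs hypotheses like (a) and (b), and you are right: they are not an artifact of your method.

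Two small examples show the closed form fails without them. First, take a single node ($\tilde{\mathbf{L}}=1$), $D=1$, $N=2$, $\bar{\mathbf{A}}=(a_1,a_2)$, $\mathbf{W}_s=\begin{pmatrix}0&1\\1&0\end{pmatrix}$ and $\mathbf{H}_t=(h_1,h_2)$. Two steps give $\mathbf{H}_{t+2}=a_1a_2\,\mathbf{H}_t$, while the claimed formula gives $(a_1^2h_1,\,a_2^2h_2)$. Second, take two adjacent nodes, $D=N=1$, $\mathbf{W}_s=1$ and node-dependent $\bar{\mathbf{A}}=(a_1,a_2)^\top$. Then $\mathbf{H}_{t+2}=\tfrac{1}{4}(a_1+a_2)(h_1+h_2)\,(a_1,a_2)^\top$, against the claimed $\tfrac{1}{2}(h_1+h_2)\,(a_1^2,a_2^2)^\top$, and these agree only if $a_1=a_2$.

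In the paper's actual model neither hypothesis holds in general. A HiPPO-initialized $\mathbf{A}$ varies along the state index, and $\mathbf{W}_s$ is a general learnable matrix. Also $\Delta_t\in\mathbb{R}^{V\times D}$ is produced per node; the paper's own equivariance proof uses $\bar{\mathbf{A}}_t\mapsto\mathbf{P}\bar{\mathbf{A}}_t$. So the closed form holds only in the restricted regime you isolate. Your hypothesized version is the correct statement, and your induction proves it.

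Three small refinements:
\begin{itemize}
\item Condition (a) does not follow from time-homogeneity $\bar{\mathbf{A}}_t=\bar{\mathbf{A}}$. It is an extra hypothesis on the same footing as (b), which you in effect impose by taking $\Delta$ node-independent.
\item Your fallback remark still relies on (a). That remark says $\mathbf{H}_{t+k}(i)=\sum_j(\tilde{\mathbf{L}}^k)_{ij}\,\mathcal{T}^k(\mathbf{H}_t(j))$ with $\mathcal{T}(M)=\bar{\mathbf{A}}\odot(M\mathbf{W}_s)$. If $\bar{\mathbf{A}}$ varies over nodes, each walk picks up the decay factors of its intermediate nodes. The scalar gain $(\tilde{\mathbf{L}}^k)_{ij}$ then no longer splits off, and only the $k$-hop locality of the dependence survives.
\item Your ``only if'' should read ``if''. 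A weaker sufficient condition is that, for every $d$, $(\mathbf{W}_s)_{mn}\neq 0$ only when $\bar{\mathbf{A}}_{d,m}=\bar{\mathbf{A}}_{d,n}$. Your two cases in (b) are its extreme instances.
\end{itemize}
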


\begin{proof}
We omit the input term $\bar{\mathbf{B}}_t \odot \mathbf{X}_t$ and consider the homogeneous update
\begin{align}
    \mathbf{H}_{t+1}
    = \bar{\mathbf{A}} \odot \bigl( \tilde{\mathbf{L}} \, \mathbf{H}_t \, \mathbf{W}_s \bigr).
\end{align}
Repeated substitution yields
\begin{align}
    \mathbf{H}_{t+k}
    = \bar{\mathbf{A}}^{\odot k}
    \odot
    \bigl(
    \tilde{\mathbf{L}}^{k} \, \mathbf{H}_t \, \mathbf{W}_s^{k}
    \bigr),
\end{align}
where the powers of $\tilde{\mathbf{L}}$ arise from successive spatial propagation steps and
$\mathbf{W}_s^{k}$ from repeated state-space mixing, while $\bar{\mathbf{A}}$ accumulates multiplicatively
via the Hadamard product.

By standard properties of adjacency matrices, the $(i,j)$-th entry of $\tilde{\mathbf{L}}^{k}$ admits the expansion
\begin{equation}
    (\tilde{\mathbf{L}}^{k})_{ij}
    =
    \sum_{p_1,\ldots,p_{k-1}\in\mathcal{V}}
    \tilde{\mathbf{L}}_{i p_1}
    \tilde{\mathbf{L}}_{p_1 p_2}
    \cdots
    \tilde{\mathbf{L}}_{p_{k-1} j},
\end{equation}
which enumerates all ordered walks of length $k$ from node $i$ to node $j$, weighted by the product of
normalized edge coefficients along each walk.
Substituting this expansion into the unrolled form completes the proof.
\end{proof}

Next Proposition~\ref{prop:spatiotemporal_jacobian} characterizes how input perturbations propagate jointly across space and time. Assuming $f_\theta$ is differentiable with respect to its inputs, the Jacobian factorizes into a graph-dependent term, given by powers of the normalized adjacency, and a temporal term governed by the state-space parameters. This separation shows that spatial influence is controlled by graph connectivity, while temporal influence is modulated by state dynamics, ensuring stable, structured spatiotemporal signal propagation.

\begin{proposition}[\textbf{Spatiotemporal Jacobian}]
\label{prop:spatiotemporal_jacobian}
Consider the homogeneous \method update
\begin{align}
    \mathbf{H}_{t+1}
    &= \bar{\mathbf{A}} \odot \bigl( \tilde{\mathbf{L}} \, \mathbf{H}_t \, \mathbf{W}_s \bigr)
    \;+\; \bar{\mathbf{B}} \odot \mathbf{X}_t, \\
    \mathbf{Y}_t &= \mathbf{H}_t \cdot \mathbf{C},
\end{align}
where parameters $\bar{\mathbf{A}}, \bar{\mathbf{B}}, \mathbf{W}_s, \mathbf{C}$ are time-invariant.
Fix nodes $i,j\in\mathcal{V}$ and times $t > s$.
Then the Jacobians with respect to the input at time $s$ admit the factorization
\begin{align}
    \frac{\partial\,\mathbf{H}_t(i,:,:)}{\partial\,\mathbf{X}_s(j,:)}
    &=
    \bigl(\tilde{\mathbf{L}}^{\,t-s-1}\bigr)_{ij}
    \;
    \bar{\mathbf{A}}^{\odot (t-s-1)}
    \odot
    \bigl(
    \bar{\mathbf{B}} \, \mathbf{W}_s^{\,t-s-1}
    \bigr), \\[6pt]
    \frac{\partial\,\mathbf{Y}_t(i,:)}{\partial\,\mathbf{X}_s(j,:)}
    &=
    \bigl(\tilde{\mathbf{L}}^{\,t-s-1}\bigr)_{ij}
    \;
    \Bigl(
    \bar{\mathbf{A}}^{\odot (t-s-1)}
    \odot
    \bigl(
    \bar{\mathbf{B}} \, \mathbf{W}_s^{\,t-s-1}
    \bigr)
    \Bigr)
    \cdot \mathbf{C}.
\end{align}
Here $\bigl(\tilde{\mathbf{L}}^{m}\bigr)_{ij}$ is the total weight of all length-$m$ walks from node $j$ to node $i$,
acting as a scalar gain on the node dimension, while
$\bar{\mathbf{A}}^{\odot m} \odot (\bar{\mathbf{B}}\mathbf{W}_s^{m})$
acts on the feature–state dimensions.
\end{proposition}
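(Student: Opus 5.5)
The plan is to use linearity of the homogeneous recurrence. The input $\mathbf{X}_s$ enters the state only once, through the injection term $\bar{\mathbf{B}}\odot\mathbf{X}_s$ at time $s+1$. After that it is carried forward by $t-s-1$ applications of the homogeneous map $\mathbf{H}\mapsto\bar{\mathbf{A}}\odot(\tilde{\mathbf{L}}\mathbf{H}\mathbf{W}_s)$. Because the parameters are time-invariant and do not depend on $\mathbf{X}$ in this setting, $\mathbf{H}_t$ is an affine function of $(\mathbf{H}_0,\mathbf{X}_0,\dots,\mathbf{X}_{t-1})$. The Jacobian with respect to $\mathbf{X}_s$ is therefore just the linear map that sends the injected term to $\mathbf{H}_t$.

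First I will unroll the recurrence into
\begin{equation*}
\mathbf{H}_t=\mathcal{P}^{t}(\mathbf{H}_0)+\sum_{r=0}^{t-1}\mathcal{P}^{\,t-r-1}\bigl(\bar{\mathbf{B}}\odot\mathbf{X}_r\bigr),
\qquad \mathcal{P}(\mathbf{H}):=\bar{\mathbf{A}}\odot(\tilde{\mathbf{L}}\mathbf{H}\mathbf{W}_s).
\end{equation*}
Only the summand with $r=s$ depends on $\mathbf{X}_s$. Next I will apply Proposition~\ref{prop:walk_accumulation} to the homogeneous part, which gives
\begin{equation*}
\mathcal{P}^{m}(\mathbf{G})=\bar{\mathbf{A}}^{\odot m}\odot(\tilde{\mathbf{L}}^{m}\mathbf{G}\mathbf{W}_s^{m}),\qquad m=t-s-1,\quad \mathbf{G}=\bar{\mathbf{B}}\odot\mathbf{X}_s.
\end{equation*}
Reading off node $i$, the node contraction yields
\begin{equation*}
(\tilde{\mathbf{L}}^{m}\mathbf{G})(i,:,:)=\sum_j(\tilde{\mathbf{L}}^m)_{ij}\,\mathbf{G}(j,:,:).
\end{equation*}
The factor $\bar{\mathbf{A}}^{\odot m}$ and right multiplication by $\mathbf{W}_s^m$ act only on the feature and state axes, so both commute with this node contraction. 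Differentiating $\mathbf{G}(j,:,:)=\bar{\mathbf{B}}(j,:,:)\odot\mathbf{X}_s(j,:)$ (with $\mathbf{X}_s$ broadcast along the state axis) with respect to $\mathbf{X}_s(j,:)$ gives the diagonal feature map with entries $\bar{\mathbf{B}}$, and $\bar{\mathbf{B}}$ is shared across nodes. Combining these pieces gives the scalar node gain $(\tilde{\mathbf{L}}^{m})_{ij}$ times $\bar{\mathbf{A}}^{\odot m}\odot(\bar{\mathbf{B}}\mathbf{W}_s^{m})$. The interpretation of $(\tilde{\mathbf{L}}^m)_{ij}$ as a sum over walks comes directly from the second half of Proposition~\ref{prop:walk_accumulation}; since $\tilde{\mathbf{L}}$ is symmetric, the walk direction $j\to i$ versus $i\to j$ does not matter.

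The output Jacobian then follows from the chain rule. $\mathbf{Y}_t(i,:)=\mathbf{H}_t(i,:,:)\cdot\mathbf{C}$ is linear in $\mathbf{H}_t$ with a fixed $\mathbf{C}$, so contracting the first Jacobian with $\mathbf{C}$ along the state axis gives the second formula.

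The main obstacle is the step that moves the Hadamard factor $\bar{\mathbf{A}}$ past $\mathbf{W}_s$ when composing $m$ steps. Commuting $\bar{\mathbf{A}}$ with $\tilde{\mathbf{L}}$ is harmless, because $\bar{\mathbf{A}}$ is broadcast over nodes. However, $\bar{\mathbf{A}}\odot(\,\cdot\,\mathbf{W}_s)$ and $(\bar{\mathbf{A}}\odot\cdot)\,\mathbf{W}_s$ differ in general along the state axis. For this step I would first rely on the unrolled form exactly as stated in Proposition~\ref{prop:walk_accumulation}. I would then check it directly in the case where $\mathbf{W}_s$ is diagonal, or where $\bar{\mathbf{A}}$ is constant along the state axis; in those cases the two maps commute and the factorization is exact. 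In general, the graph factor $(\tilde{\mathbf{L}}^m)_{ij}$ still factors out exactly, and only the feature–state block must be read as the ordered product of the $m$ modulated mixing steps.
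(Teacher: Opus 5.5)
Your route is the same as the paper's. You unroll, keep only the summand carrying $\mathbf{X}_s$, apply the closed form of Proposition~\ref{prop:walk_accumulation} to the $m=t-s-1$ homogeneous steps, read off the $(i,j)$ node block, and contract with $\mathbf{C}$. The graph part of your argument is fine, and so is the chain-rule step for $\mathbf{Y}_t$.

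The obstacle you flag, however, cannot be deferred by ``first relying on'' Proposition~\ref{prop:walk_accumulation}. That proposition's ``repeated substitution'' is exactly where $\bar{\mathbf{A}}$ is silently moved past $\mathbf{W}_s$. The paper's proof here inherits the unrolled form $\bar{\mathbf{A}}^{\odot k}\odot(\tilde{\mathbf{L}}^{k}(\cdot)\mathbf{W}_s^{k})$ from it without justification. In fact the stated identity is false in general once $t-s\ge 3$:
take one node ($\tilde{\mathbf{L}}=1$), $D=1$, $N=2$, $\bar{\mathbf{A}}=(a_1,a_2)$, $\bar{\mathbf{B}}=(1,1)$, and $\mathbf{W}_s=\bigl(\begin{smallmatrix}0&1\\1&0\end{smallmatrix}\bigr)$.
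One step sends $(h_1,h_2)\mapsto(a_1h_2,\,a_2h_1)$, so two steps give $(a_1a_2h_1,\,a_1a_2h_2)$. For $t=s+3$ the true Jacobian is therefore $(a_1a_2,\,a_1a_2)$. The stated formula instead gives $\bar{\mathbf{A}}^{\odot 2}\odot(\bar{\mathbf{B}}\mathbf{W}_s^{2})=(a_1^2,\,a_2^2)$, which differs whenever $a_1\neq a_2$. That is the generic case here, since HiPPO initialization makes $\bar{\mathbf{A}}$ vary along the state axis.

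The clean fix is to bypass Proposition~\ref{prop:walk_accumulation} and compute per feature row. Fix $d$ and let $\mathbf{D}_d=\operatorname{diag}(\bar{\mathbf{A}}_{d,:})$. The homogeneous map acts on the slice $\mathbf{H}(:,d,:)\in\mathbb{R}^{V\times N}$ as $\mathbf{M}\mapsto\tilde{\mathbf{L}}\mathbf{M}\mathbf{W}_s\mathbf{D}_d$. By associativity alone, $m$ steps give $\tilde{\mathbf{L}}^{m}\mathbf{M}(\mathbf{W}_s\mathbf{D}_d)^{m}$, hence
\[
\frac{\partial\,\mathbf{H}_t(i,d,:)}{\partial\,\mathbf{X}_s(j,d')}
=\delta_{dd'}\,\bigl(\tilde{\mathbf{L}}^{m}\bigr)_{ij}\,\bar{\mathbf{B}}_{d,:}\bigl(\mathbf{W}_s\mathbf{D}_d\bigr)^{m},
\qquad m=t-s-1 .
\]
This is exact, and it confirms your claim that the node factor $(\tilde{\mathbf{L}}^{m})_{ij}$ separates. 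It reduces to the stated factorization whenever $\mathbf{W}_s\mathbf{D}_d=\mathbf{D}_d\mathbf{W}_s$ for every $d$. In that case $(\mathbf{W}_s\mathbf{D}_d)^m=\mathbf{W}_s^m\mathbf{D}_d^m$, and right multiplication by $\mathbf{D}_d^m$ is the Hadamard product with row $d$ of $\bar{\mathbf{A}}^{\odot m}$. Your two special cases, diagonal $\mathbf{W}_s$ or $\bar{\mathbf{A}}$ constant along the state axis, are instances of this condition.

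So present the result either with that commutation hypothesis added, or with the feature--state block written as the ordered product above. The statement as written holds only under such a hypothesis, so neither your argument nor the paper's can establish it in the generality claimed.
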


\begin{proof}
Let $m = t-s$.
Unrolling the recurrence yields
\begin{align}
    \mathbf{H}_t
    &=
    \bar{\mathbf{A}}^{\odot m}
    \odot
    \bigl(
    \tilde{\mathbf{L}}^{m} \, \mathbf{H}_s \, \mathbf{W}_s^{m}
    \bigr)
    \;+\;
    \sum_{k=0}^{m-1}
    \bar{\mathbf{A}}^{\odot k}
    \odot
    \bigl(
    \tilde{\mathbf{L}}^{k} \, \bar{\mathbf{B}} \odot \mathbf{X}_{t-1-k}
    \, \mathbf{W}_s^{k}
    \bigr).
\end{align}
Among the summands, only the term with $t-1-k = s$ (i.e., $k = m-1$) depends on $\mathbf{X}_s$.
Differentiating with respect to $\mathbf{X}_s$ therefore gives
\begin{equation}
    \frac{\partial\,\mathbf{H}_t}{\partial\,\mathbf{X}_s}
    =
    \bar{\mathbf{A}}^{\odot (m-1)}
    \odot
    \bigl(
    \tilde{\mathbf{L}}^{\,m-1} \, (\cdot) \, \bar{\mathbf{B}} \, \mathbf{W}_s^{\,m-1}
    \bigr).
\end{equation}
Extracting the $(i,j)$ block selects the scalar factor
$\bigl(\tilde{\mathbf{L}}^{\,m-1}\bigr)_{ij}$ on the node axis,
while the remaining term acts on the feature–state dimensions.
Finally, contraction with $\mathbf{C}$ yields the Jacobian for $\mathbf{Y}_t$.
\end{proof}

\begin{proposition}[\textbf{One–step Jacobian of the state update}]
\label{prop:jacobian_one_step}
Consider the homogeneous \method update
\begin{equation}
    \mathbf{H}_{t}
    =
    \bar{\mathbf{A}}
    \odot
    \bigl(
    \tilde{\mathbf{L}} \, \mathbf{H}_{t-1} \, \mathbf{W}_s
    \bigr)
    +
    \bar{\mathbf{B}} \odot \mathbf{X}_{t-1},
\end{equation}
where $\mathbf{H}_t \in \mathbb{R}^{V \times D \times N}$,
$\tilde{\mathbf{L}} \in \mathbb{R}^{V \times V}$,
$\mathbf{W}_s \in \mathbb{R}^{N \times N}$,
and $\bar{\mathbf{A}} \in \mathbb{R}^{D \times N}$ is broadcast across nodes.
Then the Jacobian of $\mathbf{H}_t$ with respect to $\mathbf{H}_{t-1}$ is the block operator
\begin{align}
    \frac{\partial\,\mathbf{H}_t}{\partial\,\mathbf{H}_{t-1}}
    =
    \big[\, J_{ij} \,\big]_{i,j=1}^{V},
    \qquad
    J_{ij}
    =
    \tilde{\mathbf{L}}_{ij}
    \bigl(
    \bar{\mathbf{A}} \odot \mathbf{W}_s
    \bigr),
\end{align}
where each block $J_{ij}$ acts on the $(D \times N)$ feature–state dimensions.
\end{proposition}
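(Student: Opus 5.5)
The plan is a direct entrywise computation of the one-step derivative. I index the latent state as $\mathbf{H}_{t-1}(j,d',n')$ and write the update in coordinates. Since $\bar{\mathbf{A}}$ is broadcast over nodes, the $(i,d,n)$ entry of the new state is
\begin{equation}
    \mathbf{H}_t(i,d,n)
    =
    \bar{\mathbf{A}}_{dn}
    \sum_{j=1}^{V}\sum_{n'=1}^{N}
    \tilde{\mathbf{L}}_{ij}\,\mathbf{H}_{t-1}(j,d,n')\,(\mathbf{W}_s)_{n'n}
    +
    \bar{\mathbf{B}}_{dn}\,\mathbf{X}_{t-1}(i,d).
\end{equation}
The input term does not involve $\mathbf{H}_{t-1}$, and the parameters are time-invariant in the homogeneous setting. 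So the map is linear in $\mathbf{H}_{t-1}$, and its Jacobian is exactly the linear operator itself. This is consistent with Theorem~\ref{thm:input_conditioned_koopman}, where $\mathbf{K}_t=\bar{\mathbf{A}}\odot(\tilde{\mathbf{L}}\otimes\mathbf{W}_s)$.

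Differentiating the display with respect to $\mathbf{H}_{t-1}(j,d',n')$ gives
\begin{equation}
    \frac{\partial\,\mathbf{H}_t(i,d,n)}{\partial\,\mathbf{H}_{t-1}(j,d',n')}
    = \tilde{\mathbf{L}}_{ij}\,\delta_{dd'}\,\bar{\mathbf{A}}_{dn}\,(\mathbf{W}_s)_{n'n}.
\end{equation}
Grouping by the node pair $(i,j)$ produces the block $J_{ij}$. It is the scalar $\tilde{\mathbf{L}}_{ij}$ times a fixed operator on the $D\times N$ slice, and that operator has the following structure:
\begin{itemize}
    \item it is diagonal in the feature index $d$, because the spatial step never mixes features;
    \item it applies $\mathbf{W}_s$ along the state index;
    \item it is followed by entrywise scaling by $\bar{\mathbf{A}}$.
\end{itemize}
This operator is what the statement abbreviates as $\bar{\mathbf{A}}\odot\mathbf{W}_s$, so the last step is to match it to that notation. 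Concretely, for each feature $d$, $\mathrm{diag}(\bar{\mathbf{A}}_{d,:})$ acts after the right multiplication by $\mathbf{W}_s$, i.e. $\mathrm{vec}$-wise it is $\mathrm{diag}(\mathrm{vec}\,\bar{\mathbf{A}})(\mathbf{I}_D\otimes\mathbf{W}_s^\top)$.

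The main obstacle is this notational step. The symbol $\bar{\mathbf{A}}\odot\mathbf{W}_s$ is not a literal Hadamard product of a $D\times N$ and an $N\times N$ matrix. I will state the coordinate formula as the precise meaning of the block and note that the Hadamard notation is shorthand for it, consistent with how $\mathbf{K}_t$ was defined. The node dependence then factors out cleanly as $\tilde{\mathbf{L}}_{ij}$, which completes the claim.
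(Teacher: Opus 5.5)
Your proposal is correct and follows the paper's route. The paper also writes the update node-wise as $\sum_j \tilde{\mathbf{L}}_{ij}\bigl(\bar{\mathbf{A}}\odot \mathbf{H}_{t-1}(j,:,:)\mathbf{W}_s\bigr)$ plus an input term independent of $\mathbf{H}_{t-1}$, then differentiates block by block. Your coordinate formula $\tilde{\mathbf{L}}_{ij}\,\delta_{dd'}\,\bar{\mathbf{A}}_{dn}(\mathbf{W}_s)_{n'n}$ makes the paper's notation precise: it uses $\bar{\mathbf{A}}\odot\mathbf{W}_s$ as shorthand for the map $\mathbf{Z}\mapsto\bar{\mathbf{A}}\odot(\mathbf{Z}\mathbf{W}_s)$ on $D\times N$ slices, without noting that this is not a literal Hadamard product.
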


\begin{proof}
For each node $i \in \mathcal{V}$,
\begin{align}
    \mathbf{H}_t(i,:,:)
    =
    \sum_{j=1}^{V}
    \tilde{\mathbf{L}}_{ij}
    \bigl(
    \bar{\mathbf{A}} \odot \mathbf{H}_{t-1}(j,:,:) \mathbf{W}_s
    \bigr)
    +
    \bar{\mathbf{B}} \odot \mathbf{X}_{t-1}(i,:).
\end{align}
The input term does not depend on $\mathbf{H}_{t-1}$.
Differentiating with respect to $\mathbf{H}_{t-1}(j,:,:)$ yields
\begin{align}
    \frac{\partial\,\mathbf{H}_t(i,:,:)}{\partial\,\mathbf{H}_{t-1}(j,:,:)}
    =
    \tilde{\mathbf{L}}_{ij}
    \bigl(
    \bar{\mathbf{A}} \odot \mathbf{W}_s
    \bigr),
\end{align}
which gives the stated block structure.
\end{proof}

\begin{corollary}[\textbf{Operator norm and spectral bound}]
For any sub-multiplicative matrix norm $\|\cdot\|$,
\begin{align}
    \left\|
    \frac{\partial\,\mathbf{H}_t}{\partial\,\mathbf{H}_{t-1}}
    \right\|
    \;\le\;
    \|\tilde{\mathbf{L}}\|
    \;
    \|\bar{\mathbf{A}}\|
    \;
    \|\mathbf{W}_s\|.
\end{align}
Moreover, the spectral radius satisfies
\begin{align}
    \rho\!\left(
    \frac{\partial\,\mathbf{H}_t}{\partial\,\mathbf{H}_{t-1}}
    \right)
    \;\le\;
    \rho(\tilde{\mathbf{L}})
    \;
    \rho(\bar{\mathbf{A}} \odot \mathbf{W}_s).
\end{align}
In particular, a sufficient condition for one–step contractivity is
\begin{align}
    \|\tilde{\mathbf{L}}\| \, \|\bar{\mathbf{A}}\| \, \|\mathbf{W}_s\| < 1.
\end{align}
\end{corollary}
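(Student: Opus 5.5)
The plan is to write the Jacobian of Proposition~\ref{prop:jacobian_one_step} as a single Kronecker-structured operator and then bound each factor. Vectorize $\mathbf{H}_{t-1}$ over (node, feature, state). Let $\mathcal{M}:\mathbb{R}^{D\times N}\to\mathbb{R}^{D\times N}$ be the linear map $\mathcal{M}(\mathbf{Z})=\bar{\mathbf{A}}\odot(\mathbf{Z}\mathbf{W}_s)$. This is what the shorthand $\bar{\mathbf{A}}\odot\mathbf{W}_s$ in the block $J_{ij}$ stands for. The block formula $J_{ij}=\tilde{\mathbf{L}}_{ij}\,\mathcal{M}$ then says exactly
\begin{equation*}
\frac{\partial\,\mathbf{H}_t}{\partial\,\mathbf{H}_{t-1}}=\tilde{\mathbf{L}}\otimes\mathcal{M}.
\end{equation*}
I would state this identification first and check it entrywise against the proof of Proposition~\ref{prop:jacobian_one_step}.

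\emph{Norm bound.} I would work with the spectral norm (Frobenius-induced on tensors) and then comment on other norms. For Kronecker products, $\|\tilde{\mathbf{L}}\otimes\mathcal{M}\|_2=\|\tilde{\mathbf{L}}\|_2\,\|\mathcal{M}\|_2$, which follows from the SVD of each factor. To bound $\mathcal{M}$, use the elementary Hadamard inequality $\|\bar{\mathbf{A}}\odot\mathbf{Y}\|_F\le\max_{d,n}|\bar{\mathbf{A}}_{dn}|\,\|\mathbf{Y}\|_F$ together with $\|\mathbf{Z}\mathbf{W}_s\|_F\le\|\mathbf{Z}\|_F\|\mathbf{W}_s\|_2$. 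This gives $\|\mathcal{M}\|\le\max|\bar{\mathbf{A}}_{dn}|\,\|\mathbf{W}_s\|\le\|\bar{\mathbf{A}}\|\,\|\mathbf{W}_s\|$, since the largest entry is dominated by any standard sub-multiplicative norm (operator or Frobenius). Combining the two steps yields the stated inequality.

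The step I expect to be the main obstacle is the phrase ``any sub-multiplicative norm''. The identity $\|X\otimes Y\|=\|X\|\|Y\|$, and even the inequality, is only guaranteed for cross norms: induced $p$-norms, Schatten norms, and Frobenius. I would therefore restrict the claim to such norms and flag the restriction, rather than try to prove it for every sub-multiplicative norm. For the use made of it here, Lemma~\ref{lemma:spectrum_and_neumann_series} gives $\|\tilde{\mathbf{L}}\|_2=1$, so the spectral-norm case is the relevant one.

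\emph{Spectral radius and contractivity.} The eigenvalues of $\tilde{\mathbf{L}}\otimes\mathcal{M}$ are exactly the products $\lambda\mu$ with $\lambda\in\sigma(\tilde{\mathbf{L}})$ and $\mu\in\sigma(\mathcal{M})$; this follows from Schur triangularization of each factor. Hence $\rho(\tilde{\mathbf{L}}\otimes\mathcal{M})=\rho(\tilde{\mathbf{L}})\,\rho(\mathcal{M})$, which gives the stated bound (indeed with equality), reading $\rho(\bar{\mathbf{A}}\odot\mathbf{W}_s)$ as $\rho(\mathcal{M})$. Finally, if $\|\tilde{\mathbf{L}}\|\,\|\bar{\mathbf{A}}\|\,\|\mathbf{W}_s\|<1$, the norm bound makes the affine map $\mathbf{H}_{t-1}\mapsto\mathbf{H}_t$ Lipschitz with constant below $1$, because the input term does not depend on $\mathbf{H}_{t-1}$. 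That is one-step contractivity.
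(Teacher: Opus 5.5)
Your spectral-norm argument is correct, and it is the argument the paper leaves implicit. The corollary is stated without proof right after Proposition~\ref{prop:jacobian_one_step}. Your identification $\partial\mathbf{H}_t/\partial\mathbf{H}_{t-1}=\tilde{\mathbf{L}}\otimes\mathcal{M}$, followed by multiplicativity of $\|\cdot\|_2$ and of spectra under $\otimes$, is that reading made precise (with equality in the spectral-radius part). Your treatment of the Hadamard factor is also the sound one: you treat $\bar{\mathbf{A}}$ as a diagonal rescaling of the output and bound it entrywise in Frobenius norm. By contrast, the general inequality $\|\mathbf{P}\odot\mathbf{Q}\|_2\le\|\mathbf{P}\|_{\max}\|\mathbf{Q}\|_2$ that the paper invokes for the analogous step in Lemma~\ref{lemma:kt_bound} is false. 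For $\mathbf{P}=\begin{pmatrix}1&1\\1&-1\end{pmatrix}$ and $\mathbf{Q}=\mathbf{P}/\sqrt{2}$, the left side is $\sqrt{2}$ and the right side is $1$.

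What needs fixing is the scope you fall back to. You place the norm dependence only in the Kronecker step. But the bound $\|\mathcal{M}\|\le\|\bar{\mathbf{A}}\|\,\|\mathbf{W}_s\|$ does not follow from being a cross norm, and your proof of it is specific to the spectral norm. In fact the corollary fails for some of the induced $p$-norms and Schatten norms you propose to keep.

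\emph{Induced $\ell_\infty$ norm.} Right multiplication by $\mathbf{W}_s$ acts on each feature row through $\mathbf{W}_s^{\top}$. An $\ell_p$-induced norm therefore picks up $\|\mathbf{W}_s^{\top}\|_p=\|\mathbf{W}_s\|_{p'}$, with $p'$ the dual exponent, and this equals $\|\mathbf{W}_s\|_p$ only for $p=2$. Take a single node ($V=1$, so $\tilde{\mathbf{L}}=1$), $D=1$, $N=2$, $\bar{\mathbf{A}}=(0.8,\;0.1)$ and $\mathbf{W}_s=\begin{pmatrix}1&0\\1&0\end{pmatrix}$. The update is $(h_1,h_2)\mapsto(0.8(h_1+h_2),\,0)$, whose Jacobian has $\infty$-norm $1.6$. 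Yet $\|\tilde{\mathbf{L}}\|_\infty\|\bar{\mathbf{A}}\|_\infty\|\mathbf{W}_s\|_\infty=0.9$, so both the inequality and the contractivity criterion fail in that norm.

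\emph{Trace norm.} Separately, $\mathcal{M}$ repeats $\mathbf{W}_s$ once per feature row, and a single max-entry factor cannot absorb that repetition in norms that add over blocks. Take $V=N=1$, $D=2$, $\bar{\mathbf{A}}=(c,c)^{\top}$ and $\mathbf{W}_s=w$. The Jacobian is $cw\,\mathbf{I}_2$, with trace norm $2|cw|$ against $\sqrt{2}\,|cw|$ on the right.

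The same repetition is why your intermediate bound $\|\mathcal{M}\|\le\max_{d,n}|\bar{\mathbf{A}}_{dn}|\,\|\mathbf{W}_s\|$ fails for $\|\mathcal{M}\|_F$: for $\bar{\mathbf{A}}=\mathbf{1}$ one gets $\|\mathcal{M}\|_F=\sqrt{D}\,\|\mathbf{W}_s\|_F$.

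So state the result for the spectral norm, which is the case used downstream. With $\|\tilde{\mathbf{L}}\|_2=1$ from Lemma~\ref{lemma:spectrum_and_neumann_series}, your bound gives contractivity as soon as $\max_{d,n}|\bar{\mathbf{A}}_{dn}|\,\|\mathbf{W}_s\|_2<1$. If you also want the Frobenius version, replace the max-entry step by $\|\mathcal{M}\|_F^2=\sum_{d,n}\bar{\mathbf{A}}_{dn}^2\|\mathbf{W}_s\mathbf{e}_n\|_2^2\le\|\bar{\mathbf{A}}\|_F^2\|\mathbf{W}_s\|_2^2$, where $\mathbf{W}_s\mathbf{e}_n$ is the $n$-th column of $\mathbf{W}_s$. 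Then use $\|\tilde{\mathbf{L}}\otimes\mathcal{M}\|_F=\|\tilde{\mathbf{L}}\|_F\|\mathcal{M}\|_F$.
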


\begin{lemma}[Uniform Bound on the Graph-Coupled Transition]
\label{lemma:kt_bound}
Let the effective latent transition operator of \method be
\begin{equation}
    \mathbf{K}_t
    =
    \bar{\mathbf{A}}_t \odot
    \left(\tilde{\mathbf{L}} \otimes \mathbf{W}_s\right),
\end{equation}
where $\tilde{\mathbf{L}}$ is the normalized graph operator, $\mathbf{W}_s$ is the memory-mixing matrix, and $\bar{\mathbf{A}}_t=\exp(\Delta_t \odot \mathbf{A})$. 
Assume:
\begin{enumerate}
    \item $\rho(\tilde{\mathbf{L}})\leq 1$ and $\|\tilde{\mathbf{L}}\|_2 \leq 1$;
    \item $\|\mathbf{W}_s\|_2 \leq c_s$ for some constant $c_s>0$;
    \item $\|\bar{\mathbf{A}}_t\|_{\max} \leq c_A$ uniformly for all $t$.
\end{enumerate}
Then the transition operator is uniformly bounded:
\begin{equation}
    \|\mathbf{K}_t\|_2
    \leq
    c_A \, \|\tilde{\mathbf{L}}\otimes \mathbf{W}_s\|_2
    \leq
    c_A c_s ,
\end{equation}
for all $t$. In particular, if $c_A c_s < 1$, then
\begin{equation}
    \sup_t \|\mathbf{K}_t\|_2 < 1.
\end{equation}
\end{lemma}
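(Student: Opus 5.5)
The plan is to treat $\mathbf{K}_t$ as the linear map $\operatorname{vec}(\mathbf{H}) \mapsto \operatorname{vec}\bigl(\bar{\mathbf{A}}_t \odot (\tilde{\mathbf{L}}\mathbf{H}\mathbf{W}_s)\bigr)$, which is exactly how it arises in Theorem~\ref{thm:input_conditioned_koopman}. We then factor it as a composition of two operators: first the Kronecker map $\mathbf{M} := \tilde{\mathbf{L}}\otimes\mathbf{W}_s$, acting on the node and state axes (and as the identity on the feature axis $D$), and then the diagonal operator $\mathcal{D}_t := \operatorname{Diag}(\operatorname{vec}\bar{\mathbf{A}}_t)$. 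So $\mathbf{K}_t = \mathcal{D}_t\,\mathbf{M}$, and submultiplicativity of the operator $2$-norm gives $\|\mathbf{K}_t\|_2 \le \|\mathcal{D}_t\|_2\,\|\mathbf{M}\|_2$.

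The first step bounds the diagonal factor. The operator norm of a diagonal operator is the largest absolute entry, so $\|\mathcal{D}_t\|_2 = \|\bar{\mathbf{A}}_t\|_{\max}\le c_A$ by assumption 3. I would also note that with $\mathbf{A}=-\exp(\mathbf{A}_{\log})<0$ and $\Delta_t>0$ (softplus), every entry of $\bar{\mathbf{A}}_t$ lies in $(0,1)$. Hence $c_A\le 1$ automatically, which is a useful sanity check but not needed.

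The second step bounds the Kronecker factor. The singular values of $\tilde{\mathbf{L}}\otimes\mathbf{W}_s$ are the pairwise products of singular values, so $\|\tilde{\mathbf{L}}\otimes\mathbf{W}_s\|_2 = \|\tilde{\mathbf{L}}\|_2\,\|\mathbf{W}_s\|_2$. Tensoring with the identity on the feature axis leaves the norm unchanged. By assumption 1 (or directly by Lemma~\ref{lemma:spectrum_and_neumann_series}, which gives $\|\tilde{\mathbf{L}}\|_2=1$) and assumption 2, this is at most $c_s$. Combining the two steps yields $\|\mathbf{K}_t\|_2\le c_A\|\tilde{\mathbf{L}}\otimes\mathbf{W}_s\|_2\le c_Ac_s$ for every $t$. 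Since the right-hand side does not depend on $t$, taking the supremum gives $\sup_t\|\mathbf{K}_t\|_2\le c_Ac_s<1$ whenever $c_Ac_s<1$.

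The main obstacle is interpretive rather than computational. Read literally, "$\bar{\mathbf{A}}_t\odot(\tilde{\mathbf{L}}\otimes\mathbf{W}_s)$" is a Hadamard product of mismatched shapes, and the Hadamard product of a matrix with a general matrix is \emph{not} bounded by the max-entry norm times the operator norm: Schur-multiplier norms can exceed the sup of the entries. I would therefore fix the meaning explicitly, matching the defining recurrence \eqref{eq:gramo_discrete}: the modulation by $\bar{\mathbf{A}}_t$ is applied to the output tensor after propagation, i.e., left multiplication by a diagonal operator. Under that reading the bound is clean. If instead one insisted on the entrywise matrix Hadamard product, I would fall back on the Schur product inequality $\|X\odot Y\|_2\le\|X\|_2\|Y\|_2$. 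This requires replacing $c_A$ by a bound on $\|\bar{\mathbf{A}}_t\|_2$ rather than on the max norm.
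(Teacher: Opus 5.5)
Your proof is correct. Its skeleton matches the paper's: bound the $\bar{\mathbf{A}}_t$ modulation by $c_A$, then use $\|\tilde{\mathbf{L}}\otimes\mathbf{W}_s\|_2=\|\tilde{\mathbf{L}}\|_2\|\mathbf{W}_s\|_2$. The difference is in how you handle the modulation step.

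The paper reads $\mathbf{K}_t$ literally as a Hadamard product and invokes the general inequality $\|\mathbf{P}\odot\mathbf{Q}\|_2\le\|\mathbf{P}\|_{\max}\|\mathbf{Q}\|_2$. You instead factor $\mathbf{K}_t=\mathcal{D}_t\mathbf{M}$ with $\mathcal{D}_t=\operatorname{Diag}(\operatorname{vec}\bar{\mathbf{A}}_t)$, then use submultiplicativity and $\|\mathcal{D}_t\|_2=\|\bar{\mathbf{A}}_t\|_{\max}$.

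Your suspicion about the Hadamard step is justified: as a general statement, that inequality is false. Let $\mathbf{S}$ be the sign matrix with rows $(1,1)$ and $(1,-1)$, and take $\mathbf{P}=\mathbf{S}$, $\mathbf{Q}=\mathbf{S}/\sqrt{2}$. Then $\mathbf{P}\odot\mathbf{Q}$ is the all-ones matrix divided by $\sqrt{2}$, with norm $\sqrt{2}$, against a claimed bound of $1$. Entrywise-positive $\mathbf{P}$ like $\bar{\mathbf{A}}_t$ do not rescue it either. With $\mathbf{P}$ having rows $(1,1)$ and $(1,\epsilon)$ for small $\epsilon>0$, the ratio is about $(1+\sqrt{5})/(2\sqrt{2})>1$.

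The inequality does hold when $\mathbf{P}$ is constant along rows, $\mathbf{P}=\mathbf{a}\mathbf{1}^\top$. Then $\mathbf{P}\odot\mathbf{Q}=\operatorname{Diag}(\mathbf{a})\mathbf{Q}$, which is exactly the structure your reading of the recurrence \eqref{eq:gramo_discrete} makes explicit.

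So the comparison is this. Your route costs one sentence of interpretation, but it is what actually justifies the first inequality in the chain. The paper's route is shorter, but it is valid only because of a broadcast structure it never states. Your Schur-product fallback, $\|\mathbf{X}\odot\mathbf{Y}\|_2\le\|\mathbf{X}\|_2\|\mathbf{Y}\|_2$ with $c_A$ replaced by a spectral-norm bound, is also sound, though you don't need it.
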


\begin{proof}
Using the definition of $\mathbf{K}_t$, we have
\begin{equation}
    \mathbf{K}_t
    =
    \bar{\mathbf{A}}_t \odot
    \left(\tilde{\mathbf{L}} \otimes \mathbf{W}_s\right).
\end{equation}
For any matrices $\mathbf{P}$ and $\mathbf{Q}$ of compatible size, the spectral norm of their Hadamard product satisfies
\begin{equation}
    \|\mathbf{P}\odot \mathbf{Q}\|_2
    \leq
    \|\mathbf{P}\|_{\max}\|\mathbf{Q}\|_2 .
\end{equation}
Applying this inequality gives
\begin{equation}
    \|\mathbf{K}_t\|_2
    \leq
    \|\bar{\mathbf{A}}_t\|_{\max}
    \left\|
    \tilde{\mathbf{L}} \otimes \mathbf{W}_s
    \right\|_2 .
\end{equation}
By assumption, $\|\bar{\mathbf{A}}_t\|_{\max}\leq c_A$ uniformly in $t$. Moreover, the spectral norm of a Kronecker product factorizes:
\begin{equation}
    \left\|
    \tilde{\mathbf{L}} \otimes \mathbf{W}_s
    \right\|_2
    =
    \|\tilde{\mathbf{L}}\|_2
    \|\mathbf{W}_s\|_2 .
\end{equation}
Since $\|\tilde{\mathbf{L}}\|_2\leq 1$ and $\|\mathbf{W}_s\|_2\leq c_s$, we obtain
\begin{equation}
    \|\mathbf{K}_t\|_2
    \leq
    c_A \|\tilde{\mathbf{L}}\|_2 \|\mathbf{W}_s\|_2
    \leq
    c_A c_s .
\end{equation}
Therefore, $\mathbf{K}_t$ is uniformly bounded for all $t$. If $c_Ac_s<1$, then the transition is uniformly contractive, yielding
\begin{equation}
    \sup_t \|\mathbf{K}_t\|_2 < 1.
\end{equation}
This completes the proof.
\end{proof}

\begin{proposition}[\textbf{Multi–step Jacobian of the state update}] \label{prop:jacobian_multi_step}
Under the same assumptions, for any $t>s$, the Jacobian of $\mathbf{H}_t$
with respect to $\mathbf{H}_s$ is
\begin{align}
    \frac{\partial\,\mathbf{H}_t}{\partial\,\mathbf{H}_s}
    =
    \big[\, J^{(t-s)}_{ij} \,\big]_{i,j=1}^{V},
    \qquad
    J^{(t-s)}_{ij}
    =
    \bigl(\tilde{\mathbf{L}}^{\,t-s}\bigr)_{ij}
    \,
    \bar{\mathbf{A}}^{\odot (t-s)}
    \odot
    \mathbf{W}_s^{\,t-s}.
\end{align}
\end{proposition}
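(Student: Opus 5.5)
The plan is to obtain the multi-step Jacobian by differentiating the unrolled homogeneous recurrence. The chain rule applied to the one-step Jacobian of Proposition~\ref{prop:jacobian_one_step} serves as a cross-check.

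\textbf{Step 1 (isolate the $\mathbf{H}_s$-dependence).} Set $m=t-s$ and unroll the update from $s$ to $t$, exactly as in the proof of Proposition~\ref{prop:spatiotemporal_jacobian}. The state splits as
\begin{equation*}
\mathbf{H}_t=\bar{\mathbf{A}}^{\odot m}\odot\bigl(\tilde{\mathbf{L}}^{m}\mathbf{H}_s\mathbf{W}_s^{m}\bigr)+\sum_{k=0}^{m-1}(\text{terms in }\mathbf{X}_{t-1-k}).
\end{equation*}
The parameters are time-invariant and the input-injection terms $\bar{\mathbf{B}}\odot\mathbf{X}_r$ do not involve $\mathbf{H}_s$, so only the first term contributes to $\partial\mathbf{H}_t/\partial\mathbf{H}_s$. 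The homogeneous part is precisely the identity of Proposition~\ref{prop:walk_accumulation}, which I will invoke directly.

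\textbf{Step 2 (read off the blocks).} The map $\mathbf{H}_s\mapsto\bar{\mathbf{A}}^{\odot m}\odot(\tilde{\mathbf{L}}^m\mathbf{H}_s\mathbf{W}_s^m)$ is linear, so the Jacobian is this map itself. Writing it node-wise gives
\begin{equation*}
\mathbf{H}_t(i,:,:)=\sum_{j}(\tilde{\mathbf{L}}^m)_{ij}\,\bar{\mathbf{A}}^{\odot m}\odot\bigl(\mathbf{H}_s(j,:,:)\mathbf{W}_s^m\bigr).
\end{equation*}
This uses that $\bar{\mathbf{A}}$ is broadcast across nodes, so the Hadamard factor commutes with left multiplication by $\tilde{\mathbf{L}}^m$ on the node axis. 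Differentiating with respect to $\mathbf{H}_s(j,:,:)$ yields the scalar node gain $(\tilde{\mathbf{L}}^{m})_{ij}$ times the feature--state operator $M\mapsto\bar{\mathbf{A}}^{\odot m}\odot(M\mathbf{W}_s^m)$. In the paper's notation this operator is $\bar{\mathbf{A}}^{\odot m}\odot\mathbf{W}_s^m$, matching $J^{(t-s)}_{ij}$ and reducing to Proposition~\ref{prop:jacobian_one_step} when $m=1$.

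\textbf{Main obstacle.} The delicate point is the unrolled identity itself. Node-axis commutation is harmless, but the memory-axis step is not automatic. Composing two steps gives $\bar{\mathbf{A}}\odot\bigl((\bar{\mathbf{A}}\odot(\cdot)\mathbf{W}_s)\mathbf{W}_s\bigr)$. Pulling $\bar{\mathbf{A}}$ through $\mathbf{W}_s$ requires the Hadamard scaling along the state dimension to commute with $\mathbf{W}_s$, which holds for instance when $\mathbf{W}_s$ is diagonal or $\bar{\mathbf{A}}$ is constant along $N$. I would state the argument under the same convention already adopted in Proposition~\ref{prop:walk_accumulation}. Then I would verify by induction on $m$, via the chain rule $\partial\mathbf{H}_t/\partial\mathbf{H}_s=(\partial\mathbf{H}_t/\partial\mathbf{H}_{t-1})(\partial\mathbf{H}_{t-1}/\partial\mathbf{H}_s)$, that the block product satisfies $\sum_k\tilde{\mathbf{L}}_{ik}(\tilde{\mathbf{L}}^{m-1})_{kj}=(\tilde{\mathbf{L}}^m)_{ij}$ on the node factor. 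The feature--state factors compose to $\bar{\mathbf{A}}^{\odot m}\odot\mathbf{W}_s^m$ under that commutation, which makes the stated formula consistent with Proposition~\ref{prop:jacobian_one_step}.
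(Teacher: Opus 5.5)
Your argument is the paper's proof. Discard the input terms, unroll the homogeneous recurrence to $\bar{\mathbf{A}}^{\odot m}\odot(\tilde{\mathbf{L}}^{m}\mathbf{H}_s\mathbf{W}_s^{m})$ with $m=t-s$, and read off each node block as $(\tilde{\mathbf{L}}^{m})_{ij}$ times the feature--state operator $\mathbf{M}\mapsto\bar{\mathbf{A}}^{\odot m}\odot(\mathbf{M}\mathbf{W}_s^{m})$.

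The obstacle you flag is genuine, and the paper passes over it silently. For each feature $d$, the exact $m$-step map is $\mathbf{H}_s(:,d,:)\mapsto\tilde{\mathbf{L}}^{m}\mathbf{H}_s(:,d,:)(\mathbf{W}_s\boldsymbol{\Lambda}_d)^{m}$ with $\boldsymbol{\Lambda}_d=\operatorname{Diag}(\bar{\mathbf{A}}(d,:))$. This coincides with the stated block $\tilde{\mathbf{L}}^{m}\mathbf{H}_s(:,d,:)\mathbf{W}_s^{m}\boldsymbol{\Lambda}_d^{m}$ when $\mathbf{W}_s$ commutes with every $\boldsymbol{\Lambda}_d$, and for invertible $\mathbf{W}_s$ only then. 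You should therefore impose that commutation as an explicit hypothesis. Do not cite it as a convention of Proposition~\ref{prop:walk_accumulation}, which asserts the unrolled identity unconditionally.
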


\begin{proof}
Ignoring input terms (which do not affect the Jacobian with respect to $\mathbf{H}_s$),
The unrolled update gives
\begin{align}
    \mathbf{H}_t
    =
    \bar{\mathbf{A}}^{\odot (t-s)}
    \odot
    \bigl(
    \tilde{\mathbf{L}}^{\,t-s} \, \mathbf{H}_s \, \mathbf{W}_s^{\,t-s}
    \bigr).
\end{align} 
Differentiating entrywise yields, for each node pair $(i,j)$,
\begin{align}
    \frac{\partial\,\mathbf{H}_t(i,:,:)}{\partial\,\mathbf{H}_s(j,:,:)}
    =
    \bigl(\tilde{\mathbf{L}}^{\,t-s}\bigr)_{ij}
    \,
    \bar{\mathbf{A}}^{\odot (t-s)}
    \odot
    \mathbf{W}_s^{\,t-s},
\end{align}
which establishes the stated block form.
\end{proof}

Next, in Proposition \ref{prop:non_stationary_temporal_dynamics}, which contrasts with stationary graph neural networks or linear SSMs, where past inputs are weighted by fixed powers of a transition operator, \method assigns \emph{input-dependent, time-varying} weights to historical information. The effective memory kernel dynamically adapts to the current graph signal, allowing the model to selectively emphasize or suppress past information in a non-stationary manner.

\begin{proposition}[\textbf{Non–stationary Temporal Dynamics}]
\label{prop:non_stationary_temporal_dynamics}
Consider the \method update
\begin{equation}
    \mathbf{H}_{t+1}
    =
    \bar{\mathbf{A}}_t
    \odot
    \bigl(
    \tilde{\mathbf{L}} \, \mathbf{H}_t \, \mathbf{W}_s
    \bigr)
    +
    \bar{\mathbf{B}}_t
    \odot
    \mathbf{X}_t,
    \qquad
    \mathbf{Y}_t = \mathbf{H}_t \cdot \mathbf{C}_t,
\end{equation}
where
\(
\bar{\mathbf{A}}_t = \exp(\Delta_t \odot \mathbf{A})
\),
\(
\bar{\mathbf{B}}_t = \Delta_t \odot \mathbf{B}_t
\),
and
\(
(\Delta_t, \mathbf{B}_t, \mathbf{C}_t)
= f_\theta(\mathbf{X}_t, \mathcal{G})
\)
depend on the current input.

Then the induced input–output mapping
\(
\{\mathbf{X}_s\}_{s \le t} \mapsto \mathbf{Y}_t
\)
is \emph{non-stationary} in time: there exists no time-invariant operator
\(
\Phi
\)
such that
\(
\mathbf{Y}_t = \Phi(\mathbf{X}_{t-k:t})
\)
for all \(t\) and fixed window size \(k\).
\end{proposition}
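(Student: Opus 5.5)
The plan is to argue by contradiction. I would show that, for any fixed window $k$, the output $\mathbf{Y}_t$ still depends nontrivially on some input $\mathbf{X}_s$ with $s<t-k$. A time-invariant finite-window map $\Phi$ would then have to return two different values on the same argument $\mathbf{X}_{t-k:t}$, which is impossible. The starting point is the unrolled form of Theorem~\ref{thm:ma_main}, taken either with $\mathbf{H}_0=0$ or in the convergent infinite-past version:
\begin{equation*}
\mathbf{Y}_t=\Bigl[\sum_{m\ge 0}\Bigl(\prod_{r=1}^{m}\mathbf{K}_{t-r}\Bigr)\bigl(\bar{\mathbf{B}}_{t-1-m}\odot\mathbf{X}_{t-1-m}\bigr)\Bigr]\cdot\mathbf{C}_t .
\end{equation*}
From this formula, $\mathbf{Y}_t$ is a function of the entire history, with input-dependent weights given by the products of the $\mathbf{K}_r$. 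This holds without any assumption on $k$. What remains is to show that the dependence on the far past does not vanish.

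The key steps, in order, are as follows.

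\begin{enumerate}
\item Fix $k$ and $t>k+2$, and set $s=t-k-2$.
\item Consider two input histories that agree on $\mathbf{X}_{t-k:t}$ but differ at time $s$, replacing $\mathbf{X}_s$ by $\mathbf{X}_s+\varepsilon\mathbf{E}$.
\item Differentiate $\mathbf{Y}_t$ in $\varepsilon$ at $\varepsilon=0$. If this derivative is nonzero, then $\mathbf{Y}_t$ changes for small $\varepsilon$ while $\Phi(\mathbf{X}_{t-k:t})$ does not, which is the contradiction.
\end{enumerate}

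The derivative has two parts:
\begin{enumerate}
\item A direct injection term, of exactly the form in Proposition~\ref{prop:spatiotemporal_jacobian}. Its node-$(i,i)$ block is $(\tilde{\mathbf{L}}^{t-s-1})_{ii}\,\bigl(\bar{\mathbf{A}}^{\odot(t-s-1)}\odot(\bar{\mathbf{B}}_s\mathbf{W}_s^{t-s-1})\bigr)\cdot\mathbf{C}_t$, with $\bar{\mathbf{A}}$ replaced by the time-varying $\bar{\mathbf{A}}_r$.
\item An indirect term. It comes from the dependence of $\Delta_s$ and $\mathbf{B}_s$ on $\mathbf{X}_s$ through $f_\theta$, which changes $\mathbf{K}_s$ and $\bar{\mathbf{B}}_s$.
\end{enumerate}

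For the direct term I would check nonvanishing factor by factor:
\begin{itemize}
\item $\bar{\mathbf{A}}_r=\exp(\Delta_r\odot\mathbf{A})$ has strictly positive entries.
\item $\tilde{\mathbf{L}}$ is entrywise nonnegative and has self-loops, so $(\tilde{\mathbf{L}}^m)_{ii}\ge \tilde{\mathbf{L}}_{ii}^m>0$.
\item The remaining factors $\bar{\mathbf{B}}_s$, $\mathbf{W}_s^m$ and $\mathbf{C}_t$ are nonzero under mild nondegeneracy assumptions: $\mathbf{W}_s$ is not nilpotent, and $\mathbf{B}_s$ and $\mathbf{C}_t$ are not identically zero.
\end{itemize}
These assumptions should be stated as part of the proposition.

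The main obstacle is ruling out cancellation between the direct and indirect terms. In a fully nonlinear $f_\theta$ these could in principle cancel for special parameters. I would first try a genericity argument.

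\begin{enumerate}
\item The derivative at a fixed input history is a real-analytic function of the parameters $(\mathbf{A}_{\log},\mathbf{W}_s,\theta,\mathbf{b}_\Delta)$, since exp, softplus and smooth graph convolutions are all analytic.
\item It is not identically zero. At parameters where $f_\theta$ outputs constants (zero weights, only biases), the indirect term vanishes. The direct term is then the product of nonzero factors above: pick $\mathbf{E}$ supported on a coordinate where $\bar{\mathbf{B}}_s\neq 0$, take $\mathbf{C}_t$ nonzero and $\mathbf{W}_s=\mathbf{I}$.
\item A nonzero analytic function vanishes only on a measure-zero, nowhere-dense set.
\end{enumerate}

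So for generic parameters the dependence on $\mathbf{X}_s$ persists, and no fixed-window time-invariant $\Phi$ can exist. Finally, I would remark that the same expansion also shows the weights on past inputs vary with the current $\mathbf{X}_t$ through $\mathbf{C}_t$ and $\bar{\mathbf{A}}_t$. This is the input-dependent, time-varying memory kernel that the statement contrasts with classical fixed-coefficient models.
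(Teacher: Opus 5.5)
Your outline is sound, but your route differs genuinely from the paper's. The paper also starts from the unrolled recurrence. It then only observes that the effective impulse response $\mathcal{K}_{t,k}=\bigl(\prod_{r=1}^{k}\bar{\mathbf{A}}_{t-r}\bigr)\odot\bar{\mathbf{B}}_{t-1-k}$ depends on the input realization through $f_\theta$, and concludes that $\mathcal{K}_{t,k}\neq\mathcal{K}_{t',k}$ ``in general''. It never shows that $\mathbf{Y}_t$ depends on inputs outside a finite window, which is what the statement asserts, and it imposes no nondegeneracy conditions.

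Your perturbation argument targets the finite-window claim directly. It in fact rules out any window function, time-invariant or not. The analytic-genericity step then turns ``in general'' into an actual theorem. You are right that hypotheses are needed. With $\mathbf{W}_s=\mathbf{0}$ the recurrence gives $\mathbf{Y}_t=(\bar{\mathbf{B}}_{t-1}\odot\mathbf{X}_{t-1})\cdot\mathbf{C}_t$, a time-invariant function of $\mathbf{X}_{t-1:t}$, so the statement fails as written.

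Your route also exposes a conceptual point. Your witness point ($f_\theta$ constant) is an LTI system, so what you prove is infinite memory, a property plain LTI SSMs share. The input-dependence stressed in the paper's proof is a separate fact. By itself it does not imply the finite-window claim, as the same $\mathbf{W}_s=\mathbf{0}$ example shows.

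Several repairs are needed before this is a proof.

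\begin{enumerate}
\item The direct-term formula you import from Proposition~\ref{prop:spatiotemporal_jacobian} is not valid in general. $\bar{\mathbf{A}}_r$ varies along the state axis, and through $\Delta_r\in\mathbb{R}^{V\times D}$ also along the node axis. So the Hadamard modulation commutes neither with right-multiplication by a non-diagonal $\mathbf{W}_s$ nor with $\tilde{\mathbf{L}}$. Consequently $\prod_r\mathbf{K}_{t-r}$ does not split into $(\tilde{\mathbf{L}}^{m})_{ij}$ times a feature--state factor. Confine the factor-by-factor check to the witness point ($\mathbf{W}_s=\mathbf{I}$, node-constant $\Delta$), where the splitting does hold; that is all the genericity step uses.

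\item At the witness point the readout is a sum, $(\tilde{\mathbf{L}}^{m})_{jj}\,\mathbf{E}_{jd}\sum_{n}\bar{\mathbf{A}}_{dn}^{m}\bar{\mathbf{B}}_{dn}\mathbf{C}_{n}$, not a product. Hence ``$\mathbf{C}_t$ nonzero'' is not enough. Choose the biases so that every $\bar{\mathbf{B}}_{dn}\mathbf{C}_n\ge 0$, with at least one strictly positive.

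\item State explicitly what the genericity step uses:
\begin{itemize}
\item analytic activations in $f_\theta$ (no ReLU);
\item biases rich enough to realize the witness point;
\item the finite-history version with $\mathbf{H}_0=\mathbf{0}$, so the derivative is a finite composition of analytic maps.
\end{itemize}
Then take the countable union over $k$ of the resulting null sets.
\end{enumerate}
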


\begin{proof}
We unroll the latent dynamics for $m \ge 1$ steps:
\begin{align}
    \mathbf{H}_t
    =
    \sum_{k=0}^{m-1}
    \Bigg[
    \left(
    \prod_{r=1}^{k}
    \bar{\mathbf{A}}_{t-r}
    \right)
    \odot
    \tilde{\mathbf{L}}^{\,k}
    \odot
    \mathbf{W}_s^{\,k}
    \odot
    \bar{\mathbf{B}}_{t-1-k}
    \odot
    \mathbf{X}_{t-1-k}
    \Bigg]
    +
    \mathcal{R}_m,
\end{align}
where $\mathcal{R}_m$ contains the contribution from the initial state
and products are taken elementwise in the temporal decay terms.

Crucially, the coefficients multiplying each past input $\mathbf{X}_{t-1-k}$
depend explicitly on the sequence
\(
\{\Delta_{t-r}, \mathbf{B}_{t-r}\}_{r \le k}
\),
which are themselves functions of the inputs
\(
\{\mathbf{X}_{t-r}\}_{r \le k}
\)
through the selector network $f_\theta$.

Therefore, the effective impulse response
\(
\mathcal{K}_{t,k}
=
\left(
\prod_{r=1}^{k}
\bar{\mathbf{A}}_{t-r}
\right)
\odot
\bar{\mathbf{B}}_{t-1-k}
\)
depends explicitly on the absolute time index $t$ and the input realization.
Hence,
\[
\mathcal{K}_{t,k} \neq \mathcal{K}_{t',k}
\quad
\text{for } t \neq t'
\]
In general. This violates the defining property of stationarity, which requires
time-translation invariance of the input–output operator.
Thus, the \method update induces a non-stationary dynamical system.
\end{proof}

Finally, we show that \method admits an interpretation as a graph autoregressive model.

\textbf{Graph ARMA Processes.}
Let $\mathcal{G}=(\mathcal{V},\mathcal{E})$ be a graph with $V=|\mathcal{V}|$ nodes and normalized graph operator $\tilde{\mathbf{L}}$. A graph autoregressive moving-average process, ARMA$(p,q)$, for graph signals $\mathbf{X}_t\in\mathbb{R}^{V\times d}$ is commonly written as
\begin{equation}
    \mathbf{X}_t
    =
    \sum_{i=1}^{p} \Phi_i(\tilde{\mathbf{L}})\mathbf{X}_{t-i}
    +
    \sum_{j=0}^{q} \Theta_j(\tilde{\mathbf{L}})\boldsymbol{\varepsilon}_{t-j},
    \label{eq:graph_arma_def}
\end{equation}
where $\Phi_i(\tilde{\mathbf{L}})$ and $\Theta_j(\tilde{\mathbf{L}})$ are graph filters, typically polynomials or rational functions of $\tilde{\mathbf{L}}$, and $\boldsymbol{\varepsilon}_t$ denotes the innovation process. In classical graph ARMA models, these filters are fixed over time. Below, we show that \method induces a time-varying graph ARMA process whose coefficients are generated from the input trajectory.

\begin{theorem}[Adaptive Graph ARMA Representation]
\label{theorem:graph_arma}
Let the discretized \method update be
\begin{equation}
    \mathbf{H}_{t+1}
    =
    \bar{\mathbf{A}}_t \odot
    \bigl(\tilde{\mathbf{L}}\mathbf{H}_t\mathbf{W}_s\bigr)
    +
    \bar{\mathbf{B}}_t \odot \mathbf{X}_t .
    \label{eq:gramo_discrete_app}
\end{equation}
Define the time-dependent graph-coupled transition operator
\begin{equation}
    \mathbf{K}_t
    =
    \bar{\mathbf{A}}_t \odot
    (\tilde{\mathbf{L}}\otimes \mathbf{W}_s).
\end{equation}
Assume $\sup_t \|\mathbf{K}_t\| \leq \rho < 1$ and $\sup_t \|\bar{\mathbf{B}}_t \odot \mathbf{X}_t\| < \infty$. Then the latent state admits the absolutely convergent moving-average representation
\begin{equation}
    \mathbf{H}_t
    =
    \sum_{j=0}^{\infty}
    \left(
    \prod_{r=1}^{j}
    \mathbf{K}_{t-r}
    \right)
    \bigl(\bar{\mathbf{B}}_{t-1-j}\odot \mathbf{X}_{t-1-j}\bigr),
    \label{eq:gramo_ma_app}
\end{equation}
with the convention $\prod_{r=1}^{0}\mathbf{K}_{t-r}=\mathbf{I}$. Equivalently, \method defines a graph ARMA process with time-varying filters
\begin{align}
    \Phi_i^{(t)}(\tilde{\mathbf{L}})
    &=
    \prod_{k=1}^{i}
    \bigl(\bar{\mathbf{A}}_{t-k}\odot(\tilde{\mathbf{L}}\otimes \mathbf{W}_s)\bigr),
    \\
    \Theta_j^{(t)}(\tilde{\mathbf{L}})
    &=
    \left(
    \prod_{k=1}^{j}
    \bigl(\bar{\mathbf{A}}_{t-k}\odot(\tilde{\mathbf{L}}\otimes \mathbf{W}_s)\bigr)
    \right)
    \bar{\mathbf{B}}_{t-j}.
\end{align}
Since $\bar{\mathbf{A}}_t$ and $\bar{\mathbf{B}}_t$ are input-dependent, the induced AR and MA coefficients vary with $t$.
\end{theorem}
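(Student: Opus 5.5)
The plan is to unroll the linear time-varying recurrence backward from time $t$ and pass to the limit using the uniform contraction. Write $\mathbf{u}_s := \operatorname{vec}(\bar{\mathbf{B}}_s\odot\mathbf{X}_s)$ and $\mathbf{h}_s:=\operatorname{vec}(\mathbf{H}_s)$. By Theorem~\ref{thm:input_conditioned_koopman}, the update \eqref{eq:gramo_discrete_app} is exactly $\mathbf{h}_{s+1}=\mathbf{K}_s\mathbf{h}_s+\mathbf{u}_s$. All subsequent manipulations are carried out in this vectorized form, with $\|\cdot\|$ the spectral norm for which $\sup_t\|\mathbf{K}_t\|\le\rho<1$. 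This norm is supplied by Lemma~\ref{lemma:kt_bound} when $c_Ac_s<1$.

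\textbf{Step 1: finite unrolling.} By induction on $m\ge 1$ I will show
\begin{equation*}
\mathbf{h}_t=\Bigl(\prod_{r=1}^{m}\mathbf{K}_{t-r}\Bigr)\mathbf{h}_{t-m}+\sum_{j=0}^{m-1}\Bigl(\prod_{r=1}^{j}\mathbf{K}_{t-r}\Bigr)\mathbf{u}_{t-1-j}.
\end{equation*}
Products are ordered left to right as $\mathbf{K}_{t-1}\mathbf{K}_{t-2}\cdots$. The base case $m=1$ is the recurrence itself. The inductive step substitutes $\mathbf{h}_{t-m}=\mathbf{K}_{t-m-1}\mathbf{h}_{t-m-1}+\mathbf{u}_{t-m-1}$.

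\textbf{Step 2: remainder and absolute convergence.} By submultiplicativity, $\|\prod_{r=1}^{j}\mathbf{K}_{t-r}\|\le\rho^j$. The $j$-th summand is therefore bounded by $\rho^j U$, where $U:=\sup_s\|\mathbf{u}_s\|<\infty$. Hence the series converges absolutely, with total norm at most $U/(1-\rho)$. The remainder satisfies $\|(\prod_{r=1}^m\mathbf{K}_{t-r})\mathbf{h}_{t-m}\|\le\rho^m\|\mathbf{h}_{t-m}\|$.

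To make this vanish as $m\to\infty$, I need $\sup_s\|\mathbf{h}_s\|<\infty$. I will interpret the recursion as running over all integer times, or as started at $-\infty$, and take the bounded solution. Boundedness of that solution follows from the same geometric estimate, $\|\mathbf{h}_{s+1}\|\le\rho\|\mathbf{h}_s\|+U$, which gives $\limsup\|\mathbf{h}_s\|\le U/(1-\rho)$.

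If the recursion is instead started at a finite time with $\mathbf{H}_0=0$, the same identity holds with the sum truncated at $j=t-1$. Setting $\mathbf{u}_s=0$ for $s<0$ makes it the displayed infinite series. This is the point I expect to need care on: the statement implicitly presumes either a bi-infinite bounded trajectory or this zero-padding convention, and I will state the assumption explicitly. Under either reading, letting $m\to\infty$ yields \eqref{eq:gramo_ma_app}, and the bounded solution is unique because two bounded solutions differ by a term bounded by $\rho^m\cdot$const.

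\textbf{Step 3: ARMA identification.} Reading off coefficients gives the filters. The state $\mathbf{H}_t$ is a moving average of the driven inputs $\bar{\mathbf{B}}_{s}\odot\mathbf{X}_{s}$, with kernel $\Theta^{(t)}_j=\prod_{k=1}^{j}\mathbf{K}_{t-k}\,\bar{\mathbf{B}}_{t-j}$ (up to the index shift in the statement). The finite-$m$ form from Step 1 is the AR part, with $\Phi^{(t)}_i=\prod_{k=1}^{i}\mathbf{K}_{t-k}$ acting on the past state. Each $\mathbf{K}_s$ is built from $\tilde{\mathbf{L}}\otimes\mathbf{W}_s$, so these are graph filters in $\tilde{\mathbf{L}}$. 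Because $\bar{\mathbf{A}}_s$ and $\bar{\mathbf{B}}_s$ are produced by $f_\theta(\mathbf{X}_s,\mathcal{G})$, the coefficients vary with $t$, as in Proposition~\ref{prop:non_stationary_temporal_dynamics}.
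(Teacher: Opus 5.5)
Your proposal is correct and follows the paper's proof essentially step for step: vectorize to $\mathbf{h}_{s+1}=\mathbf{K}_s\mathbf{h}_s+\mathbf{u}_s$, unroll $m$ steps, bound each product by $\rho^j$ via submultiplicativity, let $m\to\infty$, and read off the AR/MA filters (up to the off-by-one in $\bar{\mathbf{B}}_{t-j}$ that you rightly flag). Your explicit handling of the remainder $\bigl(\prod_{r=1}^{m}\mathbf{K}_{t-r}\bigr)\mathbf{h}_{t-m}$ fills a step the paper skips, since it only notes $\rho^m\to 0$ without controlling $\mathbf{H}_{t-m}$; you do this either through the unique bounded bi-infinite solution, whose existence comes from the series itself rather than from the forward estimate, or through $\mathbf{H}_0=\mathbf{0}$ with zero-padding as in Algorithm~\ref{alg:gramo_block}.
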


\begin{proof}
For notational clarity, vectorize $\mathbf{H}_t$ over the node, feature, and memory dimensions. Then Equation~\eqref{eq:gramo_discrete_app} can be written as the linear non-homogeneous recurrence
\begin{equation}
    \mathbf{H}_{t+1}
    =
    \mathbf{K}_t\mathbf{H}_t
    +
    \mathbf{U}_t,
    \qquad
    \mathbf{U}_t := \bar{\mathbf{B}}_t\odot \mathbf{X}_t .
\end{equation}
Unrolling this recurrence for $m$ steps gives
\begin{equation}
    \mathbf{H}_t
    =
    \left(
    \prod_{r=1}^{m}
    \mathbf{K}_{t-r}
    \right)\mathbf{H}_{t-m}
    +
    \sum_{j=0}^{m-1}
    \left(
    \prod_{r=1}^{j}
    \mathbf{K}_{t-r}
    \right)\mathbf{U}_{t-1-j}.
    \label{eq:finite_unroll}
\end{equation}
By assumption, $\|\mathbf{K}_t\|\leq \rho<1$ for all $t$. Hence,
\begin{equation}
    \left\|
    \prod_{r=1}^{m}\mathbf{K}_{t-r}
    \right\|
    \leq
    \rho^m
    \rightarrow 0
    \quad \text{as } m\rightarrow\infty .
\end{equation}
Moreover, since $\sup_t\|\mathbf{U}_t\|<\infty$, the series in~\eqref{eq:finite_unroll} is absolutely convergent:
\begin{equation}
    \sum_{j=0}^{\infty}
    \left\|
    \left(
    \prod_{r=1}^{j}\mathbf{K}_{t-r}
    \right)\mathbf{U}_{t-1-j}
    \right\|
    \leq
    \sup_s\|\mathbf{U}_s\|
    \sum_{j=0}^{\infty}\rho^j
    <\infty.
\end{equation}
Taking $m\to\infty$ in~\eqref{eq:finite_unroll} yields~\eqref{eq:gramo_ma_app}.

Finally, each product of $\mathbf{K}_{t-r}$ contains repeated graph-filtered transitions through $\tilde{\mathbf{L}}$, giving the autoregressive graph component, while the injected terms $\mathbf{U}_{t-1-j}$ form the moving-average component. Since $\bar{\mathbf{A}}_t$, $\bar{\mathbf{B}}_t$, and hence $\mathbf{K}_t$ depend on the current graph state, the resulting graph ARMA coefficients are time-dependent.
\end{proof}

\paragraph{Interpretation.}
Theorem~\ref{theorem:graph_arma} shows that \method can be interpreted as an adaptive graph ARMA model in latent space. Repeated graph-coupled transitions form the autoregressive component, while past input injections are accumulated through graph-filtered moving-average terms. Unlike classical graph ARMA models with fixed filters, \method learns input-dependent filters, enabling adaptive modeling of time-varying graph dynamics while preserving the graph structure.

\paragraph{\method vs. GraphMamba~\citep{behrouz2024graph}.}
GraphMamba adapts state-space models to graph data by serializing node features and processing them as sequences. While this enables efficient sequence modeling, the resulting dynamics depend on an artificial node ordering and do not directly preserve the interaction topology. In contrast, \method is designed for particle-based dynamical systems by coupling state-space evolution with permutation-equivariant graph message passing inside the latent update. Rather than treating nodes as tokens in a sequence, \method propagates latent states through the graph at each time step, aligning the model with the underlying physical interactions. This topology-preserving design enables joint modeling of long-range spatial dependencies and adaptive temporal dynamics, admits a graph ARMA interpretation, and makes \method better suited for graph-structured physical systems.

\subsection{Computational Complexity}
Table~\ref{tab:complexity} shows the computational costs corresponding to the per-layer computational complexity of a forward pass. Classical GNNs such as GCN and EGNN incur linear spatial costs in $E$, while attention-based SE(3)-Transformers introduce quadratic dependence on $D$. Diffusion-based models additionally incur a factor proportional to the number of denoising steps. In contrast, \method combines sparse graph propagation with linear-time state-space updates, resulting in complexity that scales linearly with the trajectory length and graph size, while avoiding quadratic feature interactions.

\textbf{Efficiency.} \method is substantially more efficient than probabilistic trajectory models such as GeoTDM \cite{han2024geometric} and SVAE \cite{xu2022socialvae}, while achieving comparable efficiency to deterministic trajectory prediction baselines such as EqMotion~\cite{xu2023eqmotion}. Compared to simpler equivariant models like EGNN, \method incurs a modest increase in computational cost, reflecting the additional modeling capacity required for long-range and non-stationary dynamics.

\begin{table}[t]
    \centering
    \caption{\textbf{Computational Complexity Comparison} per layer. $V, E, T, D, K$ and $N$ denote the number of nodes, edges, time steps, hidden dimension, diffusion steps and SSM state size, respectively.}
    \label{tab:complexity}
    \small
    \renewcommand{\arraystretch}{1.2}
    \setlength{\tabcolsep}{8pt}
    \begin{tabular}{l lll}
    \toprule
    \textbf{Method} & \textbf{Spatial Cost} & \textbf{Temporal Cost} & \textbf{Total Complexity} \\
    \midrule
    GCN & $\mathcal{O}(ED)$ & -- & $\mathcal{O}(ED)$ \\
    EGNN & $\mathcal{O}(ED)$ & $\mathcal{O}(TD)$ & $\mathcal{O}(T(ED + D))$ \\
    SE(3)-Tr. & $\mathcal{O}(ED^2)$ & $\mathcal{O}(TD^2)$ & $\mathcal{O}(T(ED^2))$ \\
    GeoTDM & $\mathcal{O}(ED)$ & $\mathcal{O}(K T E D)$ & $\mathcal{O}(K T E D)$ \\
    GraphMamba & -- & $\mathcal{O}(TVDN)$ & $\mathcal{O}(TVDN)$ \\
    \midrule
    \textbf{\method (Ours)} & $\mathcal{O}(TEN)$ & $\mathcal{O}(TVDN)$ & $\mathcal{O}(T(E + VD)N)$ \\
    \bottomrule
    \end{tabular}
\end{table}

\section{Benchmark Details}
This section provides comprehensive details for the benchmarks used in our evaluation, including dataset characteristics, training, validation, and test splits, and the graph construction process.

\textbf{N-Body Simulation.} The N-body dataset~\citep{kipf2018neural,brandstetter2021geometric,satorras2021n} consists of simulated trajectories of interacting particles under various physical forces, including charged-particle systems, spring dynamics, and gravitational interactions. Each system contains a small number of particles (typically $N=5$ for charged and spring systems, and $N=10$ for gravity) evolving according to known physical laws. Node features include particle positions and velocities, while interactions are defined by fully connected graphs, with edge attributes encoding relative positional information. 

For the charged system, particles carry charges randomly chosen from $\{+1,-1\}$ and interact via Coulomb forces. In the spring system, particles with random masses are connected with springs sampled with probability $0.5$, following Hooke’s law. In the gravity system, particles evolve under pairwise gravitational forces with random initial conditions. For all three settings, we use $3000/600/600$ trajectories for train/validation/test splits, respectively. Each trajectory consists of multiple time steps, where we condition on the first $10$ frames and predict the next $20$.

\textbf{Robotics Simulation.} The Robotics dataset~\citep{li2019learning} consists of simulated trajectories from three control environments involving rope manipulation and soft-body deformation, designed to evaluate the ability of learned models to capture contact-rich and deformable dynamics. Each environment contains a small number of articulated nodes that evolve under internal forces, external actuation, and (in one setting) fluid coupling. Node features include 2D positions and velocities, while interactions are defined by fully connected graphs over the constituent elements, with edge attributes encoding relative positional information.

For the \emph{Rope} environment, the top mass of a rope is actuated horizontally along a fixed-height trajectory, while the remaining masses evolve freely under internal spring-like forces and gravity. Each mass is treated as a node with 2D position and velocity, yielding a $4$-dimensional observation per node and a $4N$-dimensional state space for a rope of $N$ masses. In the \emph{Soft} environment, a soft robot composed of deformable quadrilateral blocks is actuated by embedded contractile elements, with one block anchored to the ground as a fixed boundary condition. Each quadrilateral is treated as a node, with the positions and velocities of its four corners forming a $16$-dimensional observation per node, yielding a $16N$-dimensional state space for a robot of $N$ quadrilaterals. The \emph{Swim} environment uses the same soft-robot construction as \emph{Soft}, but removes the ground anchor, allowing the robot to evolve under fluid-coupling forces; the observation structure is identical to \emph{Soft}. For all three settings, we use a $70/15/15$ train/validation/test split. Each trajectory consists of multiple time steps, where we condition on the first $10$ frames and predict the next $20$.

\textbf{CMU Motion Capture.}
The CMU Motion Capture dataset \citep{cmu2003motion} provides 3D trajectories of human actions.  
We evaluate on two subsets—Subject \#35 (\texttt{Walk}) and Subject \#9 (\texttt{Run})—adopting the data splits and preprocessing of \citet{huang2022equivariant,han2022equivariant}.  
Subject \#35 uses 22/12/12 trajectories for train / val / test, and Subject \#9 uses a 5/4/2 split.  
Each sample is represented as a skeletal graph with 31 nodes (joint locations) and edges encoding the connecting bones. 

\textbf{Molecular Dynamics.}
The MD17 dataset~\citep{chmiela2017machine} contains molecular-dynamics trajectories for eight small molecules. Node features comprise the atom type concatenated with the speed \(\|\mathbf{v}\|_2\). Unlike common practice, hydrogen atoms are not removed, leaving only heavy-atom dynamics to be modeled. For the graph topology, we augment the native molecular graph by adding 2-hop edges as in prior work~\citep{shi2021learning,xu2022geodiff}; edge attributes are formed by concatenating the hop type, the atomic types of the two endpoints, and the chemical bond type. Furthermore, Table \ref{tab:md17_summary} provides complete summary statistics for each molecular structure, and for training, we explicitly used a 2:4:4 split for train/val/test.

\begin{table}[t]
    \centering
    \caption{\textbf{MD17 Dataset Summary Statistics.} Detailed number of atoms, position extrema and mean (\si{\angstrom}), and velocity extrema and mean (\si{\angstrom\per\pico\second}) for all eight molecules.}
    \label{tab:md17_summary}
    \small
    \sisetup{
      table-number-alignment=center,
      round-mode=places,
      round-precision=3
    }
    \renewcommand{\arraystretch}{1.2}
    \setlength{\tabcolsep}{4pt}
    \begin{tabular}{lc S[table-format=-3.3] S[table-format=3.3] S[table-format=-2.3] S[table-format=-1.3] S[table-format=1.3] S[table-format=1.3]}
    \toprule
    & & \multicolumn{3}{c}{\textbf{Position} (\si{\angstrom})} & \multicolumn{3}{c}{\textbf{Velocity} (\si{\angstrom\per\pico\second})} \\
    \cmidrule(lr){3-5} \cmidrule(lr){6-8}
    \textbf{Molecule} & \textbf{Atoms} & {$X_{\min}$} & {$X_{\max}$} & {$X_{\text{mean}}$} & {$V_{\min}$} & {$V_{\max}$} & {$V_{\text{mean}}$} \\
    \midrule
    Benzene       &  12 & -178.112 & 197.981 & -27.737 & -0.004 &  0.003 &  0.000 \\
    Aspirin       &  21 &   -3.720 &   3.105 &   0.026 & -0.011 &  0.012 &  0.000 \\
    Ethanol       &   9 &   -1.398 &   1.417 &  -0.004 & -0.011 &  0.010 &  0.000 \\
    Malonaldehyde &   9 &   -2.397 &   2.370 &   0.000 & -0.010 &  0.009 &  0.000 \\
    Naphthalene   &  18 &   -2.597 &   2.593 &   0.000 & -0.012 &  0.011 &  0.000 \\
    Salicylic     &  16 &   -2.734 &   2.581 &  -0.051 & -0.013 &  0.012 &  0.000 \\
    Toluene       &  15 &   -1.990 &   2.630 &  -0.015 & -0.010 &  0.012 &  0.000 \\
    Uracil        &  12 &   -2.338 &   2.558 &   0.012 & -0.012 &  0.011 &  0.000 \\
    \bottomrule
    \end{tabular}
\end{table} 

\textbf{Protein.}
We utilize the preprocessed AdK equilibrium trajectories provided by \citet{han2022equivariant}, which originate from the AdK dataset of \citet{seyler5108170molecular} and are made available via the MDAnalysis toolkit \citep{oliver_beckstein-proc-scipy-2016}. The simulations employ the CHARMM27 force field \citep{mackerell2000development} with explicit solvent and ions under NPT conditions at \(300\)~K and \(1\)~bar. Trajectory frames were recorded every \(240\)~ps, yielding a total simulated time of \(1.004\,\mu\mathrm{s}\). 

\section{Implementation Details}
\label{app_sec:implementation_details}
This section provides the implementation details of the proposed method to ensure clarity and reproducibility. We describe the model architecture, graph construction, input and output representations, training setup, hyperparameter choices, and evaluation protocol used in our experiments. Unless otherwise specified, the same implementation settings are used across datasets, with dataset-specific details reported separately.

\subsection{Training Details}
All models are trained using the Adam optimizer~\citep{kingma2014adam} along with the StepLR scheduler. Experiments are carried out on a Linux system running Ubuntu 20.04.3LTS, equipped with an Intel(R) Core(TM) i9-10900X CPU and a single NVIDIA RTX A6000 GPU with 48 GB memory.

\subsection{Hyperparameter Details}
Table~\ref{tab:hyperparameters} summarizes the hyperparameter settings used across the four primary benchmarks (N-body, Robotics, MoCap, MD17) and the auxiliary AdK protein dataset. While the overall architecture remains consistent, key training parameters such as batch size, learning rate, and number of layers are adjusted to account for differences in dataset scale, trajectory length, and dynamical complexity. For the N-body and Robotics benchmarks, moderate-depth models with batch sizes of $64$--$100$ are sufficient given the small per-trajectory node counts and well-defined interaction laws. For MoCap, smaller batches and lower learning rates accommodate the longer articulated trajectories and higher variance across motion types. MD17 uses the largest batch size and learning rate, reflecting the abundance of training samples and the need to model fast molecular vibrations. Smaller batches and conservative learning rates are reserved for AdK, where limited data requires stable optimization to prevent overfitting.

\begin{table}[h]
    \centering
    \caption{Hyperparameter configurations for each dataset.}
    \label{tab:hyperparameters}
    \begin{tabular}{lccccc}
    \toprule
    \textbf{Hyperparameter} & \textbf{N-body} & \textbf{Robotics} & \textbf{MoCap} & \textbf{MD17} & \textbf{AdK Protein} \\ 
    \midrule
    Epochs              & 1000              & 1000              & 1000              & 1000              & 100 \\
    Batch Size          & 100               & 64                & 12                & 100               & 4 \\
    Learning Rate       & $5 \times 10^{-4}$ & $1 \times 10^{-4}$ & $5 \times 10^{-4}$ & $5 \times 10^{-3}$ & $5 \times 10^{-5}$ \\
    Time Embedding      & 32                & 16                & 32                & 32                & 32 \\
    Weight Decay        & $1 \times 10^{-12}$ & $1 \times 10^{-12}$ & $1 \times 10^{-12}$ & $1 \times 10^{-12}$ & $1 \times 10^{-4}$ \\
    Number of Layers    & 4                 & 4                 & 6                 & 6                 & 4 \\
    Hidden Dimension    & 64                & 64                & 16                & 64                & 64 \\
    \bottomrule
    \end{tabular}
\end{table}

\textbf{Time Embedding.} To incorporate temporal information, \method augments the input features with explicit encoding of the time index for each structure in the trajectory. Specifically, we construct a set of sinusoidal functions at varying frequencies to generate fixed-time embeddings. For a timestep $\Delta t_i$, the embedding is given as follows:
\begin{equation}
\begin{aligned}
    \text{emb}_{2j}   &= \sin \!\left( \frac{i}{10000^{2j/d_{\text{emb}}}} \right), \\
    \text{emb}_{2j+1} &= \cos \!\left( \frac{i}{10000^{2j/d_{\text{emb}}}} \right),
\end{aligned}
\end{equation}
where $d_{\text{emb}}$ is the dimensionality of the time embedding space.

To ensure the model is aware of the timestep information required for future trajectory prediction, we incorporate \emph{temporal embeddings}. This practice is widely adopted across domains—for example, positional encodings in large language models~\citep{vaswani2017attention} and timestep embeddings in diffusion models~\citep{ho2020denoising}.  

\textbf{Bidirectional Temporal Modeling.} To capture both past and future context along the trajectory, we extend Equation~\eqref{eq:gramo_discrete} to a bidirectional formulation. For a trajectory $\{\mathbf{X}_{t}\}_{t=1}^{T}$, we apply the coupled update in both forward ($t = 1 \to T$) and backward ($t = T \to 1$) directions, maintaining separate latent states with independent parameters. The final output sums the contributions from both directions. This bidirectional processing is inspired by bidirectional integration schemes in dynamical systems, allowing the state to incorporate both historical context and future constraints, which is particularly beneficial for noisy or partially observed trajectories.

\subsection{Baselines Details}
For all baseline comparisons, we use the official code bases released by the respective authors and follow their recommended hyperparameter settings. The baselines span three categories: frame-to-frame predictors, deterministic trajectory models, and probabilistic trajectory models, with the latter two receiving primary emphasis. To ensure a fair comparison with our multi-step input formulation on the Motion Capture benchmark, all baselines are adapted to consume the full historical trajectory through a linear encoder, following~\citet{yuan2025non}. For hyperparameter tuning, we adopt the protocol of~\citet{han2022equivariant}: a random search over the number of layers $\{4, 6, 8\}$, hidden dimensions $\{16, 32, 64\}$, learning rates $\{5\!\times\!10^{-3},\, 1\!\times\!10^{-3},\, 5\!\times\!10^{-4},\, 1\!\times\!10^{-4}\}$, and batch sizes $\{32, 64, 128, 256\}$, retaining the configuration with the best validation performance. All reported results are averaged over five independent runs.

\subsection{\method Block: Algorithm}
Algorithm~\ref{alg:gramo_block} summarizes the computation performed by a single \method block. At each time step, an equivariant graph convolution generates input-dependent state-space parameters that adapt the temporal dynamics to the current graph state. The latent state is then propagated through a discretized state-space update coupled with graph-based spatial mixing, optionally in both forward and backward time directions. A gated output projection produces the final node representations, enabling expressive spatiotemporal modeling while preserving permutation equivariance and stability.

\begin{algorithm}[h]
    \caption{\textbf{Graph Mamba Operator (\method) Block}}
    \label{alg:gramo_block}
    \begin{spacing}{1.1}
    \begin{algorithmic}[1]
    \REQUIRE Input Node Features $\{\mathbf{X}_t\}_{t=1}^{T} \in \mathbb{R}^{BN \times T \times D}$, Graph $\mathcal{G}=(\mathcal{V},\mathcal{E})$, Adj. $\tilde{\mathbf{L}}$
    \ENSURE Output Node Features $\{\mathbf{Y}_t\}_{t=1}^{T} \in \mathbb{R}^{BN \times T \times D}$
    
    \STATE Project input (Equivariant): $[\mathbf{z}, \mathbf{u}, \mathbf{B}, \mathbf{C}, \Delta] \leftarrow \mathrm{Split}(\mathrm{GCL}(\{\mathbf{X}_t\}, \mathcal{G}))$
    \STATE Compute system matrix: $\mathbf{A} \leftarrow -\exp(\mathbf{A}_{\log})$
    \STATE Apply local temporal mixing: $\tilde{\mathbf{u}} \leftarrow \mathrm{Act}(\mathrm{Conv1d}(\mathbf{u}))$
    \STATE Discretize parameters: $\bar{\mathbf{A}}, \bar{\mathbf{B}} \leftarrow \mathrm{Discretize}(\Delta, \mathbf{A}, \mathbf{B})$
    
    \STATE \textbf{\# \method Discrete State Update Equations:}
    \STATE Initialize latent states: $\mathbf{H}_0^{\mathrm{fw}} \leftarrow \mathbf{0}, \mathbf{H}_{T+1}^{\mathrm{bw}} \leftarrow \mathbf{0}$
    \FOR{{$t \leftarrow 1$ \text{ to } $T$}}
        \STATE $\mathbf{H}_t^{\mathrm{fw}} \leftarrow \bar{\mathbf{A}}_t \odot \bigl(\tilde{\mathbf{L}}\mathbf{H}_{t-1}^{\mathrm{fw}}\mathbf{W}_s\bigr) + \bar{\mathbf{B}}_t \odot \tilde{\mathbf{u}}_t$
    \ENDFOR
    
    \IF{bidirectional}
        \STATE Compute backward states $\{\mathbf{H}_t^{\mathrm{bw}}\}$ via reverse spatial update
    \ENDIF
    
    \STATE Combine pathways: $\mathbf{H} \leftarrow \mathrm{Merge}(\mathbf{H}_{\mathrm{fw}}, \mathbf{H}_{\mathrm{bw}}) + \mathbf{u}_{\mathrm{orig}}\cdot\mathrm{linear}(\mathbf{u}_{\mathrm{orig}})$
    \STATE Apply gated projection: $\mathbf{y} \leftarrow \mathrm{Norm}(\mathbf{H} \odot \mathbf{C}, \mathbf{z})$
    \STATE Output: $\mathbf{Y} \leftarrow \mathrm{Output\_projection}(\mathbf{y})$
    \end{algorithmic}
    \end{spacing}
\end{algorithm}

\section{Additional Experimental Results}
In this section, we provide additional experiments on MD17 and protein dynamics, along with ablation studies of \method, to further evaluate the effectiveness of the proposed approach.

\subsection{Molecular Dynamics}
\textbf{Experimental Setup.}
We evaluate \method on the MD17~\citep{chmiela2017machine} dataset, which comprises molecular dynamics trajectories from DFT simulations of eight molecular systems. The molecules range in size from 9 atoms (Ethanol and Malonaldehyde) to 21 atoms (Aspirin). Following established experimental setups~\citep{huang2022equivariant}, each molecular trajectory is randomly split into 5000 training, 1000 validation, and 1000 test samples, using uniform temporal sampling. In contrast to some previous studies~\citep{xu2023eqmotion}, hydrogen atoms are explicitly included in the molecular graphs. Since hydrogen atoms dominate high-frequency vibrational modes, their inclusion leads to a more challenging and realistic prediction setting. Atom-level features are encoded using one-hot representations of atomic numbers~\citep{schutt2021equivariant}. The molecular graph is constructed by connecting atoms within three hops in the chemical bond graph~\citep{shi2021learning}. For each trajectory, we use the past 10 frames as input and predict the next 20 frames.

\textbf{Results.}
Tables~\ref{tab:md17_amse} and~\ref{tab:md17_fmse} report trajectory and final-state prediction errors on the eight MD17 molecular systems. Averaged across all systems, \method achieves the lowest error on both metrics: AMSE drops from $0.71$ (NS-GNN) to $0.58$, an $18.3\%$ reduction, and FMSE drops from $1.44$ (GeoTDM) to $1.01$, a $29.9\%$ reduction. The relative gain is larger on FMSE than on AMSE, indicating that \method's advantage compounds at longer prediction horizons where iterative-rollout baselines accumulate error. Per-molecule trends reinforce this picture: \method takes the best score on six of the eight molecules in both AMSE and FMSE — including the two largest molecules, Aspirin (21 atoms) and Naphthalene (18 atoms) — while underperforming on Ethanol and Malonaldehyde, the two smallest systems with simpler near-harmonic dynamics that benefit less from adaptive latent state evolution. The pattern is consistent with the design of \method: graph-coupled latent dynamics with input-dependent operators provide the most benefit on larger molecules where long-range memory and many-body coupling dominate, and yield smaller margins on small, well-structured systems where simpler equivariant baselines already capture most of the dynamics. Trajectory-level baselines such as GeoTDM, EqMotion, and NS-GNN consistently outperform frame-to-frame predictors (TFN, EGNN, SE(3)-Tr.), confirming that mitigating roll-out error accumulation is the dominant axis of improvement on this benchmark.

\begin{table*}[t]
    \centering
    \caption{\textbf{Evaluation on the MD17 dataset.} AMSE ($\times 10^{-2}$) for all eight molecules. \one{First} and \two{Second} denote the best and second-best results. Lower value denotes better performance.}
    \label{tab:md17_amse}
    \small
    \resizebox{\textwidth}{!}{
    \begin{tabular}{lcccccccc|c}
    \toprule
    \textbf{Method} & \textbf{Aspirin} & \textbf{Benzene} & \textbf{Ethanol} & \textbf{Malonaldehyde} & \textbf{Naphthalene} & \textbf{Salicylic} & \textbf{Toluene} & \textbf{Uracil} & \textbf{Average} \\
    \midrule
    RF 
    & $4.82_{\pm 0.48}$ & $0.76_{\pm 0.08}$ & $7.35_{\pm 0.74}$ & $4.63_{\pm 0.46}$
    & $1.48_{\pm 0.15}$ & $3.58_{\pm 0.36}$ & $2.08_{\pm 0.21}$ & $3.00_{\pm 0.30}$ & $3.46$ \\
    TFN 
    & $0.93_{\pm 0.09}$ & \two{$0.03_{\pm 0.00}$} & $2.12_{\pm 0.21}$ & $1.78_{\pm 0.18}$
    & $0.27_{\pm 0.03}$ & $0.69_{\pm 0.07}$ & \two{$0.43_{\pm 0.04}$} & $0.43_{\pm 0.04}$ & $0.84$ \\
    SE(3)-Tr. 
    & $4.54_{\pm 0.45}$ & $0.04_{\pm 0.00}$ & $1.86_{\pm 0.19}$ & $2.40_{\pm 0.24}$
    & $0.25_{\pm 0.03}$ & $1.88_{\pm 0.19}$ & $0.61_{\pm 0.06}$ & $0.60_{\pm 0.06}$ & $1.52$ \\
    EGNN 
    & $3.74_{\pm 0.37}$ & $0.05_{\pm 0.00}$ & $3.77_{\pm 0.38}$ & $8.12_{\pm 0.81}$
    & $0.47_{\pm 0.05}$ & $1.33_{\pm 0.13}$ & $2.25_{\pm 0.23}$ & $1.25_{\pm 0.13}$ & $2.62$ \\
    EqMotion 
    & $1.80_{\pm 0.18}$ & $0.04_{\pm 0.00}$ & \one{$1.21_{\pm 0.12}$} & \two{$1.26_{\pm 0.13}$}
    & $0.28_{\pm 0.03}$ & $0.64_{\pm 0.06}$ & $0.49_{\pm 0.05}$ & \two{$0.41_{\pm 0.04}$} & $0.77$ \\
    SVAE 
    & $4.76_{\pm 0.48}$ & $0.68_{\pm 0.07}$ & $7.86_{\pm 0.79}$ & $4.32_{\pm 0.43}$
    & $0.81_{\pm 0.08}$ & $0.78_{\pm 0.08}$ & $1.10_{\pm 0.11}$ & $1.10_{\pm 0.11}$ & $2.68$ \\
    GraphMamba 
    & $2.73_{\pm 0.27}$ & $0.05_{\pm 0.00}$ & $3.15_{\pm 0.48}$ & $6.12_{\pm 0.71}$
    & $0.57_{\pm 0.06}$ & $1.23_{\pm 0.23}$ & $1.85_{\pm 0.13}$ & $1.34_{\pm 0.24}$ & $2.13$ \\
    GeoTDM 
    & \two{$0.86_{\pm 0.06}$} & $0.04_{\pm 0.00}$ & $1.69_{\pm 0.07}$ & $1.40_{\pm 0.06}$
    & \two{$0.21_{\pm 0.02}$} & \two{$0.36_{\pm 0.04}$} & $0.76_{\pm 0.04}$ & $0.59_{\pm 0.03}$ & $0.74$ \\
    NS-GNN 
    & $0.93_{\pm 0.02}$ & $0.04_{\pm 0.00}$ & \two{$1.32_{\pm 0.04}$} & \one{$1.08_{\pm 0.07}$} 
    & $0.36_{\pm 0.05}$ & $0.61_{\pm 0.03}$ & $0.91_{\pm 0.02}$ & $0.42_{\pm 0.01}$ & \two{$0.71$} \\
    \midrule
    \method 
    & \one{$0.51_{\pm 0.02}$} & \one{$0.02_{\pm 0.00}$} & $1.53_{\pm 0.02}$ & $1.76_{\pm 0.23}$ 
    & \one{$0.18_{\pm 0.00}$} & \one{$0.19_{\pm 0.02}$} & \one{$0.26_{\pm 0.05}$} & \one{$0.16_{\pm 0.07}$} & \one{$0.58$} \\
    \bottomrule
    \end{tabular}}
\end{table*}

\begin{table*}[t]
    \centering
    \vspace{-0.2cm}
    \caption{\textbf{Evaluation on the MD17 dataset.} FMSE ($\times 10^{-2}$) for all eight molecules. \one{First} and \two{Second} denote the best and second-best results. Lower value denotes better performance.}
    \label{tab:md17_fmse}
    \vspace{-0.2cm}
    \small
    \resizebox{\textwidth}{!}{
    \begin{tabular}{lcccccccc|c}
    \toprule
    \textbf{Method} & \textbf{Aspirin} & \textbf{Benzene} & \textbf{Ethanol} & \textbf{Malonaldehyde} & \textbf{Naphthalene} & \textbf{Salicylic} & \textbf{Toluene} & \textbf{Uracil} & \textbf{Average} \\
    \midrule
    RF 
    & $10.26_{\pm 1.03}$ & $3.95_{\pm 0.40}$ & $13.92_{\pm 1.39}$ & $10.83_{\pm 1.08}$
    & $1.80_{\pm 0.18}$ & $6.18_{\pm 0.62}$ & $3.26_{\pm 0.33}$ & $3.89_{\pm 0.39}$ & $6.76$ \\
    TFN 
    & $3.77_{\pm 0.38}$ & $0.13_{\pm 0.01}$ & $9.00_{\pm 0.90}$ & $7.82_{\pm 0.78}$
    & $0.50_{\pm 0.05}$ & $2.61_{\pm 0.26}$ & $1.18_{\pm 0.12}$ & $1.33_{\pm 0.13}$ & $3.29$ \\
    SE(3)-Tr. 
    & $16.24_{\pm 1.62}$ & $0.16_{\pm 0.02}$ & $6.77_{\pm 0.68}$ & $10.93_{\pm 1.09}$
    & $0.56_{\pm 0.06}$ & $5.11_{\pm 0.51}$ & $1.78_{\pm 0.18}$ & $2.02_{\pm 0.20}$ & $5.45$ \\
    EGNN 
    & $16.70_{\pm 1.67}$ & $0.09_{\pm 0.01}$ & $8.44_{\pm 0.84}$ & $48.21_{\pm 4.82}$
    & $0.93_{\pm 0.09}$ & $6.36_{\pm 0.64}$ & $4.54_{\pm 0.45}$ & $4.18_{\pm 0.42}$ & $11.18$ \\
    EqMotion 
    & $3.18_{\pm 0.32}$ & $0.10_{\pm 0.01}$ & $3.20_{\pm 0.32}$ & $3.26_{\pm 0.33}$
    & $0.44_{\pm 0.04}$ & $1.20_{\pm 0.12}$ & $0.87_{\pm 0.09}$ & $0.71_{\pm 0.07}$ & $1.62$ \\
    SVAE 
    & $9.63_{\pm 0.96}$ & $0.93_{\pm 0.09}$ & $13.40_{\pm 1.34}$ & $9.71_{\pm 0.97}$
    & $0.96_{\pm 0.10}$ & $1.06_{\pm 0.11}$ & $1.54_{\pm 0.15}$ & $1.28_{\pm 0.13}$ & $4.81$ \\
    GraphMamba 
    & $13.60_{\pm 2.68}$ & \two{$0.08_{\pm 0.02}$} & $6.44_{\pm 0.64}$ & $38.11_{\pm 3.83}$
    & $0.92_{\pm 0.09}$ & $5.87_{\pm 0.74}$ & $5.54_{\pm 0.35}$ & $5.18_{\pm 0.63}$ & $9.47$ \\
    GeoTDM 
    & \two{$1.96_{\pm 0.20}$} & \two{$0.08_{\pm 0.01}$} & $3.29_{\pm 0.23}$ & \two{$2.63_{\pm 0.16}$}
    & \two{$0.40_{\pm 0.04}$} & \two{$0.76_{\pm 0.08}$} & \two{$1.77_{\pm 0.08}$} & $0.62_{\pm 0.05}$ & \two{$1.44$} \\
    NS-GNN 
    & $2.06_{\pm 0.16}$ & $0.09_{\pm 0.02}$ & \one{$2.98_{\pm 0.18}$} & \one{$2.46_{\pm 0.29}$} & $0.48_{\pm 0.07}$ & $0.84_{\pm 1.30}$ & $2.11_{\pm 1.30}$ & \two{$0.58_{\pm 0.03}$} & $1.45$ \\
    \midrule
    \method 
    & \one{$0.74_{\pm 0.13}$} & \one{$0.04_{\pm 0.01}$} & \two{$3.15_{\pm 0.07}$} & $2.79_{\pm 0.28}$ & \one{$0.27_{\pm 0.03}$} & \one{$0.33_{\pm 0.08}$} & \one{$0.51_{\pm 0.08}$} & \one{$0.27_{\pm 0.08}$} & \one{$1.01$} \\
    \bottomrule
    \end{tabular}}
    \vspace{-0.6cm}
\end{table*}

\subsection{Protein Dynamics}

\begin{table*}[h]
    \vspace{-0.2cm}
    \centering
    \caption{\label{tab:main_protein}\small \textbf{Evaluation on the Protein dataset.} FMSE (\(\times 10^{-2}\)) on the AdK equilibrium trajectory dataset. \one{First} and \two{Second} denotes best results. Lower value denotes better performance.}
    \vspace{-0.2cm}
    \begin{tabular}{lcccccccccc}
    \toprule
    \textbf{Dataset} & \textbf{Linear} & \textbf{MPNN} & \textbf{RF} & \textbf{EGNN} & \textbf{EGHN} & \textbf{EGNO} & \textbf{EGHNO} & \textbf{\method}  \\
    \midrule
    AdK   & 2.890 &  2.322 & 2.846 & 2.735 & 2.034 & 2.231 & \two{1.80} & \one{1.74} \\
    \bottomrule
    \end{tabular}
\end{table*}

\textbf{Experimental Setup.}
We further evaluate \method on the AdK equilibrium trajectory dataset~\citep{seyler5108170molecular}, accessed through the MDAnalysis toolkit~\citep{oliver_beckstein-proc-scipy-2016}. The dataset contains molecular dynamics trajectories of apo adenylate kinase. We follow the experimental protocol and data splits of \citet{han2022equivariant}. Unlike the previous tasks, this setting considers state-to-trajectory (\stot) prediction, in which the model predicts the next 4 frames from a single molecular state under uniform temporal discretization.

\textbf{Results.}
Table~\ref{tab:main_protein} reports FMSE, where lower values indicate better performance. \method achieves lower error than the baselines, reducing FMSE from $1.80$ to $1.74$ in the \stot setting. This indicates that \method can model equilibrium molecular dynamics and predict short future trajectories from a single observed state. The improvement is modest, as expected for AdK: the trajectory is closer to equilibrium and exhibits weaker regime changes than those of the other benchmarks. As a result, the advantage of adaptive temporal modeling is less pronounced. Nevertheless, \method remains competitive, demonstrating that the proposed latent graph-SSM formulation generalizes effectively even in near-equilibrium dynamics.

\subsection{Temporal Extrapolation}
A core motivation for \method is that a learned latent operator should remain stable when rolled out beyond the training horizon, rather than degrade as soon as it leaves the supervised range. We test this directly by training models on short horizons and asking them to predict farther into the future than they were ever trained.

\textbf{Experimental Setup.} 
We compare \method against a Compositional-Koopman baseline on three N-body simulations: \emph{Charged Particles}, \emph{Spring Dynamics}, and \emph{Gravity System}. Both models receive $T_{\mathrm{in}} = 10$ input frames. We consider two training regimes that differ only in the supervised prediction horizon: $T_{\mathrm{out}} = 10$ (top row) and $T_{\mathrm{out}} = 20$ (bottom row). At test time, each model is rolled out autoregressively for 30 future steps. A single forward pass produces $T_{\mathrm{out}}$ predictions; the most recent $T_{\mathrm{in}}$ of those are then re-fed as the new input window, and the procedure repeats until 30 frames are accumulated. For the Koopman baseline, this means the latent operator $\mathbf{A}$ is re-fit on each call from the model's own (increasingly extrapolated) predictions, while \method's parameters never change. We report per-step MSE on $600$ held-out trajectories. The shaded green region marks the trained horizon, the shaded orange region marks the pure-extrapolation regime, and the dashed vertical line indicates the boundary.

\textbf{Results.} 
Within the trained horizon, both models degrade smoothly, but at the boundary, the Koopman baseline collapses sharply while \method continues to extrapolate gracefully. The gap widens monotonically as the rollout enters the extrapolation region, with \method outperforming Koopman by orders of magnitude across all datasets and training regimes. Doubling the supervised horizon reduces the extrapolation error for both methods, but \method retains a substantially lower error throughout, indicating that its learned state-space update generalizes beyond the supervised horizon far more reliably than a linear Koopman operator fit on a short window.

\begin{figure}[h]
    \centering
    \includegraphics[width=0.8\textwidth]{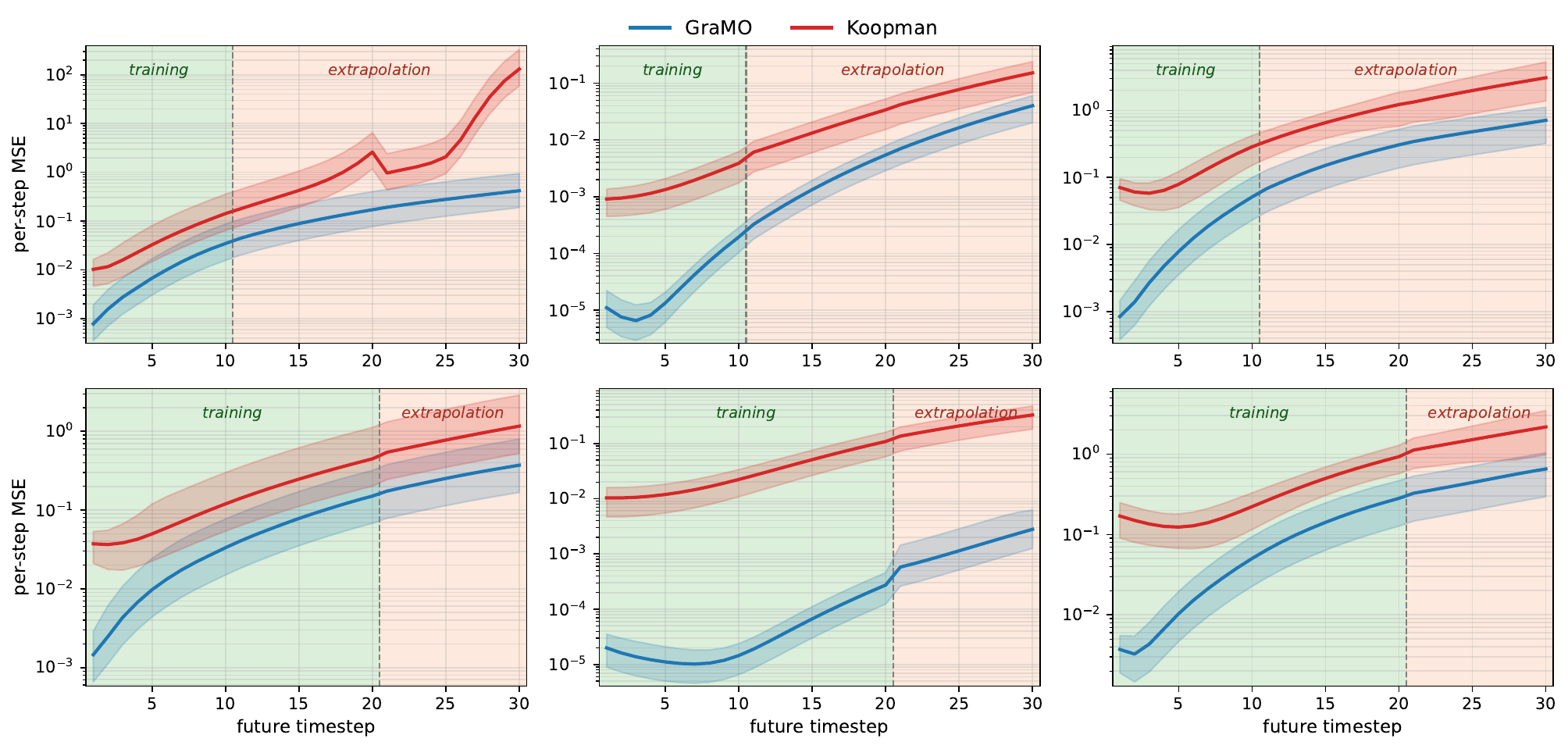}
    \caption{\label{fig:extrapolation} Visualization of Per-step trajectory error as a function of prediction horizon for \emph{Charged Particles} (left), \emph{Spring Dynamics} (middle), and \emph{Gravity System} (right). 
    The (\textbf{Top Row}) shows models trained on 10 timesteps and rolled out to 30; the (\textbf{Bottom Row}) shows models trained on 20 timesteps and rolled out to 30. 
    The \textcolor{green!60!black}{shaded green} region marks the training horizon, the \textcolor{orange!80!black}{shaded orange} region marks the pure extrapolation regime.}
\end{figure}

\subsection{Qualitative Visualizations}
In this subsection, we present visualizations of the dynamics predicted by \method. Results for particle simulations, \textsc{Mocap-Walk}, and \textsc{Mocap-Run} are shown in Figures~\ref{fig:mocap_walk_visualization}--\ref{fig:mocap_run_visualization}. As illustrated, \method not only produces accurate final-state predictions but also effectively captures the underlying dynamics by explicitly modeling both spatial and temporal correlations. Furthermore, Figure~\ref{fig:protein_visualization} provides a qualitative visualization of the trajectories predicted by \method. The figure illustrates how the model accurately reproduces the system's temporal evolution, closely aligning with the ground-truth dynamics and capturing both local and global structural changes. 

\begin{figure}[h]
    \centering
    \includegraphics[width=0.6\textwidth]{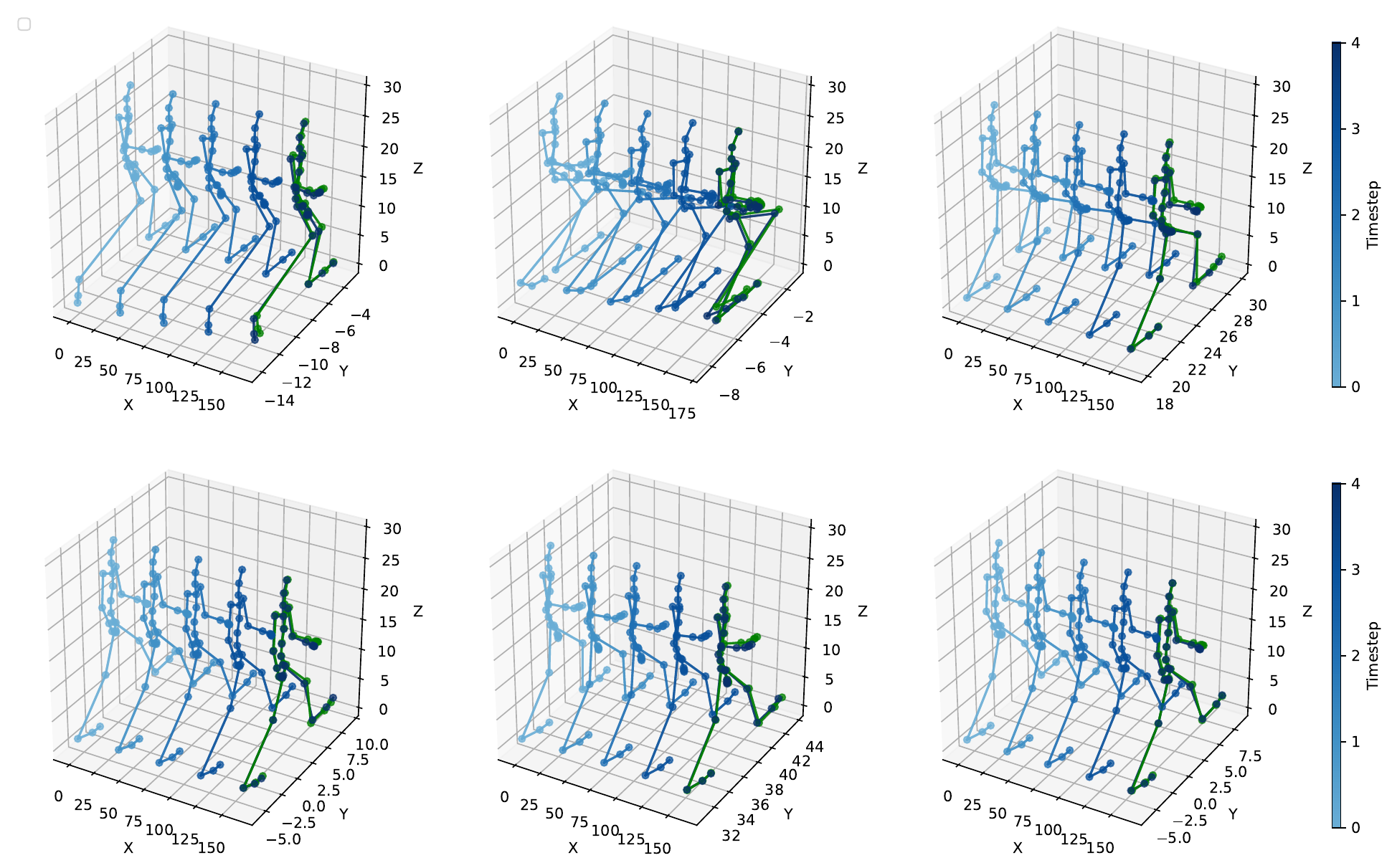}
    \caption{\label{fig:mocap_walk_visualization} Visualization of trajectories generated by \method with uniform discretization on Mocap (Walk) dataset. Predicted trajectories are shown with timestep progression indicated by a \two{Blue} color gradient, while the ground truth final snapshot is marked in \one{Green}.}
\end{figure}

\begin{figure}[h]
    \centering
    \includegraphics[width=0.6\textwidth]{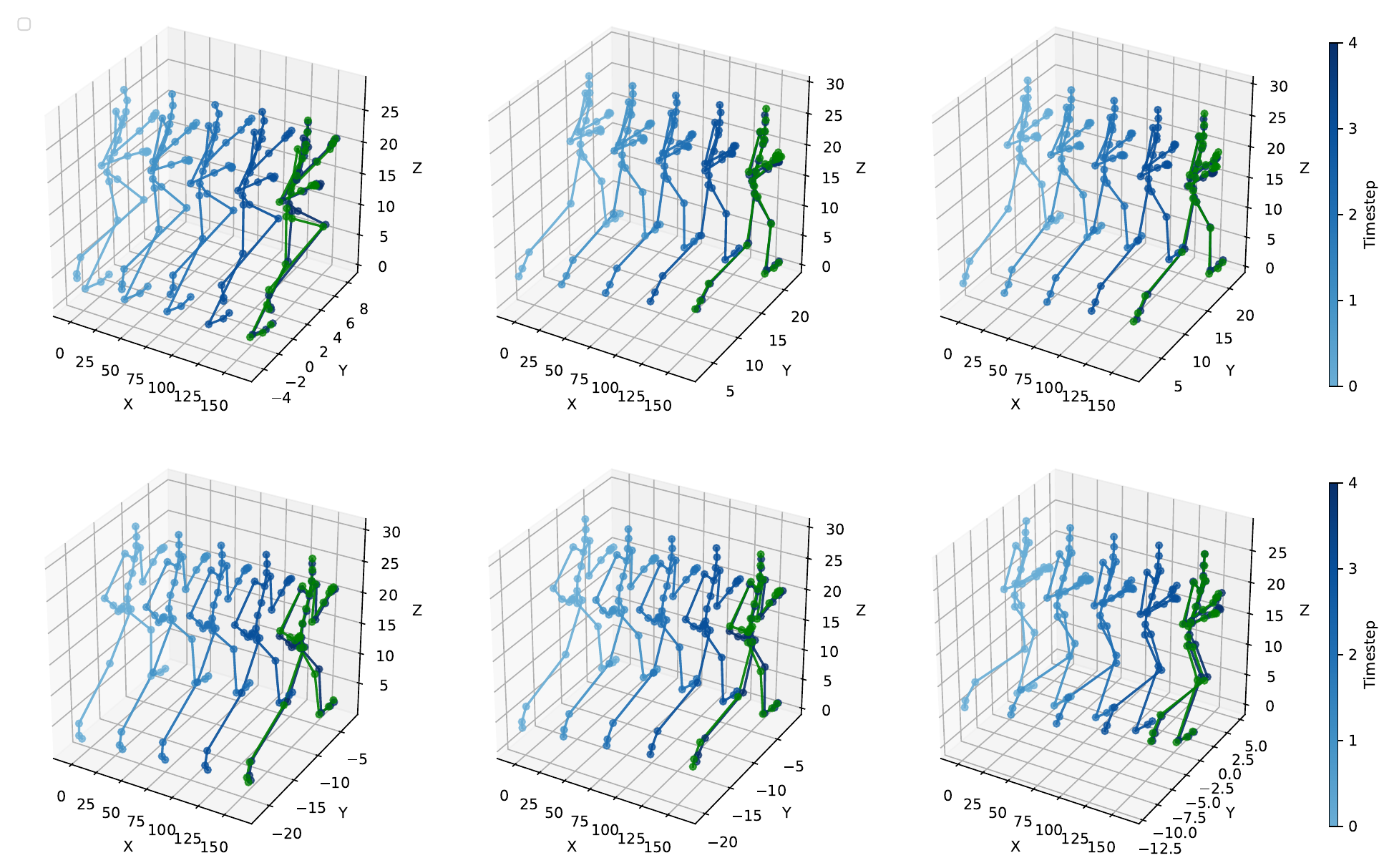}
    \caption{\label{fig:mocap_run_visualization} Visualization of trajectories generated by \method with uniform discretization on Mocap (Run) dataset. Predicted trajectories are shown with timestep progression indicated by a \two{Blue} color gradient, while the ground truth final snapshot is marked in \one{Green}.}
\end{figure}

\begin{figure}[h] 
    \centering
    \includegraphics[width=0.7\textwidth]{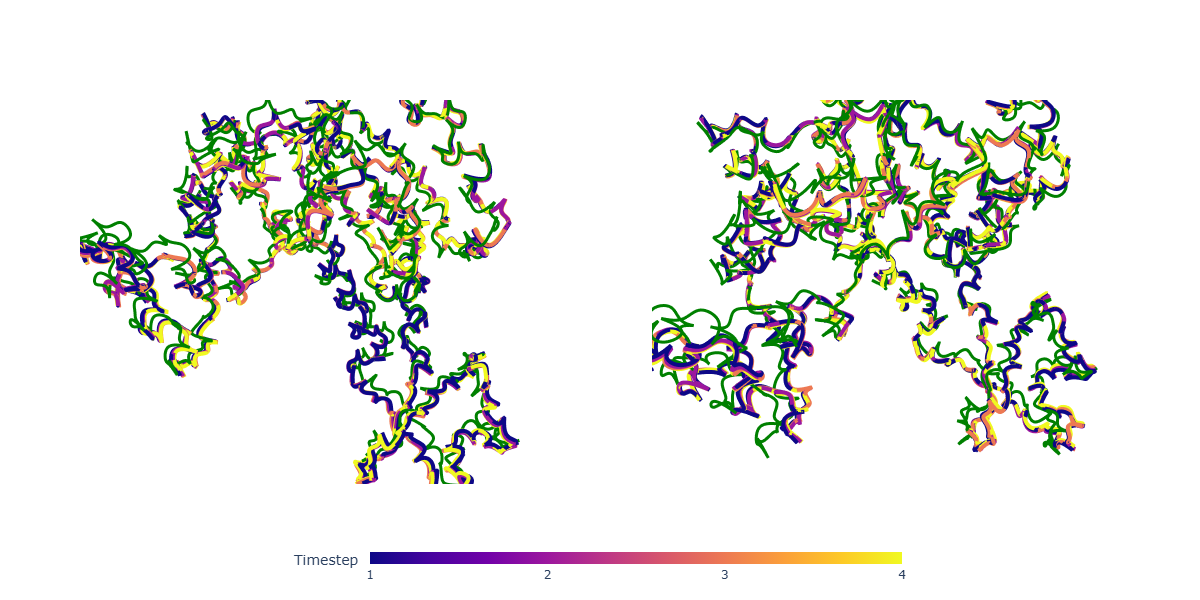}
    \caption{\label{fig:protein_visualization} Visualization of trajectories generated by \method with uniform discretization on AdK equilibrium trajectory dataset. Predicted trajectories are shown with timestep progression indicated by the colorbar above, while the ground-truth final snapshot is marked in \one{Green}.}
\end{figure}

\section{Limitations}
While \method demonstrates strong performance across a range of particle-based dynamical systems, several limitations remain. Although \method is designed to handle time-varying dynamics, its benefits may be less pronounced for systems that are close to stationary, where simpler models may suffice. In addition, the computational cost of \method increases with graph size due to repeated graph-coupled state updates, potentially limiting scalability for very large systems. Exploring graph sparsification and subgraph sampling strategies is a promising direction for improving scalability while preserving performance.

\section{Broader Impact}
This work introduces a graph-based neural framework for modeling interacting particle systems. By combining state-space modeling with graph-based message passing, the proposed approach aims to capture long-range interactions and time-varying dynamics more effectively than standard surrogate models. Improved learning-based simulators may support scientific and engineering workflows across physical simulation, robotics, materials modeling, and computational science.

The contributions of this work are primarily technical and methodological, focusing on fundamental challenges in learning dynamical systems on graphs. We do not anticipate direct negative ethical or societal impacts from this research. As with other general-purpose simulation tools, potential downstream effects depend on the application context, and we encourage responsible use consistent with established scientific and ethical guidelines.

\section{Use of Large Language Models (LLMs)}
Large language models (LLMs) were used as assistive tools to improve the clarity, grammar, and presentation of the manuscript. Specifically, LLMs were utilized for grammatical correction, rephrasing to enhance clarity, and refining the overall structure of the text. All LLM-assisted content was reviewed and edited by the authors to ensure it accurately reflects the authors’ intent and preserves the scientific content.



\end{document}